\newcommand{\orcidGP}	{\href{https://orcid.org/0000-0001-9394-6513}{\protect\includegraphics[scale=0.045]{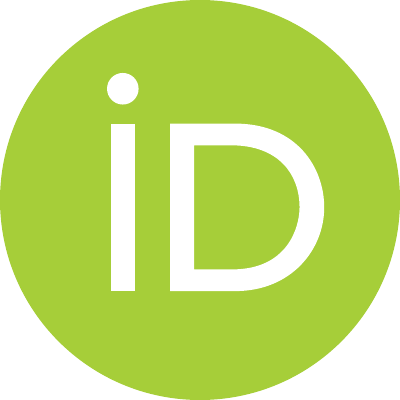}}}
\newcommand{\orcidSRM}	{\href{https://orcid.org/0000-0001-5656-6108}{\protect\includegraphics[scale=0.045]{orcid}}}
\newcommand{\orcidJB}	{\href{https://orcid.org/0000-0002-3979-400X}{\protect\includegraphics[scale=0.045]{orcid}}}
\newcommand{\cmark}{\ding{51}}%
\newcommand{\xmark}{\ding{55}}
\newcommand{\pr}{^{\prime}}
\newcommand{\ea}{\operatorname{ea}}
\newcommand{\ma}{\operatorname{ma}}
\newcommand{\disc}{\operatorname{pd}}
\newcommand{\A}{\mathcal{A}}
\newcommand{\E}{\mathcal{E}}
\newcommand{\agg}{\operatorname{agg}}
\newcommand{\Lm}{\mathcal{L}_m}
\newcommand{\gen}{\operatorname{ntl}}
\newcommand{\NL}{\mathcal{M}}
\newcommand{\la}{\langle}
\newcommand{\ra}{\rangle}
\newcommand{\wtop}{w_{min\,max}}
\newcommand{\wm}{w_{max}}
\newcommand{\sea}{\ea_{\ma}}
\newcommand{\norm}[1]{\| #1 \|}
\newcommand{\AL}{A_L}
\newcommand{\AM}{A_M}
\newcommand{\Aa}{A_a}
\newcommand{\bag}{\mathbb{B}}
\newcommand{\Ltr}{L_{tmp}/\!\sim}
\newcommand{\Lar}{L_{a}/\!\sim}
\newcommand{\Ltc}{L_{tmp}^{class}}
\newcommand{\Lac}{L_{a}^{class}}
\newcommand{\mnomc}[2]{\genfrac{(}{)}{0pt}{2}{#2}{#1}}
\newcommand{\tloop}{\:\circlearrowleft\hspace{-0.1em}}
\newcommand{\dfc}{\sim_{df}}
\DeclareRobustCommand{\circled}[1]{\tikz[baseline=(char.base)]{\node[shape=circle,draw,inner sep=0pt,minimum size=3.2mm] (char) {#1};}}
\def\namedlabel#1#2{\begingroup
    #2%
    \def\@currentlabel{#2}
    \phantomsection\label{#1}\endgroup
}
\def\namedlabel#1#2{\begingroup
    #2%
    \def\@currentlabel{#2}
    \phantomsection\label{#1}\endgroup
}
\definecolor{darkblue}{rgb}{.102,0.373,.706}
\newcommand{\janik}[1]{\textcolor{black}{#1}}
\newcommand{\rev}[1]{\textcolor{black}{#1}}
\newcommand{\ext}[1]{\textcolor{black}{#1}}
\newcommand{\srm}[1]{\textcolor{black}{#1}}
\begin{document}




\title{Synchronizing Process Model and Event Abstraction for Grounded Process Intelligence (Extended Version)}
\author{Janik-Vasily Benzin\inst{1}\orcidJB \and Gyunam Park\inst{2}\orcidGP \and Stefanie Rinderle-Ma\inst{1}\orcidSRM}

\titlerunning{Synchronizing Process Model and Event Abstraction}
\authorrunning{J.-V. Benzin et al.}
\institute{Technical University of Munich, TUM School of Computation, Information and Technology, Garching, Germany 
\email{\{janik.benzin,stefanie.rinderle-ma\}@tum.de} \and
Process~Mining~Group,~Fraunhofer~FIT,~Aachen,~Germany
\email{gyunam.park@fit.fraunhofer.de}}
\maketitle


\begin{abstract}
    
\srm{Model abstraction (MA) and event abstraction (EA) are means to reduce complexity of (discovered) models and event data. Imagine a process intelligence project that aims to analyze a model discovered from event data which is further abstracted, possibly multiple times, to reach optimality goals, e.g., reducing model size. So far, after discovering the model, there is no technique that enables the synchronized abstraction of the underlying event log. This results in loosing the grounding in the real-world behavior contained in the log and, in turn, restricts  analysis insights.  Hence, in this work, we provide the formal basis for synchronized model and event abstraction, i.e., we prove that abstracting a process model by MA and discovering a process model from an abstracted event log yields an equivalent process model. 
We prove the feasibility of our approach 
based on behavioral profile abstraction as non-order preserving MA technique, resulting in a novel EA technique.}

\keywords{Event Abstraction \and Model Abstraction \and Complexity \and Synchronization}

\end{abstract}









\section{Introduction}
\label{sec:intro}
Discovering a process model $M$ from an event log $L$ is a key step in analyzing the actual process behavior recorded by information systems \cite{leemans_robust_2022}. 
However, events are often logged at a low granularity level, leading to the discovery of complex and uninterpretable process models that do not match stakeholders' expectations. 
\emph{Event abstraction} (EA) techniques \cite{lim_framework_2024,van_zelst_event_2021} have been proposed to address this challenge by lifting the granularity of events \rev{to satisfy an \emph{abstraction goal} that formalizes stakeholder's expectations}.
While empirical studies have evaluated whether EA techniques can satisfy abstraction goals \rev{like model complexity reduction} \cite{van_houdt_empirical_2024}, existing EA techniques \rev{neither} provide formal guarantees on the reduction of model complexity \rev{nor on satisfying any other goal}. 
\rev{Hence, current EA techniques cannot be applied to find the optimal abstraction. 
Without optimality, downstream process  intelligence tasks such as process enhancement \cite{leemans_robust_2022}, business process simulation (BPS) \cite{lopez-pintado_discovery_2024}, and predictive process monitoring (PPM) \cite{ye_log_2025} are uncertain to meet the stakeholder's expectations.}  

\emph{Model abstraction} (MA) techniques \cite{smirnov_businessm_2012,mafazi_consistent_2015,senderovich_aggregate_2018}, by contrast, ensure \rev{satisfying an abstraction goal such as reducing model complexity through solving an optimization problem \cite{mafazi_consistent_2015,senderovich_aggregate_2018}. Specifically, the optimization objective is the abstraction goal and decision variables are the application sequence of abstraction operations and their parameters. } \srm{Hence, the question  arises \textbf{how the optimality guarantees of MA can be utilized for EA and, in turn, the knowledge of real-world process behavior stored in the logs can be exploited for process intelligence tasks on abstracted models, i.e., how to synchronize MA and EA (RQ). }}

\srm{For illustrating \textbf{RQ} and its effects on process intelligence, consider the scenario depicted in \autoref{fig:illustrative}: A bank runs trading processes for i) derivative and ii) fixed income products logged by two information systems. The bank wants to understand the common business process underlying the trading of i) and ii) through simulation and prediction tasks.}

\srm{The bank starts with discovering a process model $M$ (cf. \autoref{fig:illustrative}\circled{3}) from log $L$ merged from the two information systems (cf. \autoref{fig:illustrative}\circled{1}), i.e., $L\xrightarrow{pd}M$ using data-aware discovery, e.g., \cite{lopez-pintado_discovery_2024} after \cite{leemans_tech_2013}, as in this case also data objects are of interest. Assume now that in order to reduce the obvious complexity of $M$, behavioral profile abstraction (BPA) $ma_{bpa}$ \cite{smirnov_businessm_2012} abstracts $M$ into $M_a$ (cf. \autoref{fig:illustrative} \circled{5}) and subsequently, in order to generalize data objects to streamline the underlying processes, abstracts $M_a$ into $M_{aa}$ (cf. \autoref{fig:illustrative} \circled{7}). While $M_a$ and $M_{aa}$ fulfill the optimization goals, they have lost their grounding in an event log. Thus, the bank cannot proceed due to missing abstracted event logs $L_{a}$ (cf. \autoref{fig:illustrative} \circled{4}) and $L_{aa}$ (cf. \autoref{fig:illustrative} \circled{6}) that match the granularity of $M_a$ and $M_{aa}$ respectively and contain the actual observed behavior with rich event attributes. }

\begin{figure}[htb!]
    \centering
    \includegraphics[width=\linewidth]{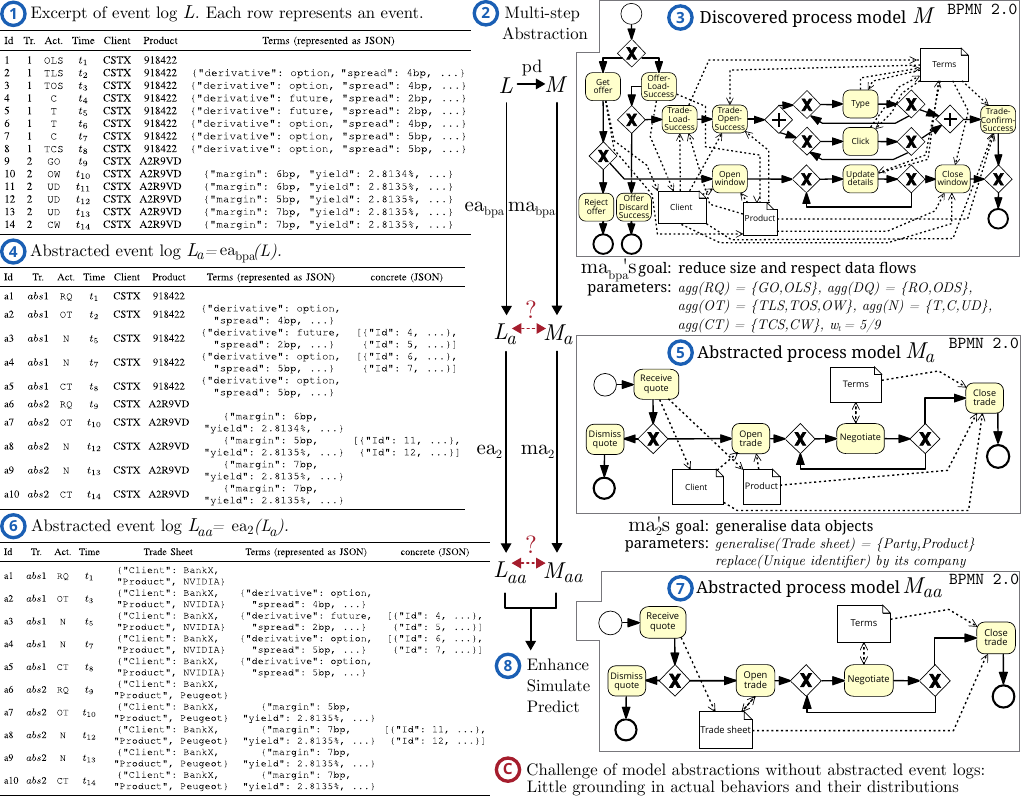}
   \caption{Process Intelligence in the Financial Domain (Example)}
    \label{fig:illustrative} 
\end{figure}

\srm{One solution would be to generate logs by playing out the abstracted models. However, this cannot reproduce the actual behavior stored in the log w.r.t., e.g., number of traces, frequency of paths, or data values. To overcome this information loss and to maintain flexibility of process intelligence tasks, we propose synchronizing MA with EA techniques to mirror the abstraction applied to $M$ on $L$. The core idea of synchronization is to prove that abstracting a model via $\operatorname{ma}$ and discovering a model from an log abstracted by $\operatorname{ea}$ yields equivalent models $M_a$ (annotated by question mark in \autoref{fig:illustrative}).}

 \srm{MA techniques can be distinguished into order-preserving and non-order-preserving techniques. In this work, we opt to study non-order-preserving techniques such as \cite{smirnov_businessm_2012} as they pose the more general, harder problem.} \rev{Moreover, MA techniques like \cite{smirnov_businessm_2012} enable both, unrestricted abstraction of control flow with data flow abstraction by clustering activities according to similar data flow or semantical control flow abstraction by clustering activities that are semantically-related according to domain-specific \emph{part-of} relations between activities \cite{smirnov_meronymy-based_2010}.} \srm{Overall, the contributions of this work include conditions under which MA and EA can be synchronized for non-order-preserving MA and an EA technique that maintains observed distributions in the abstracted log. As these contributions are conceptual and formal, the evaluation is the formal synchronization proof.}

Section \ref{sec:back} introduces background and related work. 
\autoref{sec:framework} establishes the theoretical foundation for synchronization by formalizing the problem and presenting our general approach. 
\autoref{sec:method} presents our concrete BPA-based synchronization method, including the adaptation of the BPA technique and the design of the corresponding synchronized EA technique.
\autoref{sec:theory} lays the theoretical foundations needed for proving synchronization correctness, and 
\autoref{sec:proof} proves synchronization correctness to establish the \rev{equivalence of MA's and EA's resulting process model.} \autoref{sec:illustrative} demonstrates the impact of synchronization on the illustrative example from the financial domain.
\autoref{sec:conclusion} concludes the paper.

\section{Background and Related Work}
\label{sec:back}

\srm{\textbf{Model representation, abstraction, and existing techniques:} As logical representation of process models,} we opt for process trees due to their \emph{block-structuredness}, often advocated in MA literature \cite{polyvyanyy_application_2009,smirnov_businessm_2012,mafazi_consistent_2015} and due to its favorable properties like \emph{soundness} \cite{leemans_tech_2013}. Let $\A$ be the
set of all possible activity names and the silent activity $\tau \not\in\A$. Then process trees are recursively defined by
\begin{itemize}
    \item $M = v$ for $v\in\A$ or $M = \tau$ are process trees (referred to as leaves), and
    \item $M = \oplus(M_1, \ldots, M_n)$ with $n$ process trees $M_1,\ldots, M_n$ and operators $\oplus \in \{\times,\rightarrow,\wedge, \circlearrowleft\}$ is a process tree referred to as a $\oplus$-node.
\end{itemize}

Let $\NL$ be the set of all models. Then, a \textbf{model abstraction (MA)} is a partial function $\ma: \NL \not\rightarrow \NL $ that maps a process model $M$ to another (abstracted) process model $M_a$ where it is assumed that the complexity of $M_a$ is reduced compared to $M$.
We call $\ma$ is \emph{applicable} to $M$ when $M \in dom(\ma)$.
\rev{The most common} complexity metric used in MA techniques is the size of the process model \cite{smirnov_business_2012,mafazi_consistent_2015} (cf. \autoref{tab:bpm_comparison}). \srm{In \autoref{fig:illustrative}, size $|M|=28<|M_{aa|}=12$ if we count all nodes in the models.} For process trees, the size of $|M|$ is defined as the sum of $\oplus$-nodes and leaves. We refer to the activities in a process tree $M$ with $A_M$, e.g., \rev{$A_{M_{aa}} = \{\texttt{RQ}, \texttt{OT}, \texttt{N}, \texttt{CT}\}$ in \autoref{fig:illustrative} \circled{7}}.

\ext{In \autoref{tab:bpm_comparison}, we report the 19 MA techniques that are not covered by the MA survey \cite{smirnov_businessm_2012} from 2012. In addition to the year of publication, we report the following nine properties for each technique. The \emph{abstraction goal} (goal in \autoref{tab:bpm_comparison}) specifies the target of the MA technique. The type specifies whether the abstraction operators are only \emph{order-preserving} (OP), only \emph{non-order-preserving} (NOP), or both. The \emph{model} reports the modeling language in which process models are represented. The \emph{process perspective} (Persp.) reports what perspective is represented in the model and, subsequently, abstracted by the technique. For the perspective, we abbreviate the \emph{control-flow} persp. by \texttt{C}, the \emph{data} persp. by \texttt{D}, and the \emph{organizational} persp. by \texttt{O}.}

\begin{table}
    \centering
    \resizebox{\linewidth}{!}{\begin{tabular}{lccccccccccc}
    \toprule
        Ref. & Year & Goal & Type & Model & Persp. & Op. & Abs. Obj. & Optim. & $\disc$ & $\ea$  \\
        \midrule
        \cite{ordoni_reduction_2023} & 2023 & Efficient verification  & OP & BPMN & \texttt{C,D} & \texttt{E} & SESE, data objects & 
\cmark & \xmark & \xmark\\
        \cite{volker_ontology-based_2023} & 2023 & Quick overview  & OP & BPMN & \texttt{C} & \texttt{A,E} & SESE & \xmark & \xmark & \xmark\\
         \makecell[l]{\cite{tsagkani_process_2022,tsalgatidou_rule-based_2016}\\\cite{tsagkani_abstracting_2015}} & 2022 & Quick overview & OP & BPMN Collab. & \texttt{C,D,O} & \texttt{A,E,G} & \makecell[c]{SESE, data objects,\\ lanes, messages} & \xmark & \xmark & \xmark \\
        \cite{angelastro_process_2020} & 2020 & Quick overview  & OP & WoMan \cite{ferilli_woman_2014} & \texttt{C} & \texttt{A} & SESE & \xmark & WIND \cite{ferilli_woman_2014} & \xmark \\
        \cite{wang_business_2019} & 2019 & Quick overview & OP & BPMN & \texttt{C} & \texttt{A} & SESE & \xmark & \xmark & \xmark \\
        \cite{perez-castillo_business_2019} & 2019 & Quick overview  & OP & BPMN & \texttt{C,D} & \texttt{A,G} & SESE, data objects &  \cmark & MARBLE \cite{perez-castillo_marble_2011} & \xmark  \\
        \cite{wang_business_2018} & 2018 & Quick overview & OP & BPMN & \texttt{C} & \texttt{A}  & SESE & \xmark & \xmark & \xmark \\
        \cite{senderovich_aggregate_2018,senderovich_p3-folder_2016} & 2018 & Efficient prediction & OP & GSPN & \texttt{C,P} & \texttt{A} & SESE & \cmark & IM \cite{leemans_discovering_2013} & \xmark \\
        \cite{de_san_pedro_log-based_2015} & 2015 & \makecell[c]{Quick overview,\\balanced discovery}  & NOP & PN & \texttt{C} & \texttt{E} & $P,F$ & \xmark & ILP \cite{van_der_werf_process_2008} & \xmark \\
        \cite{gaaloul_pql_2015} & 2015 & Configurable & (N)OP & BPMN & \texttt{C} & \texttt{A,E} & SESE, Flows & \xmark  & \xmark & \xmark \\
        \cite{mafazi_consistent_2015,mafazi_knowledge-based_2012} & 2015 & Configurable & OP & BPMN & \texttt{C,D,O} & \texttt{A,E,G} & \makecell{SESE, data objects,\\ resources} & \cmark & \xmark & \xmark  \\
        \cite{kopke_projections_2014} & 2014 & Privacy concerns  & OP & PT & \texttt{C} & \texttt{E} & SESE & \xmark & \xmark & \xmark \\
        \cite{fahland_simplifying_2013,fahland_simplifying_2011} & 2013 & Balanced discovery & OP & PN & \texttt{C} & \texttt{E} & $P,F$ & \xmark & All & \xmark \\
         \makecell[l]{\cite{kolb_flexible_2013,reichert_visualizing_2013}\\\cite{reichert_enabling_2012}} & 2013 & Custom views  & (N)OP & BPMN & \texttt{C,D,O} & \texttt{A,E,G} & \makecell{SESE, data objects,\\ activities, resources} & \xmark & \xmark & \xmark \\
        \cite{kolb_data_2013,kolb_updatable_2012} & 2013 & Change propagation  & (N)OP & BPMN &  \texttt{C,D,O} & \texttt{A,E,G} & \makecell{SESE, data objects,\\ activities, resources} & \xmark & \xmark & \xmark \\
        \makecell[l]{\cite{smirnov_businessm_2012,smirnov_fine-grained_2012}\\\cite{smirnov_business_2010,smirnov_meronymy-based_2010}}  & 2012 & Large repository  & NOP & BPMN & \texttt{C,D} & \texttt{A} & Activities & \xmark & \xmark & \xmark \\
        \cite{meyer_data_2012} & 2012 & Quick overview & OP & BPMN & \texttt{C,D} & \texttt{A} & SESE, data object & \xmark & \xmark & \xmark \\
        \cite{weber_refactoring_2011} & 2011 & \makecell{Large repository,\\quick overview}  & (N)OP & BPMN &  \texttt{C} & \texttt{A,E} & SESE, flows & \xmark & \xmark & \xmark \\
        \cite{xue_plain_2011} & 2011 & Quick overview & OP & CCS & \texttt{C} & \texttt{A,E} & SESE & \xmark & \xmark  & \xmark  \\
        \bottomrule
    \end{tabular}}
    \caption{\ext{Model Abstraction Techniques}}
    \label{tab:bpm_comparison}
\end{table}

\ext{Next, the abstraction \emph{operators} (Op.) show whether the technique applies \emph{elimination} (E), \emph{aggregation} (A), \emph{generalisation} (G), or a combination of the three. The \emph{abstraction objects} (Abs. Obj.) report what model elements in the domain of the abstraction operators, i.e., what model elements can be deleted, aggregated, or generalised. Obviously, the serialization of arbitrary BPMN process models into their \emph{refined process tree structure} \cite{vanhatalo_refined_2009} with single-entry single-exit (SESE) process fragments is prevalent among MA techniques.}

\ext{For the last three properties, we report whether the MA technique has the property or not. A MA technique applies optimization (Optim.) iff the abstraction goal is translated into an optimization objective and a solver is proposed that finds an optimal operator sequence and each operator's respective parameters that must be applied on a model to satisfy the abstraction goal. A MA techniques is formulated on discovered process models ($\disc$) iff it takes an event log $L$ as input, discovers a process model $M$ through process discovery technique $\disc$, and then abstracts $M$. Additionally, we report the process discovery technique that is considered in the MA technique. Lastly, a MA technique synchronizes its abstraction operators ($\operatorname{ea}$) iff for each operator a corresponding EA technique is defined that also abstracts the event log.}

\ext{Roughly half of MA techniques focus solely on the ``quick overview'' abstraction goal, i.e., aim to reduce the model size. Further selected goals in descending order are as follows. The goal ``large repository'' (2 times) is at par with ''configurable'' and ``balanced discovery''. While the former aims to manage a large repository of process models by only storing the fine-grained models and generating the coarse-grained models through MA, the latter two depend on the user through parametrization and the process discovery quality dimensions respectively. For example, \cite{gaaloul_pql_2015} proposes a process querying language that is capable of abstracting behavior before returning the result. The remaining five abstraction goals are each only once targeted. Interestingly, only two MA techniques target a goal, ``efficient verification'' and ``efficient prediction'', that considers process intelligence tasks beyond understanding and visualisation.}

\ext{The majority of 19 techniques propose order-preserving abstraction operators with 6/19 MA techniques considering operators that are non-order-preserving. MA techniques are commonly proposed for BPMN models with 7 exceptions: One technique is proposed for the declarative model language of the workflow management (WoMan) framework \cite{ferilli_woman_2014}, one technique is proposed for generalised stochastic Petri nets (GSPN), two techniques are proposed for Petri nets (PN), and one technique each for process trees (PT) and models in the \emph{calculus for communication systems} \cite{milner2009space}. Clearly, the control-flow perspective is always represented and the main target of any abstraction operator (cf. abstraction objects). 15/19 MA techniques}

In the following, we discuss the five MA techniques that consider an event log $L$ in more detail.
MA technique \cite{fahland_simplifying_2013} abstracts the discovered process model $M$ by filtering arcs in the \emph{unfolding} of $M$ and by applying three structural simplifications on the refolded $M$. 
Likewise, MA technique \cite{de_san_pedro_log-based_2015} abstract the discovered process model $M$ by either projecting $M$ into less complex model classes like \emph{series-parallel} Petri nets or by removing infrequently-enabled arcs detected through replay from $M$.
In both techniques, the mapping between activities in the model and events in the event log does not change, because both techniques neither abstract activities, nor events.
 MA technique \cite{senderovich_aggregate_2018} optimizes the application sequence of five order-preserving abstraction operators towards reducing the model size while controlling the information loss for efficiently predicting process performance. 
MA technique \cite{perez-castillo_business_2019} applies a sequence of abstraction operators on a discovered process model $M$; MA technique \cite{angelastro_process_2020} applies pattern mining to find order-preserving clusters of activities in $M$ to be abstracted. All three techniques \cite{senderovich_aggregate_2018,perez-castillo_business_2019,angelastro_process_2020} face the challenges of MA without abstracted event logs (cf. \autoref{sec:intro}): little grounding in actual behaviors and lack of flexibility for applying downstream process intelligence tasks. Besides, none of the three MA techniques considers non-order-preserving abstraction.

\textbf{Log representation and abstraction:} An event log $L$ is a multiset of traces, i.e., $L = [\sigma_{1},\ldots] \in \bag(\A^*)$. We denote the set of activities that occur in an event log with $\AL$ and write $e \in \sigma$ \rev{iff $e$ occurs in $\sigma$, i.e., $\exists i \in \{1, \ldots , |\sigma|\}: \sigma[i] = e$ with $\sigma[i]$ retrieving the $i$th event.} There are $10$ distinct activities (Act.) in $\AL$ for $L$ in \autoref{fig:illustrative} \circled{1}. \rev{We omit further rich event attributes like the ``Terms'' in $L$ in our trace conceptualization, because they are irrelevant for the proof of synchronization.}
\textbf{Event abstraction (EA)} is 
defined as a partial function $ \ea: \bag(\A^*) \not\rightarrow \bag(\A^*)$ such that the number of traces and events do not increase, i.e., $ |\ea(L)| \leq |L|$ and $\norm{\ea(L)} \leq \norm{L}$ with $\norm{L}=\sum_{\sigma\in L} |\sigma|$ \cite{van_zelst_event_2021}. 

\rev{We refer to \cite{van_zelst_event_2021,lim_framework_2024} for a detailed discussion and taxonomy on EA techniques and to \cite{van_houdt_empirical_2024} for an empirical evaluation of EA techniques published through 2024. EA technique \cite{ye_log_2025} aim to improve the accuracy of remaining time predictions that are learned from low-level event logs by applying three EA operators on the log. The three EA operators are designed to mirror the three MA operators sequence, self-loop, and choice. Yet, no theory is developed to prove the correctness of the designed EA operators. Moreover, the proposal is specific to remaining time prediction and does not show how to extend the EA technique with additional operators.}

\textbf{Process tree discovery and formal definitions:} The semantics of a process tree $\mathcal{L}(M)$ is the language represented by $M$ \cite{leemans_tech_2013}.
Given an event log $L$, a process discovery technique $\disc$ discovers a process tree $M$ that represents $L$, i.e., $\disc: \operatorname{B}(\A^*)  \rightarrow \NL$
with $\NL$ the set of all process trees.
For example, a process discovery technique, IM~\cite{leemans_robust_2022}, leverages the \emph{directly-follows graph} (DFG) $G(L) = (\AL \cup\{ \vartriangleright,\vartriangleleft\}, \mapsto_L)$ with $\vartriangleright, \vartriangleleft \:\not\in \AL$\footnote{$\vartriangleright$ is used to denote the start activity $\sigma = \la v,\ldots\ra$ as a directly-follows pair $(\vartriangleright, v) \in \mapsto_L$ and $\vartriangleleft$ analogously for the end activity of a trace.} and $\mapsto_L$ the \emph{directly-follows} relation to discover $M$. 
An event log  $L$  and a process tree  $M$  can be related based on the notion of \textsl{directly-follows completeness} \cite{leemans_robust_2022}, i.e., $L$ and $M$ are directly-follows complete (df-complete), denoted by $L \dfc M$, 
    iff the DFGs $G(L)$ and $G(M) = G(\mathcal{L}(M))$ are equal: $G(L) = G(M)$.
Df-completeness captures the behavior in $L$ and $M$ as equivalent to the abstract representation of a DFG. 
Df-completeness is a condition for the Inductive Miner (IM) to \emph{rediscover} a process tree $M$ from $L$ that is \emph{isomorphic} to $M\pr$ that was executed for recording the event log $L$ and, as such, is integral to the EA techniques presented in \autoref{sec:sea} and \autoref{sec:proof}. 

Two process trees $M_1, M_2$ are isomorphic, formally $M_1  \cong M_2$, iff they are syntactically equivalent up to reordering of children for $\wedge $- and $\times$-nodes and the non-first children of $\circlearrowleft\,$-nodes. 
A process tree $M$ is \textsl{isomorphic rediscoverable} by $\disc$ from event log $L$ with $L \subseteq \mathcal{L}(M)$ iff $\disc$ discovers a process tree $ M\pr = \disc(L)$ that is isomorphic to $M$ \cite{leemans_tech_2013}. Isomorphic rediscoverability has been proven for the IM 
through assuming a restriction $Q(M)$ that must hold for process tree $M$ and a restriction $R(L,M)$ that must hold for $L$ and $M$.
$Q(M)$ requires a process tree without silent activities $\tau$, \emph{duplicate} activities, and joint start and end activities of a $\circlearrowleft\,$-nodes first child and $R(L,M)$ requires df-completeness \cite{leemans_tech_2013}.  


\section{Synchronization Framework}\label{sec:framework}

In this section, we establish the theoretical foundation for synchronizing MA and EA techniques. We first formalize the synchronization problem and requirements, then present our general synchronization approach.





\noindent \textbf{Synchronization problem and requirements.} For synchronization to be possible, we must be able to design an EA technique $\sea$ that transforms the event log such that process discovery yields an abstracted model isomorphic to what we would obtain through direct model abstraction.

\begin{definition}[Synchronizability]
\label{def:sync}
Let $L$ be an event log and $M = \disc(L)$ the discovered process tree such that model abstraction $\ma$ is applicable to $M$, i.e., $M_a = \ma(M)$. $L$, $\ma$, and $\disc$ are synchronizable iff there exists event abstraction $\sea$ \rev{s.t.} $\disc(\sea(L)) \cong M_a$.
\end{definition}


\noindent \textbf{Synchronization approach.} We follow a two-step approach for synchronizing MA and EA: first discovering a complex process tree $M = \disc(L)$ followed by applying $\ma$ to yield $M_a = \ma(M)$, and abstracting $L_a = \sea(L)$ followed by discovering $M_a\pr = \disc(L_a)$ should result in isomorphic abstract process trees, i.e., $M_a \cong M_a\pr$. 
Therefore, our approach requires two key components:
\begin{itemize}
    \item MA Technique Adaptation: Not all MA techniques are immediately suitable for synchronization. 
    We need to ensure the MA technique is well-defined and provides sufficient structure for designing a corresponding EA technique.
    \item Synchronized EA Design: The EA technique must be designed to mirror the abstractions applied by the MA technique while preserving the behavioral relationships needed for correct process discovery.
\end{itemize}

We demonstrate this approach using the behavioral profile abstraction (BPA) technique \cite{smirnov_businessm_2012} for the following reasons. First, synchronizing BPA constitutes a significant challenge because it allows abstracting arbitrary sets of activities \rev{to allow abstractions wrt. the data flow (cf. \autoref{fig:illustrative}).} Hence, the corresponding EA technique must abstract the event log while guaranteeing the correct order of activities in the abstracted event log. 
Second, BPA aims to reduce model size, which aligns well with the size characteristics of event logs.

We select the Inductive Miner (IM) including fall-throughs for process discovery \cite{leemans_robust_2022} to balance practical relevance with proof complexity, as isomorphic rediscoverability is already established for IM.
\rev{Because our synchronization approach requires the novel EA technique $\ea_{bpa}$ to result in abstracted event logs $L_a$ for which the IM discovers $M_a$ to maintain the relation between log and model, $L_a$ enables further process intelligence tasks that are grounded in the real-world behavior of $L_a$ and that were not possible before (cf. \autoref{sec:intro}).}

\section{BPA-Based Synchronization Method}\label{sec:method}
In this section, we present our concrete synchronization method based on BPA. 
We first adapt the BPA technique to ensure it meets our synchronization requirements (\autoref{sec:bpa}), then design the corresponding synchronized event abstraction technique (\autoref{sec:sea}).

\subsection{Adapting BPA for Synchronization}
\label{sec:bpa}

\ext{This section introduces the non-order-preserving BPA technique $\ma_{bpa}$ and adapts it for synchronization. To illustrate the three steps of $\ma_{bpa}$, we introduce a running example. In addition to the illustrative example in \autoref{fig:illustrative}, we present the running example that is more detailed to demonstrate the full behavior of both $\ma_{bpa}$ in the following, our synchronized EA technique $\ea_{bpa}$ in \autoref{sec:sea}, and our algorithm to generate \emph{minimal} df-complete event logs in \autoref{sec:min}. Hence, the running example is used to demonstrate the algorithms step-by-step, whereas the illustrative example motivates our approach in \autoref{sec:intro} and is revisited in \autoref{sec:illustrative}.}

\begin{figure}[htb!]
    \centering
    \includegraphics[width=\linewidth]{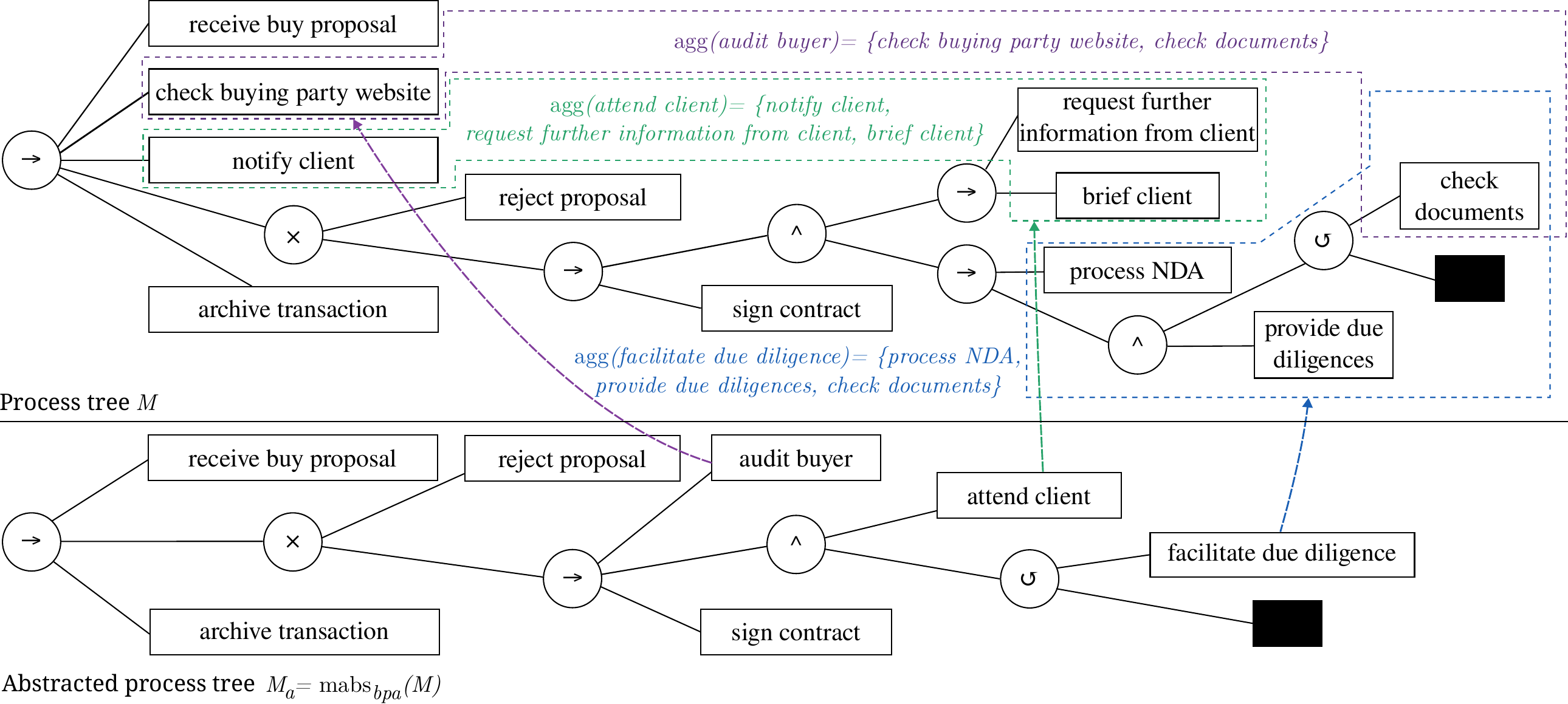}
    \caption{Running example: Process tree $M$ and abstracted process tree $\ma_{bpa}(M)$ \cite{smirnov_businessm_2012} for threshold $w_t =0.5$.}
    \label{fig:ex} 
\end{figure}
\vspace{-0.1em}

\begin{wraptable}[8]{R}{0.5\linewidth}
\vspace{-2em}
\resizebox{\linewidth}{!}{
\begin{tabular}{lc}   
    \toprule
    ref & trace  \\
    \midrule
    $\sigma_{ex1}$  & $\la \texttt{RBP}, \texttt{CBW}, \texttt{NC}, \texttt{RP}, \texttt{AT}\ra$ \\ 
    $\sigma_{ex2}$ & $\la \texttt{RBP}, \texttt{CBW}, \texttt{NC}, \texttt{RFI}, \texttt{BC}, \texttt{PN}, \texttt{CD},\texttt{CD}, \texttt{PDD}, \texttt{SC}, \texttt{AT}\ra$\\ 
    $\sigma_{ex3}$ &  $\la \texttt{RBP}, \texttt{CBW}, \texttt{NC}, \texttt{RFI}, \texttt{PN}, \texttt{BC}, \texttt{CD},\texttt{CD}, \texttt{PDD}, \texttt{SC}, \texttt{AT}\ra$\\ 
    $\sigma_{ex4}$ &  $\la \texttt{RBP}, \texttt{CBW}, \texttt{NC}, \texttt{PN}, \texttt{RFI}, \texttt{BC}, \texttt{CD},\texttt{CD}, \texttt{PDD}, \texttt{SC}, \texttt{AT}\ra$\\ 
    \bottomrule
\end{tabular}}
 \caption{Excerpt of event log $L_{2}$. }
  \label{tab:log}
\end{wraptable}

\noindent\textbf{Running example}: Figure \ref{fig:ex} shows \emph{process tree} $M$ which describes a transaction process by a mergers \& acquisitions advisor that sells companies for its clients. The advisor receives buy proposals, checks their websites, and notifies the client about the new proposal. If the proposal is convincing, further information for the buyer is requested from the client and the client is briefed while the advisor processes the non-disclosure agreement (NDA), followed by providing the buyer with confidential documents (due diligence) and checking the documents received from the buyer (multiple times). In the end, an acquisition contract is signed and the proposal is archived. If the proposal is not convincing, it is rejected. Lastly, Table \ref{tab:log} shows four traces $\sigma_{ex1}, \ldots$ of an event log $L_2$ that is recorded from executing $M$ (activity names are abbreviated by their acronym).

\ext{BPA aims to enable unrestricted abstraction of concrete activities into abstract activities. Consequently, we can cluster activities according to their data flow (cf. \autoref{fig:illustrative} \circled{3}) and apply $\ma_{bpa}$ to abstract accordingly.}  
At the core of $\ma_{bpa}$ lies the \emph{behavioral profile} of a process tree $M$. It is equivalent to the \emph{footprint} \cite{van_der_aalst_workflow_2004} of a process model and constitutes an abstract representation of the behavior allowed by the model similar to the DFG (cf. \autoref{sec:back}), but more coarse-grained. Hence, a behavioral profile contains less detailed information on the behavior of the model than the DFG, e.g., a loop $\tloop(a,b)$ leaves a distinct graph pattern in the DFG, but is indistinguishable from $\wedge(a,b)$ in a behavioral profile. The behavioral profile is defined as follows:

\begin{definition}[Behavioral profile, adapted from \cite{smirnov_businessm_2012}]
\label{def:bp}
    Let $M$ be a process tree, $A = \Sigma(M)$ its activities, and $\mathcal{L}(M)$ its language. Let $\succ\; \subseteq A\times A$ be the weak-order relation that contains all activity pairs $(x,y) \in \;\succ$ for which there exists a trace $\sigma = \langle \alpha_1, \ldots, \alpha_n\rangle \in \mathcal{L}(M)$ with $j \in \{1, \ldots, m - 1\}$ and $j < k\leq m$ such that $\alpha_j=x$ and $\alpha_k = y$. Given the weak-order relation, an activity pair $(x,y)$ is either:
    \begin{itemize}
        \item in a strict order relation $ \rightsquigarrow_{M}$: $ x \succ y $ and $y \not\succ x$,
        \item in an choice order relation $+_{M}$: $x \not\succ y$ and $y \not\succ x$,
        \item or in a parallel order relation $\parallel_{M}$: $x \succ y$ and $y \succ x$.
    \end{itemize}
    The set of all three relations $ \rightsquigarrow_{M}, +_{M},$ and $\parallel_{M}$ is the behavioral profile $p_{M}$ of $M$. The set of all behavioral profiles over activities $A$ is denoted by $\mathcal{BP}_{A}$.
\end{definition}

As aforementioned, we acronym the activity names. For example, we have $\texttt{RBP} \rightsquigarrow_M \texttt{RP}$, $\texttt{RP} +_M \texttt{SC}$, and $\texttt{CD} \parallel_M \texttt{CD}$ in the behavioral profile $p_M$ of the concrete process tree $M$ in \autoref{fig:ex}. Given the behavioral profile notion, we can introduce $\ma_{bpa}(M) = M_a$ as three subsequent steps: 
\begin{enumerate}
    \item[S1] The behavioral profile is computed: $\operatorname{pr}(M) = p_M \in \mathcal{BP}_{A}$.
    \item[S2] Given a behavioral profile, the abstract behavioral profile $p_{M_a}$ is derived from $p_{M}$ by a parametrized function: $\operatorname{dv}_{\agg, w_t}(p_M) = p_{M_a}$. The first parameter is a function $\agg: A_a \rightarrow 2^A$ with $A_a$ the activities of $M_a$ without the silent activity and $A = \Sigma(M)$. $\agg$ specifies which abstract activities correspond to which sets of concrete activities. The second parameter $0 < w_t \leq 1$ controls what ordering relation frequencies are selected for $p_{M_a}$ from $p_M$. 
    \item[S3] Given an abstract behavioral profile, an abstracted process tree $M_a$ is synthesized whose behavioral profile equals $p_{M_a}$, i.e., $\operatorname{sy}(p_{M_a}) = M_a $. To uniquely construct a process tree $M_a$ from profile $p_{M_a}$, the profile is encoded as a graph $G(p_{M_a})$ and the graph's unique \emph{modular decomposition tree} $MDT(G)$ \cite{mcconnell_linear-time_2005} is computed. If each \emph{module} $m$ in $MDT(G)$ is either \emph{linear}, \emph{AND}-, or \emph{XOR}-complete \cite{smirnov_businessm_2012}, then process tree $M_a$ is constructed by adding a tree node for each module. As a module can be \emph{primitive}, i.e., contains ``conflicting'' ordering relations, not all profiles $p_{M_a}$ have a corresponding process tree $M_a$ such that this step may fail: $\operatorname{sy}(p_{M_a}) = \bot $.
\end{enumerate} 


To illustrate $\ma_{bpa}$, we refer to \autoref{fig:ex}. The process tree $M$, the abstracted process tree $M_a$, and the parameter $\agg$ are depicted. First, the behavioral profile $p_M$ of $M$ is computed (S1).
Given $p_M$, the second step $\operatorname{dv}_{\agg, w_t}(p_M)$ is computed (S2). The parameter $\agg$ is denoted in \autoref{fig:ex} by three different colors.
For instance, $\agg(\texttt{AB}) = \{\texttt{CBW}, \texttt{CD}\}$. For presentation purposes, the mappings $\agg(y) = \{y\} $ for $y \in \{\texttt{RBP}, \texttt{RP}, \texttt{SC}, \texttt{AP}\}$ are not visualized in \autoref{fig:ex}. Note that at this stage, the order of abstract activities in the abstracted process tree $M_a$ is unknown and cannot be derived intuitively, because their mappings of $\agg$ may overlap (e.g., \texttt{AB} and \texttt{FDD}) or the order of their concrete activities is in conflict (e.g., \texttt{NC} and \texttt{BC} in $\agg(\texttt{AC})$ vs. \texttt{CD} in $\agg(\texttt{FDD})$ are in strict and interleaving order respectively). $\ma_{bpa}$ computes the ordering relations between two abstract activities $x, y \in M_a$ by selecting the most \emph{restrictive} ordering relation among those that occur relatively more frequent than or equally frequent to the threshold $w_t$. To that end, $\operatorname{dv}_{\agg, w_t}$ applies \autoref{alg:bpa} to each abstract activity pair $x$ and $y$. 

\begin{algorithm}[t]
    \caption{Derivation of an ordering relation (adapted from \cite{smirnov_businessm_2012})}
    \label{alg:bpa}
    \begin{scriptsize}

    \begin{algorithmic}[1]
    \State $\mathbf{deriveOrderingRelation}_{\agg, w_t}(\mathbf{Activity} \;x, \mathbf{Activity}  \;y)$
        \State $w(x \succ_{M_a} y) = |\{\forall (v, u) \in \agg(x) \times \agg(y) : v \rightsquigarrow_{M} u \lor v \parallel_{M} u\}|$
        \State $w(y \succ_{M_a} x) = |\{\forall (v, u) \in \agg(x) \times \agg(y) : v \rightsquigarrow_{M}^{-1} u \lor v \parallel_{M} u\}|$
        \State $w(x \not\succ_{M_a} y) = |\{\forall (v, u) \in \agg(x) \times \agg(y) : v \rightsquigarrow_{M}^{-1} u \lor v +_{M} u\}|$
        \State $w(y \not\succ_{M_a} x) = |\{\forall (v, u) \in \agg(x) \times \agg(y) : v \rightsquigarrow_{M} u \lor v +_{M} u\}|$
        \State $w_{prod} = |\agg(x)| \cdot |\agg(y)|$
        \State $w(x +_{M_a} y) = \min(w(x \not\succ_{M_a} y), w(y \not\succ_{M_a} x)) \cdot \frac{1}{w_{prod}}$
        \State $w(x \rightsquigarrow_{M_a} y) = \min(w(x \succ_{M_a} y), w(y \not\succ_{M_a} x)) \cdot \frac{1}{w_{prod}}$
        \State $w(x \rightsquigarrow_{M_a}^{-1} y) = \min(w(y \succ_{M_a} x), w(x \not\succ_{M_a} y)) \cdot \frac{1}{w_{prod}}$
        \State $w(x \parallel_{M_a} y) = \min(w(x \succ_{M_a} y), w(y \succ_{M_a} x)) \cdot \frac{1}{w_{prod}}$
        \If{$w(x +_{M_a} y) \geq w_t$}
            \State \Return $x +_{M_a} y$
        \ElsIf{$w(x \rightsquigarrow_{M_a} y) \geq w_t$}
            \If{$w(x \rightsquigarrow_{M_a}^{-1} y) > w(x \rightsquigarrow_{M_a} y)$}
                \State \Return $x \rightsquigarrow_{M_a}^{-1} y$
            \Else
                \State \Return $x \rightsquigarrow_{M_a} y$
            \EndIf
        \ElsIf{$w(x \rightsquigarrow_{M_a}^{-1} y) \geq w_t$}
            \State \Return $x \rightsquigarrow_{M_a}^{-1} y$
        \ElsIf{$w(x \parallel_{M_a} y) \geq w_t$}
            \State \Return $x \parallel_{M_a} y$
        \Else
            \State \Return $x \parallel_{M_a} y$
        \EndIf
    \end{algorithmic}
\end{scriptsize}
    \end{algorithm}

\autoref{alg:bpa} consists of three blocks: Counting frequencies of weak order relations between the respective concrete activities (line 2-5), deriving relative frequencies for ordering relations from weak order relations (line 6-10), and selecting the most restrictive ordering relation ($+$ > $\rightsquigarrow^{-1}$ > $\rightsquigarrow$ > $\parallel$) that is equal to or greater than threshold $w_t$. 
For example, \autoref{alg:bpa} applied to \texttt{AB} and \texttt{AC} for $w_t = 0.5$ computes the relative weak order frequency $w(\texttt{AB} \succ_{M_a} \texttt{AC}) =5/6$, because $\texttt{CBW} \succ \texttt{NC}, \texttt{CBW} \succ \texttt{RFI}, \texttt{CBW} \succ \texttt{BC}, \texttt{CBW} \succ \texttt{NC}, \texttt{CD} \succ \texttt{RFI}$, and $ \texttt{CD} \succ \texttt{BC}$. Analogously, we have $w(\texttt{AC} \succ_{M_a} \texttt{AB}) =3/6$, $w(\texttt{AB} \not\succ_{M_a} \texttt{AC}) =1/6$, and $w(\texttt{AC} \not\succ_{M_a} \texttt{AB}) = 3/6$. The weak order frequencies are transformed to order relation frequencies by taking the minimum of the respective two weak order relations. Thus, $w(\texttt{AB} +_{M_a} \texttt{AC}) = 1/6$, because $1/6$ is the minimum of $w(\texttt{AB} \not\succ_{M_a} \texttt{AC}) $ and $ w(\texttt{AC} \not\succ_{M_a} \texttt{AB})$. Analogously, we have $w(\texttt{AB} \rightsquigarrow_{M_a} \texttt{AC}) = 3/6$, $w(\texttt{AB} \rightsquigarrow_{M_a}^{-1} \texttt{AC}) = 1/6$, and $w(\texttt{AB} \parallel_{M_a} \texttt{AC}) = 3/6$ such that $\texttt{AB} \rightsquigarrow_{M_a} \texttt{AC}$ is the most restrictive ordering relations whose relative frequency is equal to $w_t$. Overall, the result is $\texttt{AB} \rightsquigarrow_{M_a} \texttt{AC}$.




As the third step (S3), $\ma_{bpa}$ attempts synthesizing the abstracted process tree $M_a$ from the abstract behavioral profile $p_{M_a} $ (cf. Algorithm 3.2 in \cite{smirnov_businessm_2012}). To construct the different nodes of the process tree $M_a$, an \emph{order relations graph} $G(p_{M_a}) = (V,E)$ for a given behavioral profile $p_{M_a}$ is constructed. The nodes are the activities $V = A_a$. Edges correspond to the strict order relation and the choice relation without the identity, i.e., $E = \; \rightsquigarrow_{M_a} \cup +_{M_a} \setminus \;id_{A_a}$. For example, \autoref{fig:mdt} (a) depicts the order relations graph $G(p_{M_a})$ for the abstracted behavioral profile $p_{M_a}$ that is derived in step 2 for the running example in \autoref{fig:ex}.

\begin{figure}[ht!]
    \centering
    \includegraphics[width=0.85\linewidth]{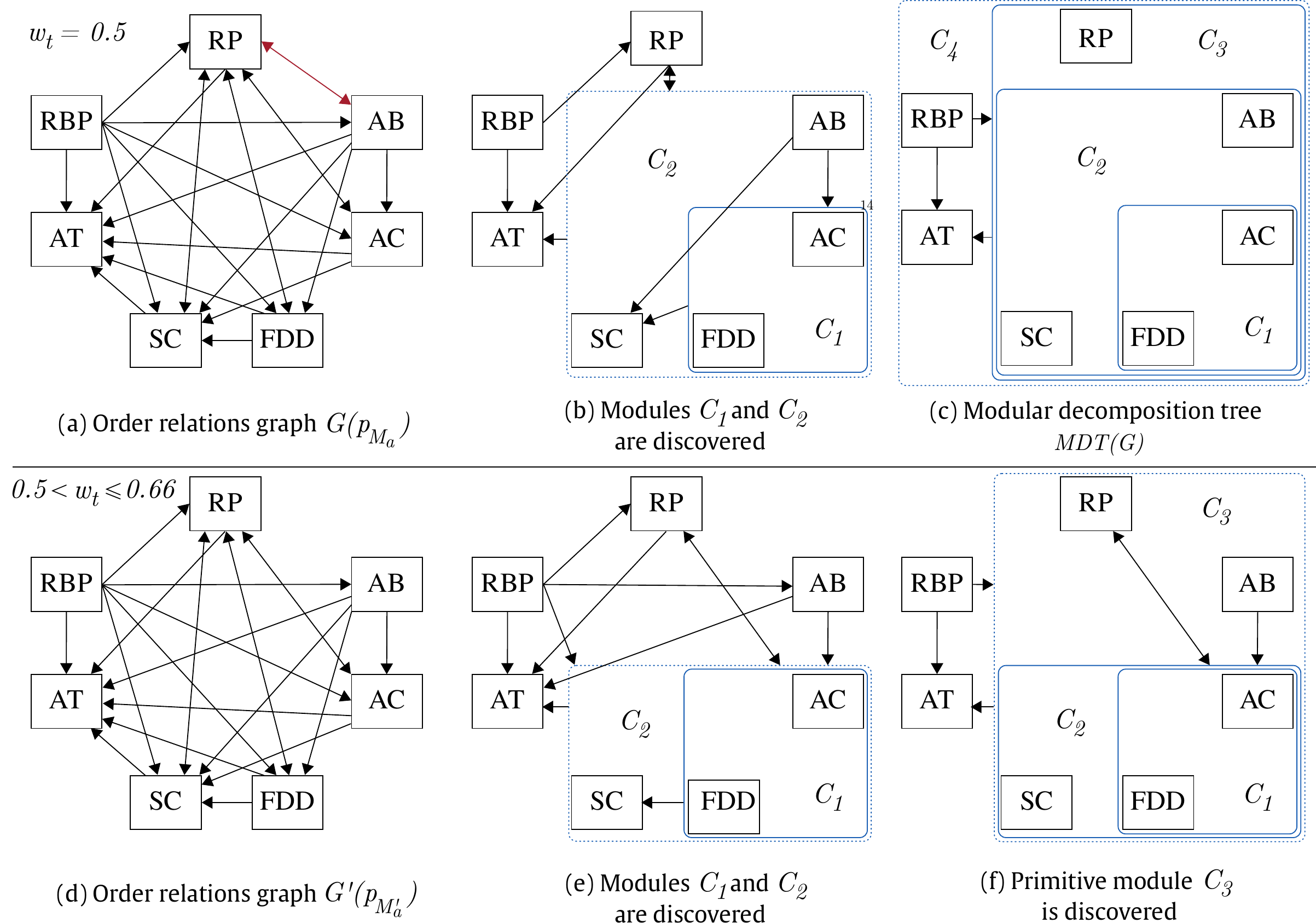}
    \caption{Modular decomposition of two order relation graphs $G(p_{M_a})$ and $G(p_{M_a\pr})$ that are derived by $\ma_{bpa}$ from the running example $M$ for two different parameters $w_t$. For $w_t = 0.5$ (a-c), AND-complete module $C_1$, linear module $C_2$, XOR-complete module $C_3$ and linear module $C_4$ are discovered. For $0.5 < w_t \leq 0.66$ (d-e), AND-complete module $C_1$, linear module $C_2$, and primitive module $C_3$ are discovered.}
    \label{fig:mdt} 
  \end{figure}

To derive a unique tree structure from $G(p_{M_a})$, the modular decomposition tree $MDT(G)$ \cite{mcconnell_linear-time_2005} is computed. The tree contains a hierarchy of \emph{non-overlapping}\footnote{Two modules \emph{overlap} iff they intersect and neither is a subset of the other.} modules $C \subseteq V$ that have uniform ordering relations with activities $V \:\setminus\: M$, i.e., they ``agree'' on their ordering relations to other activities. Additionally, modules are classified through the ordering relations between their activities $x \in C$: AND-complete and XOR-complete modules have only activities that are in interleaving order (i.e., they are not connected in $G(p_{M_a})$) and in choice order (i.e., they are completely connected) respectively, while linear modules have only activities that can be linearly ordered such that their edges do not violate the direction of the linear order. Any other module is primitive.

As depicted in \autoref{fig:mdt} (a-c), the modular decomposition of $G(p_{M_a})$ discovers four modules that each correspond to a node of $M_a$ (cf. \autoref{fig:ex}): Linear module $C_4$ corresponds to the $\rightarrow(\ldots)$ root node in $M_a$, XOR-complete module $C_3$ corresponds to the $\times(\ldots)$ node in $M_a$, linear module $C_2$ to the second $\rightarrow(\ldots)$ node, and AND-complete module $C_1$ to the $\wedge(\ldots)$ node. Because the modular decomposition $MDT(G)$ (c) does not contain any primitive module, the abstracted process tree $M_a$ with behavioral profile $p_{M_a}$ can be synthesized. Additionally, step 3 includes a special case for activities $x \in A_a$ that are in parallel order with themselves: $x \parallel x\in p_{M_a}$. $\operatorname{sy}$ constructs a self-loop node $\tloop(x, \tau)$, e.g., the self-loop of activity \texttt{FDD} in \autoref{fig:ex}.


\rev{In general, parameter $\agg$ is assumed to be set (i.e., computed by clustering cf. \autoref{sec:intro}), as the actual value has no impact on our results.} We explicitly add three restrictions (3-5) on $ma_{bpa}$ to guarantee that $ma_{bpa}$ always satisfies its abstraction goal (cf. \autoref{fig:illustrative} \circled{3}):

\begin{definition}[Behavioral Profile Abstraction (adapted from \cite{smirnov_businessm_2012})]
\label{def:bpa}
    The behavioral profile abstraction $\ma_{bpa}(M) = M_a$, $M_a = \operatorname{sy}(p_{M_a})$, \\$p_{M_a} = \operatorname{dv}_{\agg, w_t}(p_M)$, $p_M = \operatorname{pr}(M)$ is applicable to $M$ iff\footnote{We denote $A_M$ by $A$ and $A_{M_a}$ by $M_a$, because the context is clear.}: 
    \begin{enumerate}
    \item $M$ has no duplicate activities,
    \item the modular decomposition tree $MDT(G)$ of the abstract behavioral profile's graph $G(p_{M_a})$ contains no primitive module,
    \item 
    ($A_{new} :=\Aa \setminus A  \neq \emptyset$) $\wedge$ 
    ($A_c := \Aa \setminus A_{new}  \subseteq A $) $\wedge$  ($A_{new} \cap A_c=\emptyset $),
    \item 
 ($\forall x \in \A_{new}$: $|\agg(x)|>1 )\wedge $
 ($|\bigcup_{y\in A_{new}} \agg(y)| > |A_{new}| + 1$) $\wedge$
 ($\forall y \in A_c$: $\agg(y) = \{y\}$),
 \item $w_t$ is restricted to $0 < w_t \leq \wtop $ with $\wtop= \min_{x,y\in \Aa}\wm(x,y) $ and 
    \begin{align*}
    \vspace{-0.5em}
        \wm(x,y) = \max\bigl(w(x +_{M_a} y), w(x \rightsquigarrow_{M_a} y), \\w(x \rightsquigarrow_{M_a}^{-1} y), w(x \parallel_{M_a} y)\bigr). 
    \end{align*}
 \end{enumerate}
\end{definition}

Conditions (1-2) are required for the applicability of $\ma_{bpa}$ by the original proposal in \cite{smirnov_businessm_2012}. Conditions (3-5) are added to guarantee that the resulting process tree $M_a$ is smaller and to prohibit renaming of activities during abstraction.

\noindent \textbf{Conditions (1-2)}: The original $\ma_{bpa}$ \cite{smirnov_businessm_2012} additionally required both the process tree $M$ to be of the form $M = \:\rightarrow(s,M\pr,e)$ for start and end activities $s$ and $e$ and the abstracted process tree $M_a$ to be of the form $M_a = \:\rightarrow(s_a,M_a\pr,e_a)$. In $M_a$, the activities $s_a$ and $e_a$ are either equal to their concrete counterparts $s$ and $e$ or are added in step 3 as artificial start and end activities, if $M_a$ would otherwise not have a start and end activity. Both restrictions are not required for our purpose and, in the case of adding artificial start and end activities, results in a non-synchronizable model abstraction. Moreover, the abstracted process tree $M_a$ may not be smaller than the process tree $M$, i.e., $\ma_{bpa}$ may not be a model abstraction anymore. Consequently, we adapted $\ma_{bpa}$ by removing both restrictions.

\noindent \textbf{Conditions (3-5)}: Restrictions (3), (4), and (5) guarantee that the resulting process tree $M_a$ is smaller and prohibit renaming of activities during abstraction, e.g., $|M| = |\ma_{bpa}(M)|$ for any $\agg$ that only renames activities in \autoref{fig:ex}. BPA $\ma_{bpa}$ with $A_{new} = \{\texttt{AB}, \texttt{AC}, \texttt{FCC} \}$, $w_t = \wtop = 0.5$, and $\agg$ as depicted in \autoref{fig:ex} is applicable to $M$. To illustrate restriction (5), consider the running example. By setting the parameter $w_t$ to a value between $0.5$ and $0.66$, the order relations graph misses the choice edge between \texttt{RP} and \texttt{AB} (denoted in red in \autoref{fig:mdt} (a)), because the default case (line 22) of \autoref{alg:bpa} is reached for these two activities. Hence, order relations graph $G(p_{M_a\pr})$ is derived in step 2 as depicted in \autoref{fig:mdt} (d). Due to the missing edge, a primitive module $C_3$ is discovered by the modular decomposition (cf. \autoref{fig:mdt} f). Consequently, no abstracted process tree $M_a$ exists that has the same behavioral profile $p_{M_a\pr}$. %
To avoid the default case (line 22) and subsequent discovery of primitive modules, we restrict $w_t$ in restriction (5). 

Importantly, these restrictions have not been stated for $\ma_{bpa}$ in \cite{smirnov_businessm_2012}. Thus, adding (3), (4), and (5) constitutes an adaptation of $\ma_{bpa}$. Our adapted $\ma_{bpa}$ is well-defined (cf. \autoref{sec:back}). 
Next, we present our design for $\sea$.

\subsection{Synchronized Event Abstraction Design}
\label{sec:sea}

\janik{To ensure synchronization for our novel EA technique $\ea_{bpa}$, we design $\ea_{bpa}$ to transform $L$ into an abstracted event log $L_a$ such that $L_a$ and $M_a$ are df-complete. We aim for a df-complete $L_a$, because df-completeness is required for IM's isomorphic rediscoverability (cf. \autoref{sec:back}). 
$\ea_{bpa}$ is composed of two steps.}


\noindent \textbf{Preliminarily Abstracting The Event Log.} First, $\ea^1_{bpa}$ constructs a preliminary abstracted event log $L_{tmp}$ by abstracting occurrences of concrete events $e \in A_{\ma}$ (i.e., $e$ is abstracted by $\ma_{bpa}$) into new abstract events $x$ trace by trace. 
Next, $\ea^1_{bpa}$ deletes abstract activities $x$ that are in choice relation to another abstract activity $y$ from traces $\sigma_{abs} \in L_{tmp}$ in which both $x$ and $y$ occur. We illustrate $\ea_{bpa}^1$ with the running example (cf. \autoref{fig:ex}). Let $L$ be an event log such that IM discovers $M$ as depicted. For simplicity, we assume that $L$ is a \emph{minimal} df-complete event log (cf. \autoref{sec:min}).

\begin{algorithm}
\scriptsize
\caption{First step to synchronize $\ma_{bpa}$: $\ea^1_{bpa}$ }
\begin{algorithmic}[1]
\Require: Event log $L$, process discovery technique $\disc_{IM}$, MA technique $\ma_{bpa}$ \rev{with corresponding $p_{M_a}, A_a, A_{new},$ and $\agg$ (cf. \autoref{def:bpa})}
\Ensure: Preliminary abstracted event log $L_{tmp}$, abstracted process tree $M_a$
\State $L_{tmp} \gets \{\}, \;M \gets \disc_{IM}(L), \;M_a \gets \ma_{bpa}(M), $ 
\ForAll{traces $\sigma \in L$}
\State $\sigma_{abs} \gets \la\ra$
\State $A_{\ma} \gets \{e \in \sigma\mid \exists x \in A_{new}:  e \in \agg(x)\}$
\State $A_{\neg\ma} \gets \{e \in \sigma \} \,\setminus\, A_{\ma}$
\ForAll{$e \in \sigma$ in the order of $\sigma$}
\If{$e \in A_{\ma}$}
\ForAll{$x\in A_{new}$ with $e \in \agg(x) $ first appearing in $\sigma$}
\If{$v \in A_{\neg\ma}$ does not exist s.t. $v +_{M_a} x \in p_{M_a}$}
\State $\sigma_{abs} \gets \sigma_{abs} \cdot \la x\ra $
\If{$x \parallel_{M_a} x\in p_{M_a}$}
\State $\sigma_{abs} \gets \sigma_{abs} \cdot \la x\pr\ra $
\EndIf
\EndIf
\EndFor
\ElsIf{$e \in A_{\neg \ma}$}
\State $\sigma_{abs} \gets \sigma_{abs} \cdot \la e \ra $
\EndIf
\EndFor
\State \rev{$L_{tmp} \gets L_{tmp} + \{\sigma_{abs}\} $ \text{// standard multiset addition}}
\EndFor
\State $A_{\times} \gets \{A\subseteq A_{new} \mid \exists C  \in MDT(G(p_{M_a})), \forall x\in A: \text{x belongs to XOR-complete module } C \}$
\State $L_{tmp} \gets \operatorname{deleteChoiceActivities}(L_{tmp}, A_{\times}) $
\State \Return $L_{tmp}$, $M_a$
\end{algorithmic}
\label{alg:ea1}
\end{algorithm}

To start, we have $L_2 = [\sigma_{ex1}, \sigma_{ex2}, \sigma_{ex3}, \sigma_{ex4}, \ldots]$ (cf. \autoref{tab:log}), $|L_2| = 46$, and $\norm{L_2} = 455$\footnote{Minimal df-complete event logs are computed by \autoref{alg:minimal} in \autoref{sec:min}}. For $\sigma_{ex1}$, $\ea_{mabs}$ computes $A_{\ma} = \{\texttt{CBW}, \texttt{NC}\}$ such that $\sigma_{ex1}[2]$ is the first event abstracted by $\texttt{AB} \in A_{new} = \{\texttt{AB}, \texttt{AC}, \texttt{FDD}\}$ (line 8). However, the condition in line 9 is not true, because $\texttt{RP} +_{M_a} \texttt{AB} \in p_{M_a}$. Also, for $\sigma_{ex1}[3]$ the abstract activity $\texttt{NC}$ is in choice relation to $\texttt{RP}$. Thus, $\sigma_{abs,1} = \la \texttt{RBP}, \texttt{RP}, \texttt{AT}\ra$. 
For $\sigma_{ex2}$, $\ea_{mabs}$ computes $A_{\ma} = \{\texttt{CBW}, \texttt{NC}, \texttt{RFI}, \texttt{BC}, \texttt{PN}, \texttt{CD}, \texttt{PDD} \}$ such that the condition in line 7 becomes true for any $\sigma_{ex2}\pr[2], \ldots, \sigma_{ex2}\pr[8]$. However, only $ \sigma_{ex2}\pr[2], \sigma_{ex2}\pr[3],$ and $\sigma_{ex2}\pr[6]$ for $\texttt{AB},  \texttt{AC},$ and $\texttt{FDD}$ respectively satisfy the condition in line 8. Since no concrete activity $u \in A_{\neg\ma} = \{\texttt{RBP},\texttt{SC}, \texttt{AT} \}$ is in choice relation to an abstract activity $x \in A_{new}$, three new abstract events $\texttt{AB}, \texttt{AC},$ and $\texttt{FDD}$ are added to $\sigma_{abs,2}$ in line 11. Also, abstract activity \texttt{FDD} is in parallel relation to itself, so that $\texttt{FDD}$ is added a second time to $\sigma_{abs2}$. Overall, $\sigma_{abs2} = \la \texttt{RBP},\texttt{AB}, \texttt{AC}, \texttt{FDD},  \texttt{FDD}, \texttt{SC}, \texttt{AT}\ra $. Since in every trace the activity \texttt{CBW} always occurs before \texttt{NC} and both always occur before \texttt{PN} (cf. \autoref{fig:ex}), the next 44 iterations of the for-loop (line 2) always results in the same abstract trace: $\sigma_{abs2} = \ldots = \sigma_{abs46}$.  and $L_{tmp} = [\sigma_{abs1},  \sigma_{abs2}, \ldots, \sigma_{abs46} ]$. Consequently, $L_{tmp} = [\sigma_{abs1}, \sigma_{abs2}^{45}]$ when the for-loop terminates. 

Because no abstract activities $x,y \in A_{new}$ are in choice relation $x +_{M_a} y \in p_{M_a}$ (cf. no XOR-complete module in \autoref{fig:mdt} (c)), it holds that $A_{\times} = \emptyset$ in line 17 such that no abstract activities are deleted in line 18. Hence, $\ea_{bpa}$ returns $L_{tmp}$ without further changes.
In general, $A_{\times}$ contains sets of activities that are in choice relation to each other, i.e., for any $A\in A_{\times}$, all abstract activities $x,y \in A$ are in choice relation. Abstract activities that are in choice relation must not both occur in a trace $\sigma_{abs} \in L_{tmp}$.  
Also, the frequencies of traces in which they occur should be ``similar''\footnote{Same frequency means $\forall \sigma_{abs}\in L_{tmp}, A\in A_{\times} : | A_{\sigma_{abs}}\cap A | \leq 1 $ and $\forall x,y \in A, A \in A_{\times}:$   $|\operatorname{freq}_x(L_{tmp}) - \operatorname{freq}_y(L_{tmp})| \leq |A|$ with $\operatorname{freq}_x(L) =  |\{ \sigma_{abs} \in L\mid  x \in \sigma_{abs}\}| $}. 
While it is important that not always the same abstract activity is deleted from a trace for proving correctness in \autoref{sec:proof}, similar frequencies ensure that distributions like trace frequencies are maintained as faithfully as possible. The function $\operatorname{deleteChoiceActivities}$ ensures that the aforementioned requirements on abstract activities in choice relation are met.
For the result $L_{tmp}$ of $\ea_{bpa}^1$, the order of events in $L_{tmp}$ may not adhere to the order of activities in $M_a$, which is guaranteed through the second step.



\begin{algorithm}
\scriptsize
\caption{\rev{Second step to synchronize $\ma_{bpa}$: $\ea^2_{bpa}$ }}
\begin{algorithmic}[1]
\Require: Preliminary abstracted event log $L_{tmp}$, Abstracted process tree $M_a$
\Ensure: Abstracted, df-complete event log $L_r$
\State $L_a \gets \Lm(M_a) \hspace{1em}\text{// take the minimal df-complete log as reference (cf. \autoref{alg:minimal})}$ 
\State $L_r \gets [\,]$ \text{// initialize empty log for the result}
\vspace{0.4mm}
\State $L_a/\!\sim \;\,\gets \{L\pr \subseteq L_a \mid \forall \sigma_1, \sigma_2\in L\pr: \delta_{kendall}(\sigma_1, \sigma_2) \neq \bot \}$
\vspace{0.4mm}
\State $L_{tmp}/\!\sim \;\,\gets \{L\pr \subseteq L_{tmp} \mid \forall \sigma_1, \sigma_2\in L\pr: \delta_{kendall}(\sigma_1, \sigma_2) \neq \bot \}$
\ForAll{$ L_a^{class} \in L_a/\!\sim$}
\ForAll{$ L_{tmp}^{class} \in L_{tmp}/\!\sim$}
\vspace{0.4mm}
\If{$\exists \sigma\in L_a^{class}, \sigma_{abs} \in L_{tmp}^{class}: \delta_{kendall}(\sigma, \sigma_{abs})\neq \bot$}
\vspace{0.4mm}
\State $n_1, \ldots, n_{|L_a^{class}|} \gets \operatorname{evenSplitSizes}(|L_{tmp}^{class}|, |L_a^{class}|)$
\ForAll{$ j, \sigma \in \operatorname{enumerate}(L_{a}^{class}) $}
\vspace{0.4mm}
\State $L_{tmp}^{n_j} \gets \operatorname{closestTraces}_{\delta_{kendall}}(n_j, L_{tmp}^{class}, \sigma )$
\vspace{0.4mm}
\State $L_{tmp}^{class} \gets L_{tmp}^{class} - L_{tmp}^{n_j} $
\vspace{0.4mm}
\State $L_{tmp}^{transposed} \gets  \operatorname{transposeAll}_{\delta_{kendall}}(L_{tmp}^{n_j}, \sigma )$
\State $L_r \gets L_r + L_{tmp}^{transposed} $
\EndFor
\EndIf
\EndFor
\EndFor
\State \Return $L_r$
\end{algorithmic}
\label{alg:ea2}
\end{algorithm}
\vspace{-0.25em}

\noindent \textbf{Transposing Events To Ensure Correct Orders.} The second step $\ea^2_{bpa}$ establishes the correct order of events in $L_{tmp}$ with respect to the order of events in the reference minimal df-complete event log $L_a$ of \rev{$M_a$} (cf. \autoref{alg:minimal}).
\rev{The correct order is established through the \emph{Kendall Tau Sequence Distance} $\delta_{kendall}$ \cite{kendall_distance_2020} that computes the minimal number of transpositions needed to transform one trace $\sigma_1$ into the other $\sigma_2$. As $\delta_{kendall}$ requires that both traces are permutations of the same multiset of activities, it is undefined otherwise ($\delta_{kendall} = \bot$). Put succinctly, $\ea^2_{bpa}$ finds a matching between equivalence classes of traces in the reference $L_a$ (line 3) and equivalence classes of traces in the input $L_{tmp}$ (line 4) where equivalence is defined modulo transposition (cf. \autoref{sec:correct}). To maintain relative trace multiplicities within an equivalence class of the reference (line 8-9), the traces in the matched equivalence class of the input are evenly split (line 10-11) and transposed (line 12) to exhibit the same order of events as the reference traces (line 13).}

To continue the illustration, $\Lm$ computes $L_a = [\sigma_1, \ldots, \sigma_4]$ with $\sigma_{1} = \la \texttt{RBP}, \texttt{RP}, \texttt{AP}\ra, $ $ \sigma_2 = \la \texttt{RBP},\texttt{AB}, \texttt{AC}, \texttt{FDD},  \texttt{FDD}, \texttt{SC}, \texttt{AP}\ra, $ $ \sigma_3 = \la \texttt{RBP},\texttt{AB}, \texttt{FDD}, \texttt{AC}, \texttt{FDD}, \texttt{SC}, \texttt{AP}\ra$ and $\sigma_4 = \la \texttt{RBP},\texttt{AB}, \texttt{FDD}, \texttt{FDD}, \texttt{AC}, \texttt{SC}, \texttt{AP}\ra$. Hence, $\Ltr = \{[\sigma_{abs1}], [\sigma_{abs2}^{45}]\}$ (line 3) and $\Lar = \{[\sigma_{1}], [\sigma_2, \sigma_3, \sigma_4]\}$ (line 4) are the two quotient sets modulo transposition. In the first iteration of the for-loop in line 5, the equivalence class $\Lac = [\sigma_1]$ and the equivalence class $\Ltc = [\sigma_{abs1}]$ satisfy the condition in line 7. Therefore, the one trace of $\Ltc$ is evenly split to the one trace of $\Lac$, i.e., $n_{|\Lac|} = n_1 = 1$. Next, the single closest trace $\sigma_{abs1}$ of $\Ltc$ is assigned to $L_{tmp}^{n_1}$ (line 10), removed from the equivalence class $\Ltc$ (line 11), transposed according to the zero distance of $\delta_{kendall}$ between $\sigma_{abs1}$ and $\sigma_1$ (i.e., no transposition is applied), and assigned to the result: $L_r = [\sigma_{abs1}]$. 

In the second iteration of the for-loop in line 5, the equivalence class $\Lac = [\sigma_2, \sigma_3, \sigma_4]$ is matched with the equivalence class $\Ltc = [\sigma_{abs2}^{45}]$ by satisfying the condition in line 7. The 45 traces in $\Ltc$ are evenly split across the 3 traces of $\Lac$ in line 8: $n_1 = 15, n_2 = 15, $ and $n_3 = 15$. Because $45$ can be divided without remainder, each split size $n_1, \ldots$ is of equal size and the remainder does not have to be spread across the splits. The enumeration in line 9 of $\Lac$ results in $j \in \{1,2,3\}$. Hence, $\operatorname{closestTraces}$ finds the 15 closest traces $\sigma \pr \in \Ltc$ ($j=1$, $n_1 = 15$, $\sigma = \sigma_2$) that have the smallest distance $\delta_{kendall}(\sigma\pr, \sigma_2)$ to the current trace $\sigma_2$ of $\Lac$ (line 10). Because all traces in $\Ltc$ have the same distance of 0 to $\sigma_2$, 15 traces of $\Ltc$ are assigned to $L^{n_1}_{tmp} = [\sigma_{abs}^{15}]$. Subsequently, $L^{n_1}_{tmp}$ is removed from $\Ltc$ (line 11), no transpositions are applied (line 12), and $L_r = [\sigma_1, \sigma_{2}^{15}]$. 

For the second iteration of the for-loop in line 9 ($j=2$, $n_2 = 15$, $\sigma = \sigma_3$), $\operatorname{closestTraces}$ finds the next 15 closest traces $\sigma \pr \in \Ltc$ that have the smallest distance $\delta_{kendall}(\sigma\pr, \sigma_3) = 1$ to the current trace $\sigma_3$ of $\Lac$ (line 10). Because all 30 traces in $\Ltc$ have the same smallest distance of 1 to $\sigma_3$, another 15 traces are assigned to $L^{n_2}_{tmp}$ (line 10) and removed from $\Ltc$ (line 11). As the distance greater than zero, the respective transposition to transform each $\sigma\pr \in L^{n_2}_{tmp}$ into $\sigma_2$, i.e., transposing the $\texttt{AC}$ with the directly-following $\texttt{FDD}$, are applied in line 12. Hence, $L_{tmp}^{transposed} = [\la \texttt{RBP},\texttt{AB}, \texttt{FDD}, \texttt{AC}, \texttt{FDD}, \texttt{SC}, \texttt{AP}\ra^{15}]$ and $L_r = [\sigma_{abs1}, \sigma_{abs2}^{15}, \la \texttt{RBP},\texttt{AB}, \texttt{FDD}, \texttt{AC}, \texttt{FDD}, \texttt{SC}, \texttt{AP}\ra^{15}]$ in line 13. Analogously, the third iteration applies the two transpositions to transform each $\sigma_{abs2}$ of the 15 remaining traces in $\Ltc$ into trace $\sigma_4$, resulting in $L_r = [\sigma_{abs1}, \sigma_{abs2}^{15}, \la \texttt{RBP},\texttt{AB}, \texttt{FDD}, \texttt{AC}, \texttt{FDD}, \texttt{SC}, \texttt{AP}\ra^{15}, \la \texttt{RBP},\texttt{AB}, \texttt{FDD}, \texttt{FDD}, \texttt{AC}, \texttt{SC}, \texttt{AP}\ra^{15}]$.
Obviously, the resulting abstracted event log $L_r$ is df-complete wrt. $M_a$: $L_a \subseteq L_r$.

\rev{Importantly, only the \emph{transpose} edit operation on traces for swapping the order of two directly-following events in a trace is allowed, i.e., any other distance metric on traces cannot be applied here. Since $\ea_{bpa}^1$ substitutes and deletes concrete events of abstract activities, inserts abstract activities for self-loops, and deletes events that are in choice order, it already applies the \emph{substitute}, \emph{delete}, and \emph{insert} edit operations in a controlled manner. Consequently, the perfect matching of equivalence classes between $L_a/\!\sim$ and $L_{tmp}/\!\sim$ uniquely exists, because the determinants for the number of equivalence classes in both quotient sets are aligned by $\ea_{bpa}^1$.} 
 
\rev{First, choice relations are satisfied in the previous step and no loops other than the self-loop can occur and are included in $L_{tmp}$ (cf. \autoref{def:restricted}). Second, $L_{tmp}$ and $L_a$ share exactly the same activities. Third, $L_a$ has fewer traces and is smaller than any concrete event log $L$ from which $L_{tmp}$ was abstracted (cf. \autoref{lemma:large}), so no equivalence class of $L_a$ can have more traces than the corresponding equivalence class in $L_{tmp}$. Finally, we point out that transpositions render the timestamp attribute incorrect. To heal the timestamp after transposition, we can either find a timestamp that adheres to the new ordering of events among the concrete events in the ``concrete'' event attribute (cf. last step) or we must interpolate it and flag it as artificially-created accordingly to avoid confusion during process intelligence.}
 
\rev{To sum up, we define $\ea_{bpa} = \ea^2_{bpa} \circ \ea^1_{bpa}$. In the next section, we prove that $\ea_{bpa}$ composed of $\ea_{bpa}^1$ and $\ea_{bpa}^2$ synchronizes $\ma_{bpa}$ under restrictions. To that end, we prove the correctness of our design: $\ea_{bpa}$ returns df-complete event logs $L_a \subseteq L_r$, i.e., both have the same DFG.}

\section{Theoretical Foundations}
\label{sec:theory}
This section establishes the theoretical foundations needed to prove synchronization correctness. 
We define process tree classes that support isomorphic rediscoverability (\autoref{sec:classes}), propose an algorithm $\gen$ to generate minimal df-complete event logs with size metrics (\autoref{sec:min}), establish conditions under which our approach produces well-defined results in (\autoref{sec:conditions}), and prove that these conditions actually establish well-defined results (\autoref{sec:well}).
For full proofs, we refer to the respective lemma in the appendix \autoref{app:lemma}. 

\subsection{Process Tree Classes and Rediscoverability}
\label{sec:classes}


\srm{The first step for proving synchronizability is to connect the levels of model abstraction and event logs based on isomorphic rediscoverability. As stated in \autoref{sec:back}, process tree M is isomorphic rediscoverable by a process discovery algorithm pd (in this work IM) from event log $L$ iff $M'=pd(L)$ is isomorphic to $M$. In the following, we will show that any abstracted process tree $M_a=\ma_{bpa}(M)$ is isomorphic rediscoverable under certain model restrictions. These restrictions are summarized by classes of process trees, i.e., $C_c$ and $C_a$ in Def. \ref{def:cbpa} such that $M_a$ always satisfies the model restrictions of class $C_a$.}

\begin{definition}[Class $C_{c}, C_{a}$]
\label{def:cbpa}
    $\oplus \,(M_1, \ldots, M_n)$ denotes a node at any position in process tree $M$. $M$ is in class $C_{c}$ iff restrictions 1. and 2. are met and in class $C_{a}$ iff restrictions 1.--3. are met:
    \begin{enumerate}
        \item $M$ has no duplicate activities, i.e., $\forall\; i \neq j: A_{M_i} \cap A_{M_j} = \emptyset $,
        \item If $\oplus = \;\circlearrowleft$, then the node is a self-loop, i.e., $\tloop(v, \tau)$ for some activity $v \in \AM$ (i.e., any other loop $\circlearrowleft(M_1, \ldots M_n)$ is prohibited),
        \item No $\tau$'s outside of the self-loops are allowed: If $\oplus \neq \;\circlearrowleft$, then $\forall\; i \leq n: M_i \neq \tau$.
    \end{enumerate}
\end{definition}

$M_a$ in the running example is in class $C_a$. 
These classes differ from standard IM restrictions regarding loop and $\tau$ handling, requiring separate rediscoverability proofs.
Note that we number the lemmata in this extended version according to our main paper, i.e., lemma 1 and 2 of the main paper are similarly numbered in the extended version.

\setcounter{lemma}{2} 

\begin{lemma}[Process trees in $C_{a}$ are isomorphic rediscoverable]
\label{lemma:redisc}
\srm{Let $M$ be a process tree and $L$ be an event log. }
If $M$ is in class $C_{a}$ and $M$ and $L$ are df-complete (cf. \autoref{sec:back}), i.e., $M \dfc L$, then $\disc_{IM}$ discovers a process tree $M\pr$ from $L$ that is isomorphic to $M$. 
\end{lemma}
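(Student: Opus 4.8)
The plan is to reduce the statement to the \emph{existing} isomorphic rediscoverability theorem for $\disc_{IM}$ under the restrictions $Q(M)$ and $R(L,M)$ (df-completeness) recalled in \autoref{sec:back}, by \emph{de-looping} $M$. Concretely, I would define $\hat M$ as the process tree obtained from $M$ by replacing every self-loop leaf $\tloop(v,\tau)$ with the plain leaf $v$. I would first check that $\hat M$ satisfies $Q$: it has no duplicate activities (inherited from restriction~1 of $C_a$); it contains no $\tau$ at all, since restriction~3 forces every $\tau$ in $M$ to be the redo-child of a $\circlearrowleft$-node and restriction~2 forces every such node to be a self-loop, all of which are removed in $\hat M$; and $\hat M$ contains no $\circlearrowleft$-node whatsoever, so the ``joint start/end of a loop's first child'' clause of $Q$ holds vacuously. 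The degenerate case $M=\tau$ is handled directly: $\mathcal{L}(M)=\{\langle\rangle\}$ forces $G(L)$ to be the trivial DFG and $\disc_{IM}$ returns $\tau\cong M$.

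Next I would relate the two DFGs. Since $\tloop(v,\tau)$ has language $v^{+}$, it has the same first and last activity $v$ as the leaf $v$, and therefore contributes exactly the same inter-activity edges to the directly-follows graph; the only difference is the self-edge $(v,v)$, which $\tloop(v,\tau)$ produces and $v$ does not. Hence $G(M)$ and $G(\hat M)$ coincide except on self-edges: $G(M)=G(\hat M)\cup\{(v,v)\mid v \text{ self-looped in } M\}$. I would then build a witness log $\hat L$ from $L$ by collapsing, in every trace, each maximal run of repetitions of a self-looped activity to a single occurrence; using $L\dfc M$ this yields $G(\hat L)=G(\hat M)$, i.e. $\hat L \dfc \hat M$, so the existing theorem applies to $(\hat L,\hat M)$ and gives $\disc_{IM}(\hat L)\cong \hat M$.

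The core step is to run $\disc_{IM}$ on $L$ and on $\hat L$ in lockstep. The cut criteria for the $\times$-, $\rightarrow$-, and $\wedge$-operators only constrain edges between \emph{distinct} activities, so they are insensitive to self-edges; consequently, at every recursion node $\disc_{IM}$ selects the same operator and the same partition of activities on $G(L)$ as on $G(\hat L)$, and --- since projecting the log onto a cut's parts preserves df-completeness and subtrees of a $C_a$-tree are again in $C_a$ --- the two runs stay aligned throughout the recursion. The \emph{only} place the runs differ is the single-activity base case: where $\hat L$ yields the leaf $v$, the presence of the self-edge $(v,v)$ in $L$ makes $\disc_{IM}$ emit the self-loop $\tloop(v,\tau)$ instead. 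Therefore $\disc_{IM}(L)$ equals $\disc_{IM}(\hat L)$ with each de-looped leaf $v$ restored to $\tloop(v,\tau)$, and restoring the self-loops on both sides of $\disc_{IM}(\hat L)\cong\hat M$ preserves the isomorphism, giving $\disc_{IM}(L)\cong M$.

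The main obstacle is the lockstep claim above: I must verify rigorously that self-edges can neither block the correct $\times/\rightarrow/\wedge$-cut nor induce a \emph{spurious} cut that $\disc_{IM}$ would select, the delicate case being a multi-activity node where a self-edge could conceivably suggest a loop cut. Here restriction~2 is essential: since $M$ has no non-self loops, the correct top operator at any internal node is never $\circlearrowleft$, and because $\disc_{IM}$ prefers the non-loop cuts (which are self-edge-invariant and, by df-completeness, present), no spurious loop cut is ever reached. Making this precise requires appealing to the exact cut definitions of the Inductive Miner; an equivalent fallback is a direct structural induction on $M$ with base cases $v$ and $\tloop(v,\tau)$, showing in the inductive step that the unique nontrivial cut of $G(L)=G(M)$ coincides with that of the de-looped DFG and that the projected sublogs remain df-complete and in $C_a$.
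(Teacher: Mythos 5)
Your proposal is correct in substance but organizes the argument differently from the paper. The paper's proof does a case split on whether $M$ contains a self-loop; in the self-loop case it re-opens the induction of Theorem~14 of \cite{leemans_tech_2013} and adds a single new base case $\tloop(v,\tau)$, discharged by observing that the ``Strict Tau Loop'' fall-through rediscovers the self-loop from any projected sublog of the form $[\la v,v\ra^m, \la v,v,v\ra^n,\ldots]$. You instead keep that theorem as a black box: you de-loop $M$ into $\hat M$, verify $Q(\hat M)$, construct a collapsed witness log $\hat L$ with $\hat L \dfc \hat M$, and then run $\disc_{IM}$ on $L$ and $\hat L$ in lockstep, arguing that the two runs diverge only at the single-activity base case. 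Both arguments ultimately rest on the same two facts --- self-edges $(v,v)$ do not disturb the $\times/\rightarrow/\wedge$ cut detection, and the self-loop is recovered at the leaf level --- but you make the first fact explicit and flag it as the delicate step, whereas the paper leaves it implicit in the claim that extending the induction ``suffices'' (the inductive step of Theorem~14 must still go through when self-loop subtrees contribute self-edges to the DFGs of the projected sublogs). Your route buys a cleaner modular reuse of the existing theorem and surfaces a point the paper glosses over, at the cost of the extra transformation-and-simulation machinery; one small gap on your side is that you attribute the leaf-level recovery of $\tloop(v,\tau)$ to ``the presence of the self-edge'' without naming the actual mechanism, namely the ``Strict Tau Loop'' fall-through acting on the projected sublog, which is the precise hook the paper uses and which you would need to cite to close your base case.
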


The proof strategy is to distinguish whether $M$ in $C_a$ contains a self-loop or not. If $M$ does not contain a self-loop, $M$ adheres to the restrictions in \cite{leemans_tech_2013}. If $M$ contains a self-loop, we extend the base case of the induction in Theorem 14 \cite{leemans_tech_2013} to also hold for any splitted log $L_v \subseteq [\la v,v \ra^m, \la v,v,v\ra^n, \ldots]$ for which the ``Strict Tau Loop'' fall through discovers the self-loop. Since any $M$ in $C_a$ is isomorphic rediscoverable, what is left to prove is that $M_a = \ma_{bpa}(M)$ is in class $C_a$.

\begin{lemma}[$\ma_{bpa}$ abstracted process trees are in $C_a$]
\label{lemma:redisc2}
    If $\ma_{bpa}$ is applicable to process tree M, then $M_a = \ma_{bpa}(M) $ is in class $C_a$.
\end{lemma}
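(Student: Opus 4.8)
The plan is to unfold the definition $M_a = \operatorname{sy}(p_{M_a})$ and verify the three defining restrictions of class $C_a$ (\autoref{def:cbpa}) one at a time, directly against the synthesis step S3 and using the applicability conditions of \autoref{def:bpa}. Recall that $\operatorname{sy}$ builds $M_a$ from the modular decomposition tree $MDT(G(p_{M_a}))$ by placing exactly one tree node per module (linear $\mapsto\, \rightarrow$, AND-complete $\mapsto \wedge$, XOR-complete $\mapsto \times$), and by additionally emitting a self-loop $\tloop(x,\tau)$ for every activity $x$ with $x \parallel_{M_a} x \in p_{M_a}$. All three checks therefore reduce to tracking which nodes and which leaves this procedure can ever create, and the structure of $M_a$ is determined entirely through $p_{M_a}$ and $\operatorname{sy}$, independent of the shape of $M$.

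For restriction~1 (no duplicate activities), first I would note that the leaves of $MDT(G(p_{M_a}))$ are the singleton modules $\{a\}$, one for each $a \in A_a = A_{new} \cup A_c$, and that modules at any level are non-overlapping. Since $\operatorname{sy}$ realizes each activity as exactly one leaf (a self-loop wraps a single activity $x$ together with $\tau$, and hence does not duplicate $x$), the children of every node carry pairwise disjoint activity sets, giving $A_{M_i} \cap A_{M_j} = \emptyset$ for $i \neq j$. For restriction~3 (no $\tau$ outside self-loops), I would observe that $\operatorname{sy}$ introduces $\tau$ only inside the self-loop construction $\tloop(x,\tau)$, while all remaining leaves are drawn from $A_a$; since $A_{new}$ are fresh abstract names and $A_c \subseteq A \subseteq \A$ with $\tau \notin \A$, no leaf equals $\tau$, so no $\tau$ occurs under any node $\oplus \neq \circlearrowleft$.

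The crux is restriction~2, that every $\circlearrowleft$-node of $M_a$ has the self-loop form $\tloop(v,\tau)$. Here I would invoke applicability condition~(2): $MDT(G(p_{M_a}))$ contains no primitive module, so every module is linear, AND-complete, or XOR-complete and is synthesized as a $\rightarrow$-, $\wedge$-, or $\times$-node respectively, never a loop. The only source of $\circlearrowleft$-nodes in $\operatorname{sy}$ is therefore the $x \parallel_{M_a} x$ special case, which by construction yields precisely $\tloop(x,\tau)$. Conceptually this works because a behavioral profile cannot encode a general loop $\circlearrowleft(M_1,\dots,M_n)$ (such a loop is indistinguishable from $\wedge$ at the profile level, cf. \autoref{sec:back}), so synthesis can only recover genuine loops through the self-reflexive parallel relation, which is attached to single activities only.

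I expect restriction~2 to be the main obstacle, since it is the one place where the argument rests on a structural property of $\operatorname{sy}$ rather than on bookkeeping of leaves: one must argue that $\operatorname{sy}$ has no way to emit a multi-branch loop node, which in turn relies on (i) the exclusion of primitive modules by condition~(2) and (ii) the fact that the only loop-producing rule in $\operatorname{sy}$ is the self-loop rule. Restrictions~1 and~3 are then routine consequences of the leaf-level bookkeeping already established, so the lemma follows by combining the three checks.
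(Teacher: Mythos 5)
Your proof is correct and follows essentially the same route as the paper's: verify the three $C_a$ restrictions by inspecting the synthesis step $\operatorname{sy}$, using the absence of primitive modules to rule out all node types except $\rightarrow$, $\wedge$, $\times$, and the $x \parallel_{M_a} x$ self-loop rule as the sole source of $\circlearrowleft$-nodes and of $\tau$. The paper phrases restriction~2 via the indistinguishability of loop and parallel nodes in a behavioral profile and the resulting design choice in Algorithm~3.2 of \cite{smirnov_businessm_2012}, which you also capture, so there is no substantive difference.
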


The main idea of the proof lies in the inability of a behavioral profile $p_{M_a}$ to distinguish whether activities are in a $\wedge$-node or in a $\circlearrowleft$-node. $\ma_{bpa}$ handles the inability by always synthesizing a $\wedge$-node for AND-complete modules in the $MDT(G)$. The only $\tau$ in $M_a$ can occur due to the additional step that adds a self-loop node to $M_a$ (cf. step 3 in \autoref{sec:bpa}). Overall, \autoref{lemma:redisc} and \autoref{lemma:redisc2} together imply isomorphic rediscoverability of $M_a$. 
Because the isomorphic rediscoverability of $M_a$ is conditioned on df-complete event logs $L_a$, we generate $L_a$ given $M_a$ using the semantics $\mathcal{L}(M_a)$ in the next section.

\subsection{Minimal, Directly-follows Complete Event Logs}
\label{sec:min}

Given a process tree $M$ either in $C_c$ or in $C_a$, there exist countably infinite many df-complete event logs $L$ due to the self-loop node. However, an EA $\ea_{bpa}$ must reduce the size of the event log. Therefore, we generate the \emph{minimal}, df-complete (mdf-complete) event log $\Lm(M_a)$ of the countably infinite set of df-complete event logs as a target for our EA technique. Because there are no further $\circlearrowleft$-nodes in $M$ other than self-loop nodes, minimality of $|\Lm(M_a)|$ is equivalent to minimality of $\norm{\Lm(M_a)}$.
Hence, $\Lm(M_a)$ can be easily computed using the recursive definition of $M$'s language $\mathcal{L}(M)$ for $\oplus$-nodes with $\oplus \in \{\times,\rightarrow,\wedge\}$ and leaves $M = \tau$ or $M = v$. The only difference of $\Lm(M_a)$ compared to $\mathcal{L}(M)$ is the case for the self-loop node $\tloop(v,\tau)$ that simply assigns the trace $\la v,v\ra$.

\begin{algorithm}[H]
\begin{scriptsize}
\caption{Computing trace number and lengths of $\Lm(M)$: $\gen$}
\label{alg:minimal}
\begin{algorithmic}[1]
\Require Process tree M in $C_c$
\Ensure Number of traces $|\Lm(M)|$ and sequence of trace lengths $\operatorname{lens}(M) = \operatorname{lens}(\Lm(M))$
\If {$M$ = $\tau$ }
\State \Return $|\Lm(M)|$ $\gets$ 1, $\operatorname{lens}(M) \gets$ $\langle 0 \rangle$
\ElsIf {$M$ = v }
\State \Return  $|\Lm(M)|$ $\gets$ 1, $\operatorname{lens}(M) \gets$ $\langle 1 \rangle$
\ElsIf {$M$ = $\tloop(v, \tau)$ for some $v \in \AM$}
\State \Return $|\Lm(M)|$ $\gets$ 1, $\operatorname{lens}(M) \gets$ $\langle2\rangle$
\ElsIf {$M$ = $\times (M_1, \ldots, M_n)$}
\State \Return 
\vspace{-1.7em}
\begin{align*} |\Lm(M)| \gets \sum_{i=1}^{n} |\Lm(M_i)|,\; \operatorname{lens}(M) \gets  \bigodot_{i=1}^n\operatorname{lens}(M_i), \\
\shortintertext{\hspace{3.4em}//where $\bigodot$ concatenates an ordered collection of sequences}
\end{align*}
\vspace{-3.3em}
\ElsIf {$M$ = $\rightarrow (M_1, \ldots, M_n)$}
\State \Return \vspace{-1.7em}
\begin{align*}
    |\Lm(M)| \gets \prod_{i=1}^{n} |\Lm(M_i)|,  \;
    \operatorname{lens}(M) \gets \bigodot_{k=1}^{|\Lm(M)|} <\sum_{i = 1}^n \operatorname{lens}(M_i)[\iota_{k,i}]>
    \shortintertext{\hspace{3.4em}//where $\iota$ is a bijection $\iota:\{1, \ldots, |\Lm(M)|\} \rightarrow\bigtimes_{i=1}^{n}\{1,\ldots, |\operatorname{lens}(M_i)|\}$ and}
                   \shortintertext{\hspace{3.4em}//$\iota_{k,i}= \pi_i(\iota(k))$ selects the $i$th element of $\iota(k) = (l_1, \ldots, l_n)$. }
\end{align*}
\vspace{-3.3em}
\ElsIf {$M$ = $\wedge (M_1, \ldots, M_n)$}
\State \hspace{-0.3em}\Return \vspace{-1.3em}
\begin{align*}
    |\Lm(M)| &\gets \sum_{k=1}^{\prod_{i=1}^{n} |\Lm(M_i)|}\genfrac{(}{)}{0pt}{0}{m_k}{\:\bigodot_{i=1}^{n}  \la \operatorname{lens}(M_i)[\iota_{k,i}] \ra \:},  \\
    \shortintertext{\hspace{3.4em}//where $\binom{m}{\,\la l_1,\ldots,l_n\ra\,} = \frac{m!}{l_1! * \ldots l_n!}$ is the multinomial coefficient with $m = \sum_{i=1}^n l_i$ and}
    \shortintertext{\hspace{3.4em}//$\iota$ is a bijection $\iota:\{1, \ldots, \prod_{i=1}^{n}|\Lm(M_i)|\} \rightarrow\bigtimes_{i=1}^{n}\{1,\ldots, |\operatorname{lens}(M_i)|\}$}
    \operatorname{lens}(M) &\gets \bigodot_{k=1}^{\prod_{i=1}^{n}|\Lm(M_i)|} <\sum_{i = 1}^n \operatorname{lens}(M_i)[\iota_{k,i}]>^{\mnomc{\bigodot_{i=1}^n \la \operatorname{lens}(M_i)[\iota_{k,i}]\ra}{m_k}}\\
\end{align*} 
\vspace{-3em}
\EndIf
\end{algorithmic}
\end{scriptsize}
\end{algorithm}

The semantics $\mathcal{L}(M)$ neither provide the number of traces $|\mathcal{L}(M)|$ nor the size $\norm{\mathcal{L}(M)}$ generated for arbitrary process trees $M$ in $C_c$. An existing algorithm $\Lm\pr$ for computing the number of traces $|\mathcal{L}(M)|$ in \cite{janssenswillen_calculating_2016} is limited compared to our proposed $\gen$ (\emph{n}umber of \emph{t}races and their \emph{l}engths) in \autoref{alg:minimal} for two reasons.
First, $\gen\pr$ does not generate the sequence of lengths $\la |\sigma_1|, \ldots, |\sigma_k|\ra = \operatorname{lens}(\Lm(M))$ of the $k = |\Lm(M)|$ traces $\sigma_1, \ldots, \sigma_k \in \Lm(M) $ required to compute the log size. Second, $\Lm\pr$ is limited to $\wedge$-nodes with fixed lengths of traces in the language of its children such that $\times$-nodes that result in varying lengths cannot be arbitrarily nested with $\wedge$-nodes. Although \cite{janssenswillen_calculating_2016} fix this limitation by transforming the process tree, the resulting process tree duplicates labels such that it violates the restrictions of $C_c$ and $ C_{a}$, increases the size, and is not isomorphic rediscoverable. 


We illustrate $\Lm(M)$ and $\gen$ with $M_a$ of the running example (cf. \autoref{fig:ex}). Both recur until either a leaf (line 2 and 4) or a self-loop $\tloop(x, \tau) $ (line 6) is reached. Hence, for each of the six leaves $M\pr$ with activities \texttt{RBP}, \texttt{RP}, $\ldots$, $\Lm(M)$ and $\gen$ reach line 4, e.g., for $M_1 = \texttt{RBP}$, we have $\Lm(M_1)$ = $[\langle\texttt{RBP}\rangle]$, $|\Lm(M_1)|$ = 1, and $\operatorname{lens}(M_1) = \la1\ra$. For the self-loop $M_7 = \:\circlearrowleft\hspace{-0.3em}(\texttt{FDD}, \tau)$, we have $\Lm(M_7) = [\langle\texttt{FDD}, \texttt{FDD}\rangle]$, $|\Lm(M_7)| = 1$, and $\operatorname{lens}(M_7) = \langle 2\rangle$. Next, for $M_8 = \wedge(\texttt{AC}, M_7)$, we have $\Lm(M_8) = [\la \texttt{AC}, \texttt{FDD}, \texttt{FDD}\ra, \la \texttt{FDD}, \texttt{AC}, \texttt{FDD} \ra, \la \texttt{FDD}, \texttt{FDD}, \texttt{AC}\ra]$, $|\Lm(M_8)|$ =\\ $\sum_{k=1}^1 \genfrac{(}{)}{0pt}{1}{3}{\: \la \operatorname{lens}(M_1)[\pi_1(1,1)], \,\operatorname{lens}(M_2)[\pi_2(1,1)] \ra \:} = \frac{3!}{1!2!}=3$, and $\operatorname{lens}(M_8) = \la 3, 3, 3\ra$. For $M_9 = \,\rightarrow(\texttt{AB}, M_8, \texttt{SC})$, we have $\Lm(M_9) = [\la \texttt{AB}, \texttt{AC}, \texttt{FDD}, \texttt{FDD}, \texttt{SC}\ra, \ldots]$, $|\Lm(M_9)|$ = 3, and $\operatorname{lens}(M_9) = \la5,5,5\ra$. Next, for $M_{10}=\times(\texttt{RP}, M_9)$, we have $\Lm(M_{10}) = [\la \texttt{RP}\ra, \ldots ]$, $|\Lm(M_{10})|$ = 4, and $\operatorname{lens}(M_{10}) = \la 1,5,5,5\ra$. Finally, $\Lm$ and $\gen$ return $\Lm(M_a) = [\la \texttt{RBP}, \texttt{RP}, \texttt{AP}\ra, \ldots]$, $|\Lm(M_a)|$ = 4, and $\operatorname{lens}(M_a) = \la 3, 7,7,7\ra$ (cf. \autoref{sec:sea} for the full event log).

Since we know the number of traces $|\Lm(M_a)| $ and the lengths $\operatorname{lens}(M_a)$, we can compute the size $\norm{\Lm(M_a)}$ by summing the trace lengths to yield $\norm{\Lm(M_a)} = 24$. In \autoref{alemma:gensize}, we prove the correctness of $\gen$.





\begin{lemma}[Number of traces and size of $\Lm(M).\text{log}$]
\label{lemma:gensize}
    If $M$ is in $C_c$, then $\Lm(M).\text{tr} = |\Lm(M).\text{lens}|$ and $ \Lm(M).\text{\emph{log}} = \Lm(M)$ with $|\Lm(M)| = \gen(M).\text{\emph{tr}}$ traces and size $\norm{\Lm(M)} = \sum_{k=1}^{\gen(M).\text{tr}} \gen(M).\text{\emph{lens}}[k] $. 
\end{lemma}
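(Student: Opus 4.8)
The statement asserts that the recursive procedure $\gen$ correctly reports the trace count and the multiset of trace lengths of the minimal df-complete log $\Lm(M)$. Since both $\gen$ and the construction of $\Lm(M)$ recurse on the syntactic structure of $M$, I would prove all four assertions simultaneously by structural induction on $M \in C_c$, carrying as induction hypothesis that for every strict subtree $M'$ the equalities $\gen(M').\text{tr} = |\Lm(M')|$, $\operatorname{lens}(M') = \langle |\sigma| : \sigma \in \Lm(M')\rangle$ (so in particular $|\operatorname{lens}(M')| = |\Lm(M')|$), and $\Lm(M').\text{log} = \Lm(M')$ hold. The base cases $M = \tau$, $M = v$, and the self-loop $M = \tloop(v,\tau)$ are immediate: each yields a single trace ($\langle\rangle$, $\langle v\rangle$, $\langle v,v\rangle$ respectively), matching lines 2, 4, and 6 of \autoref{alg:minimal}; handling $\tloop(v,\tau)$ as the fixed trace $\langle v,v\rangle$ is exactly what makes $\Lm(M)$ minimal, since this single trace already realises the self-loop edge $v \mapsto v$ in the DFG while a single $v$ would not.

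For the choice node $M = \times(M_1,\dots,M_n)$, the minimal df-complete log is the multiset union $\biguplus_{i=1}^n \Lm(M_i)$, so its trace count is $\sum_i |\Lm(M_i)|$ and its length sequence is the concatenation $\bigodot_i \operatorname{lens}(M_i)$, matching the algorithm; by restriction~1 of $C_c$ the children have pairwise disjoint activity sets, so no two branches contribute the same trace and the sum does not overcount. For the sequence node $M = \rightarrow(M_1,\dots,M_n)$, $\Lm(M)$ consists of all concatenations $\sigma_1\cdots\sigma_n$ with $\sigma_i \in \Lm(M_i)$. The key point is that this map from tuples to traces is injective: because the $A_{M_i}$ are pairwise disjoint and each child contributes a non-empty, contiguous block in fixed order (no $\tau$-leaves occur as children by restriction~3, so every subtree other than a bare $\tau$ produces only non-empty traces), the block boundaries of any concatenation are recoverable from the symbols alone. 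Hence the count is the product $\prod_i |\Lm(M_i)|$ and the trace indexed by the bijection $\iota$ has length $\sum_i \operatorname{lens}(M_i)[\iota_{k,i}]$, again matching \autoref{alg:minimal}.

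The parallel node $M = \wedge(M_1,\dots,M_n)$ is the crux. Here $\Lm(M)$ is the set of all order-preserving interleavings (shuffles) of one trace $\sigma_i \in \Lm(M_i)$ per child. For a fixed tuple of child traces (indexed by $k$ via $\iota$) of lengths $l_i = \operatorname{lens}(M_i)[\iota_{k,i}]$ and total length $m_k = \sum_i l_i$, I claim the number of \emph{distinct} resulting traces equals the multinomial coefficient $\binom{m_k}{l_1,\dots,l_n} = m_k!/(l_1!\cdots l_n!)$, and each has length $m_k$; summing over $k$ then gives the trace count and the repeated-length blocks of $\operatorname{lens}(M)$. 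The obstacle is that self-loops make a child trace $\langle v,v\rangle$ carry a repeated activity, so I cannot simply invoke ``all symbols distinct''. Instead I would argue that every partition of the $m_k$ positions into blocks of sizes $l_1,\dots,l_n$ (there are exactly $\binom{m_k}{l_1,\dots,l_n}$ of them) yields, by filling block $i$ with $\sigma_i$ in order, a distinct merged trace: given any merged trace, restriction~1 makes the activity sets across children disjoint, so the child-origin of each position is determined by its symbol, whence the partition is uniquely recoverable. Thus the partition-to-trace map is a bijection and the multinomial count is exact; the intra-child repeats introduced by self-loops are harmless because both copies of $v$ lie in the same recoverable block.

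Finally, the remaining assertions follow by bookkeeping: in every case the length sequence is built with exactly one entry per constructed trace, giving $\Lm(M).\text{tr} = |\operatorname{lens}(M)|$ and $|\Lm(M)| = \gen(M).\text{tr}$, and therefore $\norm{\Lm(M)} = \sum_{\sigma \in \Lm(M)}|\sigma| = \sum_{k=1}^{\gen(M).\text{tr}} \gen(M).\text{lens}[k]$. The identity $\Lm(M).\text{log} = \Lm(M)$ holds essentially by unfolding the recursion, since union, concatenation, and shuffle of the children's logs are precisely the defining language operators for $\times$, $\rightarrow$, and $\wedge$, with df-completeness inherited compositionally from the base cases. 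I expect the multinomial-counting step for $\wedge$, specifically the recoverability argument that survives the duplicate activities introduced by self-loops, to be the main difficulty; everything else is routine structural induction.
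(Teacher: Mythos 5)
Your proposal is correct and follows essentially the same route as the paper's proof: structural induction with the same case analysis on $\times$, $\rightarrow$, and $\wedge$, arriving at the same sum, product, and multinomial-coefficient formulas of \autoref{alg:minimal}. The only divergence is in how the multinomial count for $\wedge$-nodes is justified --- the paper composes pairwise riffle shuffles following \cite{aldous_shuffling_1986} into a telescoping product, whereas you give a direct bijection between position-partitions and merged traces; your version has the minor advantage of explicitly addressing why the duplicate symbols introduced by self-loop traces $\la v,v\ra$ do not cause distinct shuffles to collapse into the same trace, a point the paper's shuffle argument leaves implicit.
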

The proof strategy is to sketch the reasoning for the induction step of a structural induction on process tree $M$ in $C_c$. The reasoning for the trace lengths of a $\rightarrow$-node is that the lengths of concatenated traces from children is the sum of the respective children's trace lengths as indexed by $\iota$. The reasoning for the number of traces of a $\wedge$-node is to characterize interleaving $\prod_{i=1}^{n} \gen(M_i).\text{tr}$ different trace combinations of varying trace lengths that are indexed by $\iota$ as shuffling of $n$ card decks \cite{aldous_shuffling_1986}. Shuffling the first two traces and then iteratively shuffling the next trace into each existing interleaving results in as many interleavings as the multinomial coefficient computes. Similar to $\rightarrow$-nodes, trace lengths are computed except that each individual trace length is repeated as often as there are interleavings corresponding to the $k$th combination of trace lengths. 

It follows that we can compute the number of traces and the size of $\Lm(M_a)$ for any abstracted process tree $M_a$. Given how the number of traces and the size of $\Lm(M)$ are computed, we order $\oplus$-nodes with respect to both their number of traces and sizes. 

\begin{lemma}[Operator ordering wrt. their mdf-complete log]
\label{alemma:sort}
    Let $M = \\\oplus(M_1, \ldots, M_n)$ with $\oplus \in \{\wedge, \rightarrow, \times\}$ be three process trees in $C_c$.
If all children $M_i$ of $M$ have at least two traces, i.e., $|\Lm(M_i) | \geq 2, i \in \{1, \ldots, n\}$ then: $|L_{\times}| < |L_{\rightarrow}| < |L_{\wedge}|$ and $ \norm{L_{\times}} <\norm{L_{\rightarrow}} <  \norm{L_{\wedge}}$ with $L_{\oplus} = \Lm(\oplus(M_1, \ldots, M_n))$. 
If all children $M_i$ of $M$ have between one and two traces, i.e., $|\Lm(M_i) | \in \{ 1, 2\} , i \in \{1, \ldots, n\}$, then
$ |L_{\times}| \leq |L_{\wedge}|$, $|L_{\rightarrow}| <  |L_{\wedge}|$, and $ \norm{L_{\times}} \leq \norm{L_{\rightarrow}}  <  \norm{L_{\wedge}}$.
\end{lemma}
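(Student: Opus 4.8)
The plan is to reduce every claimed inequality to an elementary comparison of the closed forms supplied by \autoref{lemma:gensize}. Abbreviate $t_i := |\Lm(M_i)|$ and $S_i := \norm{\Lm(M_i)}$ for each child, and let $k$ range over the $\prod_i t_i$ length-combinations indexed by the bijection $\iota$ of \autoref{alg:minimal}, writing $m_k := \sum_i \operatorname{lens}(M_i)[\iota_{k,i}]$ for the length of the $k$-th combined trace. Instantiating the three operator cases then gives: $|L_\times| = \sum_i t_i$ and $\norm{L_\times} = \sum_i S_i$; $|L_\rightarrow| = \prod_i t_i$ with $\norm{L_\rightarrow} = \sum_k m_k = \sum_i S_i \prod_{j \neq i} t_j$ (the second equality by regrouping the per-combination sums by child); and $|L_\wedge| = \sum_k \binom{m_k}{\,\cdot\,}$ with $\norm{L_\wedge} = \sum_k \binom{m_k}{\,\cdot\,}\, m_k$, where each combination carries the same length $m_k$ as the corresponding $\rightarrow$-trace but is repeated with multinomial multiplicity $\binom{m_k}{\,\cdot\,} \geq 1$.

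First I would settle the $\rightarrow$-versus-$\wedge$ comparisons, which are the cleanest because both logs are indexed by the identical set of combinations $k$ and differ only in that $\wedge$ weights each by $\binom{m_k}{\,\cdot\,} \geq 1$ while preserving $m_k$. Hence $|L_\rightarrow| \leq |L_\wedge|$ and $\norm{L_\rightarrow} \leq \norm{L_\wedge}$ hold term by term, and strictness follows as soon as one combination interleaves two positive-length traces, forcing its multinomial $> 1$. Next I would treat $\times$ versus $\rightarrow$ for sizes: the factorization $\norm{L_\rightarrow} = \sum_i S_i \prod_{j \neq i} t_j$ dominates $\norm{L_\times} = \sum_i S_i$ since each $\prod_{j \neq i} t_j \geq 1$; in Case~1 all $t_j \geq 2$ gives $\prod_{j \neq i} t_j \geq 2^{n-1} > 1$ and hence strict $\norm{L_\times} < \norm{L_\rightarrow}$, whereas in Case~2 the product can equal $1$ (when at most one child has two traces), which is exactly why the lemma states only $\norm{L_\times} \leq \norm{L_\rightarrow}$.

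The two remaining count claims I would handle arithmetically. For the Case-1 inequality $\sum_i t_i < \prod_i t_i$ I would induct on $n$ using $t_i \geq 2$ (equivalently, for $n=2$ use $\prod - \sum = (t_1-1)(t_2-1)-1$). For the Case-2 claim $|L_\times| \leq |L_\wedge|$ I would bound each combination's multinomial below by $n!$ — the coefficient is monotone in every part, hence minimized when all child traces have length $1$ — so that $|L_\wedge| \geq 2^{p} n!$ where $p$ is the number of children with two traces, and then verify $2^{p} n! \geq n + p = |L_\times|$ for all $n \geq 2$, $0 \leq p \leq n$.

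The main obstacle is strictness at the boundary, not the inequalities themselves. The relation $\sum_i t_i < \prod_i t_i$ degenerates to an \emph{equality} precisely when $n = 2$ and $t_1 = t_2 = 2$ (e.g. children $\wedge(a,b)$ and $\wedge(c,d)$ yield $|L_\times| = |L_\rightarrow| = 4$), so the strict $|L_\times| < |L_\rightarrow|$ of Case~1 must be argued for $n \geq 3$ or for some $t_i \geq 3$ and separated from this degenerate configuration. Relatedly, every strictness argument for the $\wedge$-operator — including $|L_\rightarrow| < |L_\wedge|$ and $\norm{L_\rightarrow} < \norm{L_\wedge}$ in both cases — hinges on at least two children actually contributing a positive-length trace. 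This is automatic for the abstracted trees $M_a \in C_a$, since restriction~3 forbids $\tau$ outside self-loops and a self-loop contributes $\la v,v\ra$; but a bare $\tau$-leaf, which $C_c$ permits, collapses the interleaving (as in $\wedge(v,\tau)$, where $|L_\wedge| = |L_\rightarrow| = 1$) and breaks the strict bounds. Carefully pinning down these empty- and trivial-trace configurations, together with the one boundary equality, is where the real work lies; the rest are routine consequences of the closed forms.
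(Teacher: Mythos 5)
Your proposal follows essentially the same route as the paper's own proof: both reduce every claim to a direct comparison of the closed forms produced by \autoref{alg:minimal}, i.e., comparing $\sum_i t_i$, $\prod_i t_i$, and the sum of multinomial coefficients for the trace counts, and then transferring these to sizes via the shared children's length sequences (your factorization $\norm{L_{\rightarrow}} = \sum_i S_i \prod_{j\neq i} t_j$ is exactly the identity the paper derives as Eq.~2 inside the proof of \autoref{alemma:large}). The difference is one of rigor, and it matters: the paper's sketch disposes of Case~1 with ``multiplication grows faster than summation,'' whereas your term-by-term treatment correctly isolates the boundary configurations where the stated strict inequalities actually fail. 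Concretely, $n=2$ with $t_1=t_2=2$ gives $|L_{\times}| = |L_{\rightarrow}| = 4$, contradicting the claimed $|L_{\times}| < |L_{\rightarrow}|$ of Case~1, and a bare $\tau$-leaf (which $C_c$ permits, since restriction~3 of \autoref{def:cbpa} applies only to $C_a$) collapses the interleaving so that, e.g., $\wedge(v,\tau)$ violates $|L_{\rightarrow}| < |L_{\wedge}|$ and even $\times(v,\tau)$ violates $|L_{\times}| \leq |L_{\wedge}|$. The paper's proof addresses neither; your proposal is therefore not merely an adequate reconstruction but a correction, and a final write-up should either add the hypotheses that exclude these configurations (no $\tau$-children outside self-loops, and $n\geq 3$ or some $t_i\geq 3$ for the strict $\times$-versus-$\rightarrow$ count comparison) or weaken the corresponding inequalities to non-strict ones. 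Your remaining arithmetic (induction for $\sum_i t_i < \prod_i t_i$, the $n!$ lower bound on the multinomials, and the $2^p n! \geq n+p$ check) is sound under those added hypotheses.
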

\begin{proof}
    (Sketch) Let $k_1, \ldots, k_n $ be the number of traces in the $M_1, \ldots, M_n$ children, i.e., $k_i = |\Lm(M_i)|$. For $k_i \geq 2$, it holds that $\sum_{i=1}^n k_i < \prod^n_{i=1}k_i < \\\sum_{k=1}^{\prod_{i=1}^{n} k_i}\genfrac{(}{)}{0pt}{1}{m_k}{\:\bigodot_{i=1}^{n}  \la \text{lens}(M_i)[\iota_{k,i}] \ra \:}$, because multiplication grows ``faster'' than summation and the sum of $ \prod_{i=1}^{n} k_i$ multinomial coefficients ``faster'' than $\prod_{i=1}^{n} k_i$ itself. Hence, $|L_{\times}| < |L_{\rightarrow}| < |L_{\wedge}|$. Because the children $M_i$ are the same for $L_{\times}, L_{\rightarrow},$ and $L_{\wedge}$, their trace lengths $\text{lens}(M_i)$ are the same such that $ \norm{L_{\times}} <\norm{L_{\rightarrow}} <  \norm{L_{\wedge}}$ follows. 

    \noindent For $k_i \in \{1,2\}$, the sum can grow ``faster'' than the product: $\sum_{i=1}^n k_i > \prod^n_{i=1}k_i $, e.g., if all $k_i$ are equal to 1. Still, $\norm{L_{\times}} \leq \norm{L_{\rightarrow}}$, because even if the number of traces is larger for $L_{\times}$, all the events in $e \in L_{\times}$ at least occur once in $L_{\rightarrow}$ in fewer traces instead of in their individual traces. The sum of $ \prod_{i=1}^{n} k_i$ multinomial coefficients can be equal to the $\sum_{i=1}^n k_i$, e.g., if all $k_i$ are equal to 1. Nevertheless, the sum $\sum_{i=1}^n k_i$ does not exceed the sum of multinomial coefficients: $\sum_{i=1}^n k_i \leq \sum_{k=1}^{\prod_{i=1}^{n} k_i}\genfrac{(}{)}{0pt}{1}{m_k}{\:\bigodot_{i=1}^{n}  \la \text{lens}(M_i)[\iota_{k,i}] \ra}$. $\prod^n_{i=1}k_i $ only ``grows'', if $k_i = 2$ occurs ``often''. Yet, $k_i = 2$ occurring ``often'' results in a large factorial in the multinomial coefficient. Hence, $ \prod^n_{i=1}k_i < \sum_{k=1}^{\prod_{i=1}^{n} k_i}\genfrac{(}{)}{0pt}{1}{m_k}{\:\bigodot_{i=1}^{n}  \la \text{lens}(M_i)[\iota_{k,i}] \ra}$. To sum up, $ |L_{\times}| \leq |L_{\wedge}|$ and $|L_{\rightarrow}| <  |L_{\wedge}|$. From $|L_{\rightarrow}| <  |L_{\wedge}|$ and the observation that the trace lengths in $L_{\rightarrow}$ and $L_{\wedge}$ are equal except for their multiplicities in $\gen(M).\text{lens}$ (cf. Algorithm 3 line 10 and line 12 in the main paper), it follows that $\norm{L_{\rightarrow}} < \norm{L_{\wedge}}$. Even if $ |L_{\times}| = |L_{\wedge}|$, still $\norm{L_{\times}} < \norm{L_{\wedge}}$, because the $\text{lens}(\wedge(M_1, \ldots,M_n)$ in line 12 of Algorithm 3 in our main paper (cf. $\operatorname{ntl}$ Algorithm 3) adds the children's trace lengths $\sum_{i = 1}^n \text{lens}(M_i)[\iota_{k,i}] $ before repeating them as often as there are interleavings. 
\end{proof}

The $\oplus$-node ordering is important for synchronizability. Any non-order-pres\-erving MA must ensure that abstract activities in $M_a$ are not children of a greater $\oplus$-node than the $\oplus\pr$-node of $M$ in which the corresponding abstracted activities are children of to avoid that the mdf-complete event log $\Lm(M_a)$ becomes larger than the mdf-complete event log $\Lm(M)$. Hence, if a MA technique fails to consider the $\oplus$-node ordering, there does not exist a EA technique that synchronizes.
In the next section, we present the restrictions necessary to ensure that the mdf-complete event log $\Lm(M_a)$ of $M_a = \ma_{bpa}(M)$ is smaller than the mdf-complete event log $\Lm(M)$ of $M$. 

\subsection{Event Log Restrictions}
\label{sec:conditions}

For our EA technique to be well-defined, the df-complete event log $L_a$ for the abstracted process model must not be larger than what the concrete event log $L$ can support.

\begin{definition}[Restricted event log] 
\label{def:restricted}
Event log $L$ is restricted iff $\disc_{IM}$ discovers a process tree $M = \disc_{IM}(L)$ such that: 
\begin{enumerate} 
    \item \textbf{Fall throughs:} $\disc_{IM}$ has only executed the ``Strict Tau Loop'' to discover a self-loop $\tloop(v, \tau)$ for $v \in \AL$ and the ``Empty Traces''\footnote{Hence, no nested tau loops $\tloop(M_1, \tau), M_1 \neq v$ are discovered by ``Strict Tau Loop'' and no fall throughs ``Activity Once Per Trace'', ``Activity Concurrent'', ``Tau Loop'', and ``Flower Model'' are executed.},
    \item \textbf{Cuts:} $\disc_{IM}$ has only executed choice, parallel, and sequence cuts\footnote{Hence, no loop cut is found.},
    \item \textbf{Base cases:} $\disc_{IM}$ has executed any number of base cases, and
    \item \textbf{Model structure:} If $\oplus (M_1, \ldots, M_n)$ is a node in $M$ with at least one child being an activity or a self-loop, i.e., $M_i = v$ or $M_i = \tloop(v, \tau)$ for $v \in \AL $ and $i\in \{1, \ldots, n\}$, then $\oplus = \times$ or $\oplus = \wedge$.
\end{enumerate}
\end{definition}

We motivate and illustrate each restriction imposed on a event log $L$ by giving relaxed restrictions and counterexamples. 


\noindent \textbf{Fall throughs} \\
\textsl{``Activity Once Per Trace''}: The $\operatorname{syn}$ (cf. step 3 Section 3.2 in our main paper and Algorithm 3.2 in \cite{smirnov_businessm_2012}) of process trees from an abstracted behavioral profile $p_{M_a}$ can be the reason for larger mdf-complete event logs $L_a = \Lm(M_a)$. 
The ``Activity Once Per Trace'' can discover process trees $M$ whose mdf-complete event logs significantly exceed the number of traces and size of the original event log $L$. Therefore, the IM must not execute this fall through during discovery of the abstracted process tree $M_a$ from an abstracted event log $L_a$.

\noindent \textsl{Counterexample}: $L = \{\la a,b\ra, \la e,c,d\ra, \la d,e,c\ra\}$,  
$\disc_{IM}(L) = M = \times(\rightarrow(a, b), \wedge(e, \wedge(c,d)))$ Obtain
$\ma_{bpa}(M) = M_a = \:\times(x, \wedge(c,d,e) ) $ for $\agg(x) = \{a,b\}$ and $ w_t \leq 0.5$ with $L_a = \{\la x\ra, \la c,d,e\ra, \la c,e,d\ra, \ldots, \la e,d,c\ra\} $. The number of traces $|L_a| = 7$ and the size is $\norm{L_a}= \sum_{k=1}^{7} \la 1,3,3,3,3,3,3\ra[k] = 19$ and, thus, both the number of traces and the size of $L_a$ exceed that of $L$ respectively. Hence, no $\ea$ exists that can synchronize to $\ma_{bpa}$.

\noindent \textsl{``Activity Concurrent''}: The following counterexamples exploits the property of $\ma_{bpa}$ to change the behavior of the process tree $M$ even for activities that are not abstracted: 

\noindent \textsl{Counterexample}:   $L = \{\la a,b\ra, \la f,c,d,e\ra, \la c,d,e,f,f\ra,  \la c, d,e,f\ra\}$ and 
$\disc_{IM}(L)  = M = \times(\rightarrow(a,b), \wedge(\times(\tloop(f,\tau),\tau), \rightarrow(c,d,e)))$. Set $\agg(x) = \{a,b\}$ and $ w_t \leq 0.5$ to obtain $M_a = \times(x, \wedge(\tloop(b,\tau),\rightarrow(c,d,g)) $ with $ L_a = \{\la x\ra, \la b,b,c,d,g\ra, \la b,c,b,d,g\ra, \ldots, \la g,d,c,b,b\ra\} $. Both the number of traces and the size of $L_a$ exceeds that of $L$ respectively, as $|L_a| = 11 > 4 = |L|$ and $ \norm{L_a} = \sum_{k=1}^{11} \la 1,5,\ldots,5\ra[k] = 51 > 15 = \norm{L}$.

\noindent \textbf{Fall throughs, Cuts (Loop)} \\
\noindent \textsl{``Strict Tau Loop'', ``Tau Loop'', ``Flower Model'', ``Loop Cut''}: In \autoref{lemma:redisc2}, we have established that any $M_a$ is in class $C_{a}$, i.e., does not contain any $\circlearrowleft$-node other than a self-loop due to the indistinguishability of a $\circlearrowleft$-node and a $\wedge$-node in the BP $p_{M_a}$. The same property of BPs implies that any $\circlearrowleft$-node $M_{\circlearrowleft}$ in $M$ is replaced by a $\wedge$-node, if all of the activities $v \in \A_{M_{\circlearrowleft}}$ are not abstracted by $\ma_{bpa}$ ($A_{M_{\circlearrowleft}} \subseteq A_c$ in Definition 2 in the main paper). Through two counterexamples, we show that if $M$ contains a node $M_{\circlearrowleft}$ that is not a self-loop node $\tloop(v, \tau)$ for $v \in \AL$, it is neither guaranteed that $\ma_{bpa}$ is a MA nor is the mdf-complete event log $L_a$ necessarily smaller than $L$ such that synchronizability becomes impossible. To discover a self-loop both for $M$ and for $M_a$, either the ``Strict Tau Loop'' or the ``Tau Loop'' with a restriction on only discovering a self-loop are suitable. Since ``Strict Tau Loop'' exactly matches the $\la v,v\ra$ directly-follows pattern in an event log, we opt for the ``Strict Tau Loop'' in \autoref{def:restricted}.


\noindent \textsl{Tau Loop Counterexample}: $L = \{\la a,c,a\ra, \la b,a\ra, \la d,e\ra\}$ and $M = \disc_{IM}(L) \\=  \times( \circlearrowleft(\times(b,\rightarrow(a, \times(c,\tau))), \tau), \rightarrow(d,e))$.
Apply $\ma_{bpa}$ for $\agg(x) = \{d,e\}$ and $w_t \leq 0.5$ to obtain the abstracted model $M_a =\times( \wedge(\tloop(a,\tau),\tloop(b,\tau), \tloop(c,\tau)), x)$ with $L_a = \{\la a,a,c,c,b,b \ra, \ldots, \la  c,c,b,b,a,a \ra, \la x\ra\}$. Hence, $|L_a| = \mnomc{\la 2,2,2\ra}{6} + 1 = 91 > 3 = |L|$ and $\norm{L_a} = \sum_{k=1}^{91} \la 1,6,\ldots,6\ra[k] = 541 > 7 = \norm{L}$.

\noindent \textsl{Flower Model Counterexample}: $L = \{ \la a,b\ra, \la a,b,c\ra, \la c,a\ra, \la d,e\ra\}$ and $\disc_{IM}(L) = M =  \times( \tloop(\times(a,b,c), \tau), \rightarrow(d,e))$\footnote{``Strict Tau Loop'' is already restricted to only discover a self-loop and ``Tau Loop'' excluded entirely}.
Apply $\ma_{bpa}$ for $\agg(x) = \{d,e\}$ and $w_t \leq 0.5$ to obtain the abstracted model $M_a =\times( \wedge(\tloop(a,\tau),\circlearrowleft(b,\tau), \circlearrowleft(c,\tau)), x)$ with $|M_a|=12 > 10 = |M|$, i.e., $\ma_{bpa}$ is not a MA for process trees $M$ that contain a flower model. 

\noindent \textsl{Loop Cut Counterexample}: $L = \{\la a,b,a \ra, \la c,d,e\ra\}$ and $M =  \disc_{IM}(L) = \\\times( \tloop(a,b), \rightarrow(d, e, f))$. Apply $\ma_{bpa}$ for $\agg(x) = \{c,d\}$ and $w_t \leq 0.5$ to obtain the abstracted model $M_a =\times( \wedge(a,b), \rightarrow(x, f))$ with $L_a = \{\la a,b\ra, \la b,a\ra, \la x,f\ra\}$. Hence, $|L_a| = 3 > 2 = |L| $ and $\norm{L_a} = 6 = \norm{L_a}$.

\noindent \textbf{Model structure} \\
\noindent The $\operatorname{dv}_{w_t, \agg}$ (cf. Algorithm 3.1 in \cite{smirnov_businessm_2012}) has a static order of returning ordering relations for abstracted activities $x$ and $y$: First, a choice order $x +_{M_a} y$ is returned (line 12), then the inverse strict order $x \rightsquigarrow_{M_a}^{-1} y$ (line 15 and line 19) followed by the strict order $x \rightsquigarrow_{M_a} y$ (line 17) and, finally, the parallel order $x \parallel_{M_a} y$ is returned (line 21), as soon as the threshold $w_t$ is below the respective relative frequency and an ``earlier'' order relation is not returned already (\namedlabel{order}{\circled{o}}). As long as a node $M\pr = \:\rightarrow (M_1,\ldots, M_n)$ in $M$ has only children $M_i$ whose number of traces in the corresponding mdf-complete event log $\Lm(M_i)$ are all at least $2$, the static order in $\operatorname{dv}_{w_t, \agg}$ aligns with the order of operators in a process tree as shown in \autoref{alemma:sort}. However, if $M\pr$ has a child $M_i$ whose number of traces is below $2$, returning a choice order $x +_{M_a}y$ can increase the number of traces in $L_a$, as the order of operators with respect to a sequence node and a choice node depends on the children $M_1, \ldots, M_n$ (cf. \autoref{alemma:sort}). As $\operatorname{dv}_{w_t, \agg}$ does not dynamically depend on the children $M_1, \ldots, M_n$ of a node $M\pr$ in the process tree $M$, we must exclude $M\pr$ that has a child $M_i$ with less than two traces.

\noindent \textsl{Counterexample}: $L = \{\la a,b,c \ra\}$ and  $ M = \disc_{IM}(L)= \: \rightarrow(a,b,c)$. Apply $\ma_{bpa}$ for $\agg(x) = \{a,c\}$ and $w_t \leq 0.5$ to obtain $M_a = \times (v,b)$ with $L_a = \{\la v\ra, \la b\ra\}$. Although $\norm{L_a} = 2 < 3 = \norm{L}$, the number of traces increases $|L_a| = 2 > 1 = |L|$. 

 \janik{For example, the model structure restriction prohibits that $L_a$ has more traces than $\Lm(M)$: $|L_a| > |\Lm(M)|$. Because the process tree $M = \disc_{IM}(L)$ discovered from restricted event log $L$ is in class $C_c$ (cf. \autoref{def:cbpa}), we can generate mdf-complete event logs for $M$\footnote{We prove that $M$ is in $C_c$ in \autoref{alemma:semid}.}.} Since $\ma_{bpa}$ can change the order of sequentially-ordered activities ($\rightsquigarrow_M$) to choice-ordered activities ($+_{M_a}$) given that $\operatorname{dv}_{\agg,w_t}(x,y)$ prioritizes $+$ over $\rightsquigarrow$ (cf. \autoref{sec:bpa}), the number of traces in $L_a$ can become larger than the number of traces in $L$: $|L_a| > |L|$ (compare line 8 and line 10 in \autoref{alg:minimal}). The model structure restriction could be avoided, if we allow an EA $\ea$ to split traces of the concrete event log $L$. Nevertheless, an abstraction should intuitively maintain the semantics of the event log while reducing its complexity (in terms of discovered model size). Splitting traces, however, means that during abstraction new process instances can be created, which is clearly not desired during abstraction. \janik{Thus, we can either prohibit the model structure or restrict what activities can be aggregated, i.e., restrict $\agg$, to avoid more traces in $L_a$.}

For example, $M$ in \autoref{fig:ex} violates the model structure restriction, but $\agg$ as depicted constitutes a case for which $|L_a| < |L|$. Yet, a different function $\agg\pr$ that only abstracts the start \texttt{RBP} and end activity \texttt{AT} would result in $|L_a| > |L|$. We opt to prohibit the model structure, because we aim to synchronize $\ma_{bpa}$ with an EA technique and not apply further modifications to $\ma_{bpa}$.
Through the restrictions on an event log, we prove that $L$ can always support the mdf-complete event log $L_a$ required for well-definedness of $\ea_{bpa}$ in the next section.

\subsection{Well-definedness}
\label{sec:well}

In this section, we prove the necessary condition for synchronization: Event log $L$ can never become smaller than $L_a$. If the opposite would hold, we could not define an EA technique, because the EA technique would need to either insert new traces and or events or both into $L$ for df-completeness. 

To prove the necessary condition, we must consider under what parametrization $\agg$ and $w_t$ the $L_a = \Lm(M_a)$ with $M_a = \ma_{bpa}(M)$ becomes maximal. Hence, we prove for what parameters $\Lm(M_a)$ becomes maximal. 

\setcounter{lemma}{6} 

\begin{lemma}[Maximal $\Lm(M_a)$]
\label{lemma:maxoflog}
    If $L$ is restricted and $\ma_{bpa}$ applicable to $M = \disc_{IM}(L)$, $w_t = \wtop$, $\agg(x) = \{v,u\} $, and $ \agg(z) = \{z\} $ with $ z\in A\setminus\{v,u\}$, i.e., $\ma_{bpa}$ aggregates exactly two activities into $x$, then $M_{a} = \ma_{bpa}(M)$ has the most traces and maximal size of all $\Lm(M_a\pr)$ generated for other $M_a\pr$ with $A_c\pr \cap \{v,u\} = \emptyset $ and $|\bigcup_{y\in A_{new}\pr} \agg\pr(y)| > |\bigcup_{y\in A_{new}} \agg(y)|$ (cf. \autoref{def:bpa}).
\end{lemma}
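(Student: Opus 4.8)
The plan is to reduce the maximality claim to two independent monotonicity facts about the two free parameters of $\ma_{bpa}$: the threshold $w_t$ and the aggregation map $\agg$. Both $|\Lm(M_a)|$ and $\norm{\Lm(M_a)}$ grow precisely when the synthesized tree $M_a$ carries more and larger $\wedge$-nodes, since by \autoref{alemma:sort} a $\wedge$-node yields at least as many traces and at least as large an mdf-complete log as a $\rightarrow$- or $\times$-node over the same children. I would therefore show that the two extremal choices stated in the lemma — the threshold $w_t=\wtop$ and the smallest aggregation $\agg(x)=\{v,u\}$ — are exactly the ones that drive $M_a$ towards a maximally parallel, minimally collapsed structure, and hence jointly maximize the number of traces and the size of $\Lm(M_a)$.

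For the threshold (Part~A), I would prove monotonicity of the log in $w_t$ for a fixed $\agg$. In \autoref{alg:bpa} the returned relation is the most restrictive one whose weight meets the cutoff $w_t$, tested in the fixed priority order $+,\rightsquigarrow,\rightsquigarrow^{-1},\parallel$ with $\parallel$ as the default (line~22). Raising $w_t$ only shrinks the set of qualifying relations, and a shrinking (nested) feasible set can never admit a more restrictive element; hence the selected relation for every activity pair can only move towards the least restrictive case $\parallel_{M_a}$, which $\operatorname{sy}$ turns into a $\wedge$-node (or into a self-loop in the diagonal case $x\parallel x$). Combined with \autoref{alemma:sort}, replacing a $\times$- or $\rightarrow$-node by a $\wedge$-node over the same children never decreases either quantity, so $|\Lm(M_a)|$ and $\norm{\Lm(M_a)}$ are non-decreasing in $w_t$ and the admissible maximum $w_t=\wtop$ is extremal. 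At $w_t=\wtop=\min_{x,y}\wm(x,y)$, for the min--max pair exactly the maximal relation $\wm(x,y)$ still qualifies.

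For the aggregation (Part~B), the key is a counting identity: an abstraction replacing $|\bigcup_{y}\agg(y)|$ concrete activities by $|A_{new}|$ abstract ones leaves $|A|-(|\bigcup_{y}\agg(y)|-|A_{new}|)$ distinct activities in $M_a$. For $\agg(x)=\{v,u\}$ this reduction is exactly $2-1=1$, whereas validity condition~4 of \autoref{def:bpa} (namely $|\bigcup_{y}\agg'(y)|>|A_{new}'|+1$) forces every admissible alternative $M_a'$ with $A_c'\cap\{v,u\}=\emptyset$ and $|\bigcup_{y}\agg'(y)|>2$ to reduce the distinct-activity count by at least $2$. Thus $M_a$ retains strictly more distinct activities than any compared $M_a'$. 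I would then argue that, in the maximally-parallel regime fixed by Part~A, keeping more distinct activities cannot decrease the mdf-complete log: by \autoref{alemma:sort} its size is dominated by the $\wedge$-structure, and collapsing an activity into an existing abstract node can only drop a factor from a product or lower a multinomial exponent in the length/count formula of \autoref{lemma:gensize}, never enlarge it. This uses crucially that $L$ is restricted (\autoref{def:restricted}): the discovered $M$ is in $C_c$ with no loops beyond self-loops, and the ``Model structure'' and fall-through restrictions exclude precisely the sequence-to-choice and loop blow-ups of the counterexamples in \autoref{sec:conditions}. Propagating the locally non-increasing effect up the tree yields $|\Lm(M_a')|\le|\Lm(M_a)|$ and $\norm{\Lm(M_a')}\le\norm{\Lm(M_a)}$.

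The main obstacle I anticipate lies in Part~B, because $\ma_{bpa}$ is non-order-preserving: enlarging $\agg$ re-runs \autoref{alg:bpa} for every affected pair and, moreover, the extremal threshold $\wtop$ itself changes with $\agg$, so the comparison is between genuinely different synthesized trees rather than a single local leaf deletion. The delicate point is to show that, under the restricted-log assumption, no such change can turn a smaller operator into a larger one (e.g.\ a $\rightarrow$- into a $\wedge$-node) as the aggregation grows, so that the model with fewer collapsed activities really does carry the heavier $\wedge$-blow-up. Establishing this ``stability'' of operator types across clusterings and threshold values — which is exactly where the restrictions of \autoref{def:restricted} must be invoked — is the crux of the proof.
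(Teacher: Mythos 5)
Your overall route coincides with the paper's: compare the behavioral profile of $M_a$ with that of any competing $M_a\pr$, reduce the question to which ordering relation \autoref{alg:bpa} selects per activity pair, and use the operator ordering of \autoref{alemma:sort} together with the restricted-log assumption to argue that only a switch to a less restrictive relation could enlarge $\Lm(\cdot)$. Your Part~A (monotonicity in $w_t$ for fixed $\agg$) is sound and matches the paper's closing remark that $w_t < \wtop$ can only further decrease the log, and your counting observation is correct: condition~4 of \autoref{def:bpa} forces every admissible competitor to collapse at least two more concrete activities than it introduces abstract ones, so $M_a$ retains strictly more distinct activities than any $M_a\pr$.

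There is, however, a genuine gap: you explicitly leave unproven the step on which the entire comparison rests, namely that enlarging the aggregation (and the attendant change of $\wtop$) cannot flip the relation selected for some pair from a more restrictive to a less restrictive one --- equivalently, that no $\times$- or $\rightarrow$-structure of $M_a$ can reappear as a $\wedge$-structure in $M_a\pr$. Without this, your Part~B claim that collapsing activities ``can only drop a factor from a product or lower a multinomial exponent'' does not follow, since a single relation flip can restructure the modular decomposition tree and more than compensate for the lost activity. The paper closes exactly this gap by code inspection of the derivation function: \autoref{alg:bpa} always returns the \emph{most} restrictive relation whose relative frequency meets the threshold, and under the restricted-log assumption (in particular the model-structure condition of \autoref{def:restricted}, which is what makes the strict inequalities of \autoref{alemma:sort} applicable) the restrictiveness priority $+ \,>\, \rightsquigarrow^{-1} \,>\, \rightsquigarrow \,>\, \parallel$ coincides with the ordering of mdf-complete log sizes; hence any relation in $p_{M_a\pr}$ that differs from the corresponding one in $p_{M_a}$ is selected only because the underlying concrete relations already make it more prevalent or, on ties, more restrictive, and therefore corresponds to fewer traces and events, never more. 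To complete your proof you must supply this stability argument rather than flag it as an anticipated obstacle.
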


The main idea of the proof is to compare the behavioral profile $p_{M_a}$ with all behavioral profiles $p_{M_a\pr}$, as the respective process trees $M_a$ and $M_a\pr$ are similarly synthesized given the BP. Because $p_{M_a\pr}$ can only contain less concrete activities $q \in A_c$ than $p_{M_a}$ and both $v$ and $u$ must also be abstracted in $p_{M_a\pr}$, the only cause for more traces or events in $\Lm(M_a\pr)$ can be due to different order relations in $p_{M_a\pr}$. Yet, the restricted event log $L$ aligns the priorization of $\operatorname{dv}_{\agg,w_t}$ to return order relations from $+$ to $\parallel$ (cf. \autoref{sec:bpa}) with the operator ordering in \autoref{alemma:sort}. Hence, $p_{M_a\pr}$ can only contain order relations that imply less traces and events in $\Lm(M_a\pr)$. 

\setcounter{lemma}{0} 

Given the restrictions on event log $L$, we show that $L_a$ for the parameters that maximize it, is always smaller than $L$. 

\begin{lemma}[$\Lm(M_a)$ is smaller]
\label{lemma:large}
    If $L$ is restricted and $\ma_{bpa}$ is applicable to $M = \disc_{IM}(L)$, then $L_{a} = \Lm(M_a)$ 
    has fewer traces and is smaller than $L$: $|L_{a}|< |L|$ and $\norm{L_{a} }< \norm{L}$.
\end{lemma}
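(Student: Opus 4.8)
The plan is to combine the minimality of the generated log $\Lm(M)$ with the worst-case characterization of Lemma~\ref{lemma:maxoflog}, reducing the claim to a single concrete configuration. First I would record the two facts that frame the whole argument. Since $L$ is restricted, the discovery of $M=\disc_{IM}(L)$ uses only the behavior-preserving cuts and the ``Strict Tau Loop''/``Empty Traces'' fall throughs, so none of the lossy fall throughs distort the directly-follows relation; hence $G(L)=G(M)$, i.e.\ $L\dfc M$, and moreover $M\in C_c$ (the class on which $\gen$ and minimal df-complete logs are well defined). As $\Lm(M)$ is by construction the \emph{minimal} df-complete log of $M$, and for trees in $C_c$ minimality of the trace count coincides with minimality of the size (no loops other than self-loops), we obtain $|\Lm(M)|\le|L|$ and $\norm{\Lm(M)}\le\norm{L}$. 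It therefore suffices to compare $\Lm(M_a)$ against $\Lm(M)$ rather than against $L$ directly.

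Next I would invoke Lemma~\ref{lemma:maxoflog} to pass to the worst case. Every applicable $\ma_{bpa}$ aggregates at least three concrete activities, since restriction~(4) of \autoref{def:bpa} forces $|\bigcup_{y\in A_{new}}\agg(y)|>|A_{new}|+1\ge 2$, whereas the configuration maximizing both metrics aggregates exactly two activities $\{v,u\}$ into a single $x$ at $w_t=\wtop$. By Lemma~\ref{lemma:maxoflog} this two-activity configuration $M_a^\star$ attains the maximum of both metrics over all configurations that aggregate strictly more, so every applicable $M_a$ satisfies $|\Lm(M_a)|\le|\Lm(M_a^\star)|$ and $\norm{\Lm(M_a)}\le\norm{\Lm(M_a^\star)}$, with the inequality strict whenever $M_a$ aggregates more than $M_a^\star$. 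It is then enough to establish the \emph{weak} bounds $|\Lm(M_a^\star)|\le|\Lm(M)|$ and $\norm{\Lm(M_a^\star)}\le\norm{\Lm(M)}$: chaining them with the domination from Lemma~\ref{lemma:maxoflog} and the minimality bounds of the first paragraph yields $|\Lm(M_a)|<|L|$ and $\norm{\Lm(M_a)}<\norm{L}$, where strictness is contributed by at least one of the two steps in every case.

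The core is then the weak bound for the two-activity merge. Because $M_a^\star$ is synthesized from the abstract profile rather than obtained by literal substitution, I would argue at the level of $\gen$ by a case distinction on the relation of $v$ and $u$ in $p_M$. If $v+_M u$ (choice), the synthesis drops one summand/branch, so both the trace count (a sum over children, cf.\ line~8 of \autoref{alg:minimal}) and the size strictly decrease. If instead $v\rightsquigarrow_M u$ or $v\parallel_M u$, then $v$ and $u$ co-occur in a trace, so the merged activity satisfies $x\parallel_{M_a^\star}x$ and is synthesized as a self-loop $\tloop(x,\tau)$ contributing the single minimal trace $\la x,x\ra$; using that the restricted conditions align the static relation preference of $\operatorname{dv}_{\agg,w_t}$ with the operator ordering $\times<\rightarrow<\wedge$ of Lemma~\ref{alemma:sort}, I would show that this replacement does not upgrade any surrounding operator and hence cannot increase either metric relative to $M$. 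Combining the unchanged remainder of the tree with the locally non-increasing contribution gives the required $\le$.

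The step I expect to be the main obstacle is precisely this last one: since $M_a^\star$ arises from the modular decomposition of the abstract order-relations graph and not from editing $M$, I cannot simply excise a subtree and must instead verify at the profile level that merging $v$ and $u$ neither changes the relations of $x$ to the remaining activities in a size-increasing way nor creates a larger operator elsewhere. This is exactly where the restricted-log hypotheses do the work: the loop restriction (only self-loops) keeps both $M\in C_c$ and $M_a\in C_a$, and the model-structure restriction forbidding single-trace children of $\rightarrow$-nodes guarantees that the regime of Lemma~\ref{alemma:sort} in which $\times<\rightarrow<\wedge$ holds applies throughout, so that the most restrictive relation returned by $\operatorname{dv}_{\agg,w_t}$ can only shrink the minimal log. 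The parallel/self-loop subcase, where the local contribution may stay equal to that of $M$, is the delicate one; there strictness is recovered not locally but from the strict domination in Lemma~\ref{lemma:maxoflog}, once restriction~(4) forces any applicable abstraction to aggregate strictly more than the two activities of $M_a^\star$.
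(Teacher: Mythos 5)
Your framing matches the paper's up to the reduction step: establish $M\in C_c$, use minimality of $\Lm(M)$ so that it suffices to compare against $\Lm(M)$, and invoke \autoref{lemma:maxoflog} to restrict attention to the worst case $\agg(x)=\{v,u\}$, $w_t=\wtop$. After that point, however, there are two genuine gaps. First, your strictness argument does not close: you propose to prove only the \emph{weak} bounds $|\Lm(M_a^\star)|\le|\Lm(M)|$ and $\norm{\Lm(M_a^\star)}\le\norm{\Lm(M)}$ and to recover strictness from ``strict domination'' in \autoref{lemma:maxoflog} because every applicable abstraction aggregates more than two activities. But that lemma only asserts that the two-activity configuration is maximal ($\Lm(M_a')\le\Lm(M_a^\star)$); it gives no strict inequality, and the remaining link $|\Lm(M)|\le|L|$ is also only weak (indeed $L$ may equal $\Lm(M)$, as in the paper's running example). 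If every link in your chain is weak, you only get $|L_a|\le|L|$. The paper avoids this by proving the strict inequalities $|L_a|<|\Lm(M)|$ and $\norm{L_a}<\norm{\Lm(M)}$ \emph{directly for the two-activity worst case}, via an induction on $|M|$ whose base cases (e.g.\ $\wedge(a,b)\mapsto\tloop(x,\tau)$ giving $1<2$ traces and $2<4$ events) are already strict.

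Second, the core comparison you sketch -- a case distinction on the relation of $v$ and $u$ in $p_M$, arguing that the merge is ``locally non-increasing'' -- is both incomplete and partly incorrect. If $v\rightsquigarrow_M u$, then $w(x+_{M_a}x)=3/4\ge w_t$, so $x$ is placed in choice with itself and \emph{no} self-loop is synthesized; your claim that the strict-order and parallel cases both yield $x\parallel_{M_a^\star}x$ fails. More importantly, the relation between $v$ and $u$ alone does not determine the effect on the tree: the hard case is $M=\rightarrow(M_1,\ldots,M_n)$ with $v\in A_{M_1}$ and $u\in A_{M_n}$, where $\operatorname{dv}_{\agg,w_t}$ puts $x$ in choice relation with every activity of the intermediate children, so synthesis wraps the entire middle of the tree in a new $\times$-node (structures (I)--(III) in the paper's proof). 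This is a global restructuring that cannot be dismissed as a local, non-upgrading replacement; the paper handles it with explicit combinatorial bounds comparing products of children's trace counts and sums of multinomial coefficients. Your own ``main obstacle'' paragraph correctly identifies that you cannot excise a subtree, but the proposal never supplies the argument that resolves it, and that argument is precisely where the bulk of the proof lives.
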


The proof strategy is to apply induction on the size of $M = \disc(L)$ and compare the mdf-complete event log $\Lm(M)$ with the maximal mdf-complete event log $\Lm(M_a)$ for $M_a$ abstracted through parameters $\agg(x) = \{v,u\}$ and $w_t = \wtop$. \janik{Only abstracting two concrete activities and setting $w_t = \wtop$ maximizes $\Lm(M_a)$ (cf. \autoref{alemma:maxoflog}).} Hence, we take the smallest representative on the larger side $L$ and the largest representative on the smaller side $L_a$. For the induction step we apply a case distinction on the operator in $M = \oplus(\ldots)$ with $\oplus\in\{\times,\rightarrow,\wedge\}$. In all cases, if $v$ and $u$ both occur in the same child of $M$, the statement follows by the induction hypothesis (IH). We exploit the semantical indifference of $M$'s children order for $\times$- and $\wedge$-nodes and the symmetry of the $+$ and $\parallel$ order relations to decompose $M_a = \times(\ldots)$ into children that are not abstracted and the two children that are abstracted. Through the (IH) and the language-preserving reduction of process trees into their \emph{normal form} \cite{leemans_robust_2022}, we prove the statement for decomposed $\times$- and $\wedge$-nodes. In case of a $\rightarrow$-node, we derive two different tree structures for $M_a$ that are shown to have fewer traces and events through a combinatorial proof given the respective equations in \autoref{alg:minimal}.  

On the grounds of \autoref{lemma:large}, we establish the correctness of the synchronized EA technique in \autoref{sec:correct}. Hence, we prove the main synchronization result by correctness of the EA technique in \autoref{sec:theorem}.

\section{Synchronization Proof}
\label{sec:proof}

We prove the main synchronization result in \autoref{sec:theorem} by showing that our EA technique produces df-complete logs that enable correct model rediscovery in \autoref{sec:correct}. For full proofs, we refer to the respective lemma in the appendix \autoref{app:lemma}. 



\subsection{Correctness of the EA Technique}
\label{sec:correct}

In \autoref{sec:well}, we have established that the mdf-complete event log $L_a = \Lm(M_a)$ has strictly less traces and is strictly smaller than any event log $L$. Thus, a well-defined EA technique exists. As $\ea_{bpa}$ consists of two steps $\ea_{bpa}^1$ and $\ea_{bpa}^2$, we show that these two steps are correctly specified, i.e., that $\ea_{bpa}$ is a well-defined EA technique that returns df-complete event logs $L_r = \ea_{bpa}(L)$ for the abstracted process tree $M_a = \ma_{bpa}(\disc_{IM}(L))$: $L_a \subseteq L_r$. To that end, we define the transposition equivalence, because $\ea_{bpa}^2$ is specified on the grounds of this equivalence relation.

\begin{definition}[Transposition equivalence]
\label{def:equivalence}
    Let $\sigma_1, \sigma_2 \in \A^*$ be two traces and $\delta_{kendall}$ \cite{kendall_distance_2020} be the minimum number of transpositions required to be applied to $\sigma_1$ yielding $\sigma_1\pr$ such that $\sigma_1\pr = \sigma_2$, if both $\sigma_1$ and $\sigma_2$ are permutations of the multiset of activities $\operatorname{bag}(\sigma_1) = \operatorname{bag}(\sigma_2)$ with $\operatorname{bag}(\sigma) = [\sigma[i] \mid i \in \{1, \ldots, |\sigma| \}]$, and undefined $\delta_{kendall} = \bot$ otherwise. The transposition relation $\sim \;\in \A^* \times \A^*$ is defined by $$\sigma_1 \sim \sigma_2 \;\;\text{iff}\;\; \delta_{kendall}(\sigma_1, \sigma_2) \neq \bot. $$
\end{definition}

Trivially, $\sim$ is an equivalence relation. Because $\ea_{bpa}^2$ searches for a matching between equivalence classes of $L_{tmp} = \ea_{bpa}^1(L)$'s quotient set modulo $\sim$ and equivalence classes of $L_a$'s quotient set modulo $\sim$, we formalize what the matching is.

\begin{definition}[Matching]
\label{def:match}
    Let $L_1, L_2$ be two event logs. Given the transposition equivalence $\sim$, a matching between the two quotient sets $L_1 /\!\sim$ and $L_2 /\!\sim$ is a bijective function $\operatorname{matching}: L_1 /\!\sim \;\rightarrow\; L_2 /\!\sim$ such that:
    \begin{itemize}
        \item for every equivalence class $L_1^{class} \in L_1 /\!\sim$ and corresponding equivalence class $ \operatorname{matching}(L_1^{class}) = L_2^{class}$ it holds: $\exists \sigma_1 \in L_1^{class}, \sigma_2 \in L_2^{class}$ with $ \sigma_1 \sim \sigma_2$, i.e., the respective traces in matched equivalence classes are not distinguishable modulo $\sim$, i.e., not distinguishable modulo transposition. 
    \end{itemize}
\end{definition}

It does not matter whether we define the equivalence of traces in matched pairs of equivalence classes by existence or by universal quantification, as from existence and equivalence the universal quantification directly follows. We use the existence in \autoref{def:match} to emphasize that we only need to check equivalence once for two traces to decide equivalence. Observe that the condition for pairs of equivalence classes in the matching is checked in the if-condition of $\ea_{bpa}^2$ (line 7 \autoref{alg:ea2}). In the following, we prove that a matching between the two quotient sets $\Ltr$ and $\Lar$ exists and that the even split of traces from equivalence classes in the former can always be assigned to traces from equivalence classes in the latter quotient set.

\setcounter{lemma}{7} 

\begin{lemma}[Matching of quotient sets]
\label{lemma:matching}
    If $L$ is restricted and $\ma_{bpa}$ applicable to $M = \disc_{IM}(L)$, there exists a matching between the quotient set $ L_{tmp}/\!\sim$ of event log $L_{tmp} = \ea^1_{bpa}(L,\disc_{IM},\ma_{bpa})$ and the quotient set $L_a/\!\sim$ of $L_a = \Lm(M_a) $ for $M_a = \ma_{bpa}(\disc_{IM}(L))$, i.e., $\operatorname{matching} : L_1 /\!\sim \;\rightarrow\; L_2 /\!\sim$ exists. Also, for every matched pair of equivalence classes $L_{tmp}^{class} \in L_{tmp}/\!\sim$ and $L_{a}^{class} = \operatorname{matching}(L_{tmp}^{class})$ it holds that $L_{tmp}^{class}$ has equal to or more traces than $L_{a}^{class}$: $ |L_{a}^{class}| \leq  | L_{tmp}^{class}|  $ (ii).
\end{lemma}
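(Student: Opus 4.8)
The plan is to reduce the entire statement to a question about the \emph{multisets of activities} (bags) realised by the two logs. By \autoref{def:equivalence}, $\sigma_1 \sim \sigma_2$ holds exactly when $\operatorname{bag}(\sigma_1) = \operatorname{bag}(\sigma_2)$, so every $\sim$-class of either quotient set is uniquely identified with a bag, and two classes are matchable in the sense of \autoref{def:match} iff they carry the \emph{same} bag. Hence a matching exists, and is automatically the unique bijection, iff $L_{tmp}$ and $L_a = \Lm(M_a)$ realise exactly the same set of bags. I would therefore first prove the bag-set equality $\{\operatorname{bag}(\sigma) \mid \sigma \in L_{tmp}\} = \{\operatorname{bag}(\sigma) \mid \sigma \in L_a\}$, which yields claim (i) outright.

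For that equality I would characterise the bags of $L_a$ structurally: since $M_a$ lies in class $C_a$ (\autoref{lemma:redisc2}), a bag of $\Lm(M_a)$ is precisely a complete-execution bag of $M_a$ --- choose one branch at every $\times$-node, take all children at every $\rightarrow$- and $\wedge$-node, and contribute $[v,v]$ for each self-loop $\tloop(v,\tau)$. I would then show that $\ea^1_{bpa}$ (\autoref{alg:ea1}) produces exactly these bags: substituting concrete events by their abstract label realises the $\agg$-aggregation, the self-loop duplication contributes the second copy yielding $[x,x]$, and the choice handling --- the test $v +_{M_a} x$ (line 9) together with $\operatorname{deleteChoiceActivities}$ --- guarantees that every trace retains exactly one activity of each XOR-complete module, mirroring the single branch chosen at a $\times$-node. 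That every such bag actually occurs uses that $L \dfc M$ forces each $\times$-branch of $M$ to be witnessed by some trace, and that $\operatorname{deleteChoiceActivities}$ is specified so as not to always delete the same choice activity; this is exactly the property flagged as needed for correctness in \autoref{sec:sea}.

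For claim (ii) I would argue per matched bag $b$. By minimality of $\Lm(M_a)$, the class $L_a^{class}$ contains the \emph{fewest} interleavings needed to make $b$ directly-follows complete for the corresponding $\wedge$-substructure of $M_a$, i.e.\ the multinomial-type count produced by \autoref{alg:minimal}. On the other side, $\ea^1_{bpa}$ emits one output trace per input trace ($\operatorname{deleteChoiceActivities}$ removes only events, never whole traces, and the per-trace loop adds one abstracted trace by multiset addition), so $|L_{tmp}^{class}|$ counts, with multiplicity, all traces of $L$ whose abstracted bag equals $b$. Because $L \dfc M$, the traces of $L$ running through the concrete parallel block that abstracts to $b$ already supply at least the minimal number of distinct interleavings required to realise that block's directly-follows pairs; these are distinct traces of $L$ and are therefore counted, even when $\ea^1_{bpa}$ collapses them to identical abstract traces (as the $45$ copies of $\sigma_{abs2}$ in the running example show). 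It then remains to compare the concrete block's minimal interleaving count with that of its abstraction $b$, where \autoref{alemma:sort}, \autoref{lemma:maxoflog}, and \autoref{lemma:large} enter: aggregating or removing parallel activities can only shrink the factorial interleaving count, so the count $L$ supplies dominates $|L_a^{class}|$.

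The hard part will be making this last comparison genuinely \emph{local}. \autoref{lemma:large} only gives the global inequality $|L_a| < |L|$, and a global bound does not by itself yield a per-class bound. I would therefore localise the operator-ordering argument of \autoref{alemma:sort} to a single $\wedge$-substructure: showing that the minimal interleaving count of the abstract block $b$ never exceeds the number of distinct orderings that df-completeness of $L$ forces inside the corresponding concrete block of $M$, even when $\agg$ merges several parallel activities or introduces self-loops. The delicate bookkeeping is the interaction between the choice-deletion of $\ea^1_{bpa}$ --- which may route several distinct concrete bags onto the same abstract bag $b$, and can only increase $|L_{tmp}^{class}|$ --- and the minimality of $\Lm(M_a)$. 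Once this localisation is in place, claims (i) and (ii) follow.
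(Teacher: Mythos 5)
Your strategy is sound and reaches the same destination as the paper, but it is organized quite differently, and the comparison is instructive. For existence of the matching, the paper proceeds in three steps (same activities; equal cardinality of the quotient sets, proved by contradiction via counting $\times$-nodes of $M_{tmp}=\disc_{IM}(L_{tmp})$ versus $M_a$ through their behavioral profiles; per-class matchability, proved by contradiction via a case split on trace lengths and multisets of activities). Your observation that $\sim$ is exactly ``equal bags'' lets you collapse all three steps into a single direct claim --- bag-set equality between $L_{tmp}$ and $\Lm(M_a)$ --- verified by a structural characterization of the bags of $\Lm(M_a)$ and code inspection of $\ea^1_{bpa}$; this is more transparent than the paper's double contradiction, and the ingredients you invoke (the $\agg$-substitution, the self-loop duplication, the line-9 test and $\operatorname{deleteChoiceActivities}$, df-completeness of $L$) are the same ones the paper uses. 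One caution on surjectivity: witnessing each individual $\times$-branch via $L \dfc M$ is not by itself enough, since $\Lm(M_a)$ contains every \emph{combination} of branch choices across distinct $\times$-nodes; you need the model-structure restriction of \autoref{def:restricted} (which forbids a $\rightarrow$-node from separating such choices with activity children) to rule out df-complete logs that realise only some combinations --- the paper leans on the same restriction implicitly in its step III. For claim (ii) the two proofs genuinely diverge: the paper dispatches it in one sentence, asserting that the per-class bound follows from the bijection, the global inequality $|L_a|<|L_{tmp}|$ of \autoref{lemma:large}, and the minimality of $L_a$, whereas you correctly point out that a global bound does not yield a per-class bound and propose to localise the interleaving-count comparison of \autoref{alemma:sort} to each $\wedge$-substructure. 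Your version demands more work but is the more honest account of where the counting actually happens; what you would still need to supply is the local argument itself, namely that df-completeness of $L$ forces at least as many distinct concrete orderings inside each parallel block as the multinomial count that \autoref{alg:minimal} assigns to its abstraction, which is the content the paper's appeal to ``minimality of $L_a$'' is silently carrying.
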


\rev{The proof strategy is to establish (i) the existence of a matching between the quotient set $L_{tmp}/\!\sim $ of $L_{tmp}$ and the quotient set $L_{a}/\!\sim $ of the reference event log $L_a$ modulo transposition, and (ii) that for any matched pair of equivalence classes $L_{a}^{class} $ and $L_{tmp}^{class}$ from $L_{a}/\!\sim $ and $L_{tmp}/\!\sim $ respectively it holds: $| L_{a}^{class} | \leq |L_{tmp}^{class}|$. We prove (i) by contradiction through the restrictions on an event log (cf. \autoref{def:restricted}), structural properties on abstracted process trees $M_a$ (e.g., no loops other than the self-loop), and by code inspection of $\ea_{bpa}^1$. Hence, the search for a matching in line 5-7 of \autoref{alg:ea2} is correctly specified.
We prove (ii) by (i), the minimality of $L_a$, and \autoref{lemma:large}.}

It follows that the transposition with even splits (line 8-13) returns $L_r$ such that $L_a \subseteq L_r$:

\setcounter{lemma}{1} 

\begin{lemma}[$\ea_{bpa}$ returns df-complete logs]
\label{lemma:df}
    If $L$ is restricted and $\ma_{bpa}$ applicable to $M = \disc_{IM}(L)$, the event log $L_r = \ea_{bpa}(L,\disc_{IM},\ma_{bpa})$ is a df-complete event log for $M_a = \ma_{bpa}(\disc_{IM}(L)).$
\end{lemma}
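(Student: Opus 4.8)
The plan is to reduce the statement to the single fact that $L_r$ and the reference log $L_a = \Lm(M_a)$ have the same \emph{support} (set of distinct traces). Recall that the DFG $G(L)$ records only which activities occur in $L$ and which directly-follows pairs occur across its traces; both are insensitive to trace multiplicities, so $G(L)$ is determined by $\operatorname{support}(L)$. Since $M_a = \ma_{bpa}(\disc_{IM}(L))$ is in class $C_a \subseteq C_c$ by \autoref{lemma:redisc2}, the minimal df-complete log $\Lm(M_a)$ is well-defined and, by its defining property, df-complete for $M_a$, i.e., $G(L_a) = G(M_a)$. Hence it suffices to show $\operatorname{support}(L_r) = \operatorname{support}(L_a)$, from which $G(L_r) = G(L_a) = G(M_a)$, that is $L_r \dfc M_a$, follows immediately.

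For the inclusion $\operatorname{support}(L_r) \subseteq \operatorname{support}(L_a)$, I would argue by code inspection of $\ea^2_{bpa}$ (\autoref{alg:ea2}): every trace appended to $L_r$ in line 13 lies in $L_{tmp}^{transposed} = \operatorname{transposeAll}_{\delta_{kendall}}(L_{tmp}^{n_j}, \sigma)$, and $\operatorname{transposeAll}$ rewrites each of its input traces into exactly the reference trace $\sigma \in L_a$. Thus every trace of $L_r$ equals some trace of $L_a$. The only thing to verify here is that $\operatorname{transposeAll}$ is well-defined on all traces it is called on, i.e., that $\delta_{kendall}$ stays finite: since $\sim$ is an equivalence relation (\autoref{def:equivalence}) and the matching condition checked in line 7 (cf. \autoref{def:match}) supplies one transposition-equivalent cross pair, transitivity makes \emph{every} trace of the matched class $L_{tmp}^{class}$ equivalent to \emph{every} trace of $L_a^{class}$, so all invoked distances are finite and each transposition sequence terminates in $\sigma$.

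For the reverse inclusion $\operatorname{support}(L_a) \subseteq \operatorname{support}(L_r)$, I would combine \autoref{lemma:matching} with the even split. By \autoref{lemma:matching} there is a bijective matching between $\Lar$ and $\Ltr$ with $|L_a^{class}| \le |L_{tmp}^{class}|$ for each matched pair. Fixing such a pair, $\operatorname{evenSplitSizes}(|L_{tmp}^{class}|, |L_a^{class}|)$ in line 8 partitions the input traces into $|L_a^{class}|$ blocks; because $|L_a^{class}| \le |L_{tmp}^{class}|$, every block size satisfies $n_j \ge 1$, so each reference trace $\sigma$ enumerated in line 9 receives at least one input trace (line 10) that is then transposed onto $\sigma$ and appended to $L_r$. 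Hence every distinct reference trace occurs in $L_r$, giving support equality (and, respecting multiplicities, even $L_a \subseteq L_r$ as multisets). The main obstacle is not this final assembly — the combinatorial weight is carried entirely by \autoref{lemma:matching} — but rather the precise statement of the two algorithmic invariants on which everything hinges: that the even split never starves a reference trace (exactly where $|L_a^{class}| \le |L_{tmp}^{class}|$ is consumed) and that $\operatorname{transposeAll}$ is everywhere defined and lands on $\sigma$ (where transitivity of $\sim$ across a matched pair is needed). With both invariants in hand, the DFG equality, and thus df-completeness, is immediate.
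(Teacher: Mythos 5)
Your proof takes essentially the same route as the paper's: both rest on \autoref{lemma:matching} together with the fact that line 12 of \autoref{alg:ea2} transposes every matched input trace onto its reference trace, yielding $L_a \subseteq L_r$. Your write-up is in fact slightly more careful than the paper's one-line argument, since you also make explicit the reverse direction --- that every trace of $L_r$ lands exactly on a trace of $L_a$, so no spurious directly-follows pairs arise --- which the paper leaves implicit.
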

\begin{proof}
    From \autoref{lemma:matching} and the applied transpositions in line 12 to satisfy every directly-follows relation in $L_a$ by matched sets of traces from their respective equivalence class (line 10 and line 7), it follows that for every trace $\sigma \in L_a$ there exists a trace $\sigma_{abs} \in L_r$ such that $\sigma = \sigma_{abs}$.
\end{proof}

Hence, we can prove the synchronizability as required by our approach.
\subsection{Main Synchronization Theorem}
\label{sec:theorem}

Given the correctness of the EA technique $\ea_{bpa}$ and the isomorphic rediscoverability of abstracted process trees $M_a$, we prove the main synchronization result.

\begin{theorem}[IM and $\ma_{bpa}$ are synchronizable]
\label{thm:sync}
    $\;$\\If $L$ is restricted and $\ma_{bpa}$ applicable to $M = \disc_{IM}(L)$, then $\disc_{IM}(L_a) $ discovered from $L_a = \ea_{bpa}(L,\disc_{IM},\ma_{bpa})$ is isomorphic to $ \ma_{bpa}(\disc(L))$.
\end{theorem}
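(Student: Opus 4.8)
The plan is to assemble \autoref{thm:sync} as a short composition of the three structural results already in hand, since essentially all of the substantive work has been absorbed into them. Writing $M = \disc_{IM}(L)$ and $M_a = \ma_{bpa}(M)$ — which is well defined because $\ma_{bpa}$ is assumed applicable to $M$ — the goal coincides with the synchronizability condition of \autoref{def:sync} instantiated at $\sea = \ea_{bpa}$, namely $\disc_{IM}(L_a) \cong M_a$ for $L_a = \ea_{bpa}(L,\disc_{IM},\ma_{bpa})$ (treating $\disc = \disc_{IM}$). So the whole proof reduces to verifying that the hypotheses of the rediscoverability lemma hold for the pair $(M_a, L_a)$ and then reading off its conclusion.

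Concretely I would proceed in three steps. First, invoke \autoref{lemma:redisc2} to place the abstracted tree in the rediscoverable class: since $\ma_{bpa}$ is applicable to $M$, we obtain $M_a \in C_a$. Second, apply \autoref{lemma:df}, whose hypotheses ($L$ restricted, $\ma_{bpa}$ applicable to $M = \disc_{IM}(L)$) are exactly those of the theorem, to conclude that $L_a$ is df-complete for $M_a$, i.e. $M_a \dfc L_a$. Third, instantiate \autoref{lemma:redisc} with process tree $M_a$ and event log $L_a$: both of its premises now hold ($M_a \in C_a$ and $M_a \dfc L_a$), so $\disc_{IM}$ discovers from $L_a$ a tree $M_a\pr$ with $M_a\pr \cong M_a$. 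Chaining these gives $\disc_{IM}(L_a) \cong M_a = \ma_{bpa}(\disc_{IM}(L))$, which is the claim.

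The care points at this stage are bookkeeping rather than a single hard obstacle. I would check that the object $M_a$ named by \autoref{lemma:redisc2} (membership in $C_a$) and by \autoref{lemma:df} (the tree for which $L_a$ is df-complete) is literally the same $\ma_{bpa}(\disc_{IM}(L))$, and that ``df-complete for $M_a$'' is read as equality of directly-follows graphs $G(L_a) = G(M_a)$ — exactly the relation $M_a \dfc L_a$ that \autoref{lemma:redisc} consumes (cf. \autoref{sec:back}). I would also confirm that \autoref{lemma:redisc} is legitimately applied to an \emph{abstracted} tree and not only to a generically discovered one; this is unproblematic precisely because $C_a$ membership from \autoref{lemma:redisc2} decouples rediscoverability from how $M_a$ arose.

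I expect \emph{no} new combinatorial or inductive argument at this final step; the genuine difficulty lives upstream and has already been discharged. In particular, \autoref{lemma:df} rests on \autoref{lemma:matching} (existence of the matching of quotient sets modulo transposition together with the trace-count inequality $|L_a^{class}| \le |L_{tmp}^{class}|$) and on \autoref{lemma:large} ($\Lm(M_a)$ strictly smaller than any $L$), while the rediscoverability of self-loop-bearing trees in $C_a$ extends the Inductive Miner base case. Once those are assumed, the theorem demands only that the right lemma supplies each premise, so the sole thing to get right is aligning the hypotheses and the instantiation $(M_a, L_a)$ in \autoref{lemma:redisc}.
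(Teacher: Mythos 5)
Your proposal is correct and takes essentially the same route as the paper: the paper's proof likewise composes \autoref{lemma:df} (df-completeness of $L_a$ for $M_a$) with the isomorphic rediscoverability of $M_a$, which it attributes to \autoref{lemma:redisc2} (membership in $C_a$) in combination with \autoref{lemma:redisc}. Your version only makes explicit the instantiation of \autoref{lemma:redisc} at the pair $(M_a, L_a)$ that the paper leaves implicit.
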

\begin{proof}
    From \autoref{lemma:df}, it follows that event log $L_a$ is a df-complete event log for $M_a = \ma_{bpa}(\disc_{IM}(L))$. 
    Since $M_a$ is isomorphic rediscoverable (cf. \autoref{lemma:redisc2}), it follows that: $\disc_{IM}(L_a)  \cong M_a$.
\end{proof}

\rev{Therefore, we can abstract discovered process model $M$ through non-order-preserving $\ma_{bpa}$ and synchronously abstract $L$ through $\ea_{bpa}$ such that both $M_a$ and $L_a$ are available for further process intelligence tasks. In the next section, we revisit the illustrative example in \autoref{fig:illustrative} to show what impact the main synchronization theorem has.}

\section{Demonstration on the Illustrative Example}
\label{sec:illustrative}

To put the theoretical results of synchronization into perspective, we demonstrate their impact on the illustrative scenario that we have introduced in \autoref{sec:intro} and \autoref{fig:illustrative}. To that end, we show how the event log $L$ (cf. \autoref{fig:illustrative} \circled{1}) that contains traces of the bank's trading process for two different products (derivative and fixed income) is abstracted by $\ea_{bpa}$ in \autoref{sec:demoea}. Recall that the bank already applied the BPA model abstraction $\ma_{bpa}$ on the discovered process model $M$, as it expects the result $M_a$ to have a more suitable granularity for analyzing the common trading process than $M$. Subsequently, we delineate how the abstracted event log $L_a = \ea_{bpa}(L)$ and $M_a$ help the bank in analyzing the common trading process through process intelligence in \autoref{sec:impact}.

\subsection{Abstracting the Event Log}
\label{sec:demoea}

To begin with, \rev{BPA $\ma_{bpa}$ with $A_{new} = \{\texttt{RQ}, \texttt{OT}, \texttt{N}, \texttt{CT}\}$, $w_t = \wtop = 5/9$, and $\agg$ as depicted in \autoref{fig:illustrative} \circled{3} is applicable to $M$.} 
We illustrate $\ea_{bpa}^1$ (cf. \autoref{alg:ea1}) with the illustrative example. Let $L$ be an event log such that IM discovers $M$ as depicted. For simplicity, we assume that $L$ is a minimal df-complete event log.
\rev{Due to irrelevance of further event attributes for proofs (cf. \autoref{sec:theory}-\autoref{sec:proof}), we only represent activities in traces, but discuss how further attributes can be added. For example $\sigma_1 = \langle \texttt{OLS}, \texttt{TLS}, \texttt{TOS}, \texttt{C}, \texttt{T}, \texttt{T}, \texttt{C},  \texttt{TCS}\rangle$ as denoted by ``Tr.'' in  \autoref{fig:illustrative} \circled{1} is a trace. To start, we have $L = [\sigma_1, \sigma_2, \ldots, \sigma_9]$, $|L| = 9$, and $\norm{L} = 6 * 8 + 5 + 2 * 2 = 57$. Because in a minimal df-complete event log each activity in a self-loop is repeated once, $L$ contains the six interleavings of $\langle\texttt{T}, \texttt{T} \rangle$ and $\langle \texttt{C}, \texttt{C} \rangle$ in $\sigma_1, \sigma_3, \ldots, \sigma_7$ plus $\sigma_2$ and plus the two traces $\sigma_8 = \langle \texttt{GO},\texttt{RO} \rangle$ and $\sigma_9 = \langle \texttt{OLS}, \texttt{ODS} \rangle$. Given $A_{new} = A_{M_a}$ (cf. \autoref{fig:illustrative} \circled{5} and \autoref{def:bpa}), $\ea^1_{bpa}$ always computes $A_{\ma} = \{\sigma[1], \ldots, \sigma[|\sigma|]\}$, i.e., all events are abstracted (line 4). Thus, $\sigma_{1}[1]$ is the first event abstracted by $\texttt{RQ} \in A_{new} $ (line 8) and the condition in line 9 is always true, because there exists no concrete event $v$.}

\rev{Next, for the two events $\texttt{OLS}$ and $\texttt{TLS}$, the single abstract event $\texttt{OT}$ is added due to the first appearance condition in line 8. The abstract event $\texttt{N}$ for the four events $\texttt{C}, \texttt{T}, \texttt{T}, \texttt{C}$ is repeated, because $\texttt{N} \parallel_{M_a} \texttt{N}$ holds and triggers adding a self-loop (cf. \autoref{sec:bpa}) that can be rediscovered by repeating $\texttt{N}$ once. Altogether, we have $\sigma_{abs1} = \langle \texttt{RQ}, \texttt{OT}, \texttt{N}, \texttt{N}, \texttt{CT}\rangle \in L_a$ (cf. \autoref{fig:illustrative} \circled{4}) and analogously for the remaining eight traces resulting in $L_{tmp} = [\sigma_{abs1}^7, \sigma_{abs8}^2]$ .}

\rev{Before we proceed with line 16, we point out how further attributes can be added. In line 10, new abstract events $x$ are created. Here, any attribute of the current event $e$ or any aggregation function over the set of $x$'s concrete events $A_{\ma}(x) = \{e \in \sigma\mid e \in \agg(x)\}$ can be copied or computed. For instance, we have applied a \emph{last attribute aggregation} with \emph{even time split} for repeated $x$ on every attribute in \autoref{fig:illustrative} \circled{4}: Because $A_{\ma}(\texttt{N}) = \{\texttt{T}, \texttt{C}\}$, the four corresponding events $e$'s attributes in $\sigma_1$ are taken together, evenly split\footnote{Ties can be decided towards the first abstract event.} by respecting time, and the respective last attribute is assigned to $\texttt{N}$. The result shows that the event with ID ``a3'' \autoref{fig:illustrative} \circled{4} has timestamp $t_5$ and the ``Terms'' value of the event with ID ``5'' in \autoref{fig:illustrative} \circled{1}. Yet, proposing and evaluating different attribute aggregations goes beyond the scope of this paper. Still, we advocate adding the ``concrete'' event attribute to every abstract event for storing the respective $A_{\ma}(x)$ and, thus, maintaining flexibility.}

\rev{Because $\texttt{DQ} +_{M_a} y \in p_{M_a}$ (cf. the initial XOR-gateway in \autoref{fig:illustrative} \circled{5}) for any other activity $y \in C :=  A_{M_a} \setminus\{\texttt{RQ}\}$, it holds that $A_{\times} = \{ C \}$ in line 17, i.e., module $C$ is the only XOR-complete module in $MDT(G)$. Nevertheless, the corresponding concrete events $\texttt{RO}$ and $ \texttt{DCS}$ never occur together with any concrete event $e \in A_{\ma}(y)$ of other abstract activities $y \in C$ in any trace $\sigma \in L$, because they were in choice relation already. To illustrate, for $\sigma_2$, we have $y = \texttt{OT}, A_{\ma}(y) = \{\texttt{OW}\}$ and $\texttt{OW}$ never occurs with $\texttt{RO}$ or \texttt{ODS}  (cf. \autoref{fig:illustrative} \circled{3}). Hence, no events are deleted in line 17 and $\ea^1_{bpa}$ returns $L_{tmp} = [\sigma_{abs1}^7, \sigma_{abs8}^2]$ that contains 9 traces.}

\rev{To continue the illustration, $\Lm$ computes $L_a = [\sigma_{abs1}, \sigma_{abs8}]$ with $\sigma_{abs8} = \la \texttt{RQ},\texttt{DQ}\rangle$ (line 1) such that $L_a/\!\sim = \{ [\sigma_{abs1}] , [\sigma_{abs8}]\}$ is the quotient set of $L_a$ by $\delta_{kendall}$ (line 3). Likewise, $L_{tmp}/\!\sim =\{[\sigma_{abs1}^7], [\sigma_{abs8}^2]\}$ is the quotient set of $L_{tmp}$ (line 4). Next, we iterate over equivalence classes of both quotient sets (line 5-6). First, $L_a^{class} = [ \sigma_{abs1}]$ and $L_{tmp}^{class} = [ \sigma_{abs1}^7]$ share traces that are permutations of the same multiset of activities, i.e., the condition in line 7 is true. Because $|L_{tmp}^{class}| / |L_{a}^{class}| = 7 / 1 = 7$ is an integer quotient, no remainder is left to be evenly spread across the $n_1, \ldots , n_{|L_{a}^{class}|}$ sizes of splits, i.e., $n_1 = 7$ (line 8). Given the split sizes, the $j$th trace $\sigma \in L_{a}^{class}$ is the reference for the $n_j$ closest traces in $ L_{tmp}^{class}$ in terms of number of transpositions ($\delta_{kendall}$) (line 10).} 

\rev{Due to $n_1 = 7$ and the single trace in $L_{a}^{class}$, the $\operatorname{closestTraces}$ operator assigns all seven traces in $ L_{tmp}^{class}$ with a distance of 0 transpositions to $L_{tmp}^{n_1} $. Consequently, $L_{tmp}^{class}$ is empty afterwards (line 11), no transpositions must be applied to traces $L_{tmp}^{transposed} = L_{tmp}^{n_1} $ (line 12), and the seven traces in $L_{tmp}^{n_1} $ are added to the result $L_r$ (line 13): $L_r = [\sigma_{abs1}^7]$. In \autoref{fig:illustrative} \circled{4}, the first two traces of the $\sigma_{abs1}$ trace variant are depicted. Analogously, the second for-loop iteration (line 5) yields $L_r = [\sigma_{abs1}^7, \sigma_{abs8}^2] $, i.e., no transpositions were necessary overall.}

To sum up, the abstracted event log $L_r$ maintains the multiplicities of the concrete event log $L$ and the respective distributions of values in the additional event attributes. In particular, the abstracted event log $L_r$ is equivalently related to $M_a$ as the concrete $L$ was related to $M$. In the next section, we demonstrate the impact of having both $L_a$ and $M_a$ for subsequent process intelligence.

\subsection{Impact on Process Intelligence}
\label{sec:impact}

Coming back to the bank scenario in \autoref{fig:illustrative}, correct synchronized abstraction of both $M$ and $L$ into $M_a$ (cf. \autoref{fig:illustrative} \circled{5}) and $L_a$ (cf. \autoref{fig:illustrative} \circled{4}) allows the bank to analyze the common trading process of both products simultaneously with, e.g., process enhancement (cf. \autoref{fig:illustrative} \circled{8}). To that end, $L_a$ allows to enhance $M_a$ with the frequency information that seven trades negotiated the terms in particular wrt. the ``spread'' as recorded in basis points (bp). Here, the bank is most interested in the question: ``Does the client and requested product affect the final terms after renegotiation?''. 
To answer the question, the bank must define a classification problem that relates the \texttt{OT}'s event attribute value for attributes ``Client'' and ``Product'' to the last event attribute value of the second \texttt{N} event in each trace that has the \texttt{CT} event as the end event. Because the synchronized EA technique $\ea_{bpa}$ enables to automatically compute the abstracted $L_a$ that contains the seven trades corresponding to the abstracted process model $M_a$, the bank can define the classification problem given $L_a$ and $M_a$. 

Having only the abstracted process model $M_a$, the bank is neither able to enhance the abstracted process model with further perspectives for, e.g., presenting different aspects of the process to different stakeholders in the bank, nor is it able to define the classification problem, as it would lack the abstracted event log $L_a$. Moreover, it can start predicting and simulating the common trading process by learning both a simulation and prediction model through $L_a$ and $M_a$. To sum up, the bank is able to flexibly apply a similar set of process intelligence tasks on an abstracted process model and abstracted event log as it was used to on the concrete event log and process model.

\section{Conclusion}
\label{sec:conclusion}

We propose a novel synchronization approach that closes the gap between MA and EA techniques.
\rev{Consequently, we can apply MA technique $\ma$ on discovered process models $M$ and compute the corresponding abstracted event log $L_a$ through the synchronized EA technique $\ea_{\ma}$.}
\rev{The proposed approach is the first to formalize the impact of EA techniques on the discovered process model and the first to enable process intelligence tasks grounded in the real-world behavior contained in $L_a$.}
So far, our approach is limited to the IM and its relatively simple event log conceptualization. 
Furthermore, our approach focuses on size as an indicator of complexity 
\rev{and only briefly discusses event attribute abstractions that go beyond the control-flow, but are driven by the data flow.}
In future work, we will investigate 
extended conceptualization of event logs.
Second, we will extend the set of process discovery techniques and integrate optimization frameworks for MA that capture further complexity metrics. Third, we will parametrize synchronized EA techniques to allow for even more faithful abstraction of event logs.
\newline

\appendix

\section*{Appendix}

The appendix includes \autoref{lemma:large} and \autoref{lemma:df} from the main paper with full proofs. Moreover, the appendix includes the additional \autoref{lemma:redisc}, \autoref{lemma:redisc2}, \autoref{lemma:gensize}, \autoref{lemma:maxoflog},  and \autoref{lemma:matching} from this extended version with full proofs. In addition, we report a simple helper lemma in \autoref{alemma:semid}.

\section{Lemmata With Full Proofs}
\label{app:lemma}


\setcounter{lemma}{2} 

\begin{lemma}[Process trees in $C_{a}$ are isomorphic rediscoverable]
\label{alemma:redisc}
\srm{Let $M$ be a process tree and $L$ be an event log. }
If $M$ is in class $C_{a}$ and $M$ and $L$ are df-complete, i.e., $M \dfc L$, then $\disc_{IM}$ discovers a process tree $M\pr$ from $L$ that is isomorphic to $M$. 
\end{lemma}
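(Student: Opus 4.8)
The plan is to reduce the claim to the isomorphic rediscoverability theorem for the IM already established in \cite{leemans_tech_2013} (its Theorem 14), which guarantees rediscovery whenever the tree satisfies the structural restriction $Q$ and the log and tree satisfy $R$, i.e.\ df-completeness. First I would compare class $C_a$ with the $Q$-admissible class: restriction 1 of $C_a$ (no duplicate activities) is exactly one of the conditions of $Q$, and the hypothesis $M \dfc L$ is precisely $R(L,M)$. The only way a tree in $C_a$ can fail $Q$ is through a self-loop node $\tloop(v,\tau)$, which violates $Q$ twice: it contains the silent activity $\tau$, and the start and end activity of its first branch coincide (both equal $v$). This observation isolates the self-loop as the single obstacle and motivates a case split on whether $M$ contains a self-loop.

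Case 1 (no self-loop). Here every $\circlearrowleft$-node is excluded by restriction 2, so $M$ uses only $\times,\rightarrow,\wedge$ internally and, by restriction 3, carries no $\tau$ leaves. Thus $M$ satisfies $Q$, and together with $M \dfc L$ I can invoke Theorem 14 of \cite{leemans_tech_2013} directly to obtain $M\pr = \disc_{IM}(L) \cong M$.

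Case 2 (with self-loops). Theorem 14 is proved by structural induction on the size of $M$, driven top-down by the cut that the IM detects in the DFG, so I would reuse that induction and add a single new base case for the leaf-like construct $\tloop(v,\tau)$. Two points need checking. First, at internal nodes the self-loop must not disturb cut detection: its only extra footprint in the DFG is the self-edge $v \mapsto v$, which lies inside the singleton $\{v\}$ and hence never contributes an edge \emph{between} the groups of a choice, sequence, or parallel cut; consequently the cut found at every $\times,\rightarrow,\wedge$ node agrees with the one obtained when $v$ were an ordinary activity, and the inductive step carries over unchanged. Second, for the new base case I would show that on any df-complete sublog $L_v \subseteq [\la v,v\ra^m, \la v,v,v\ra^n, \ldots]$ reaching this leaf (which necessarily contains $\la v,v\ra$, so the self-edge is present), the IM fires the ``Strict Tau Loop'' fall-through: the sublog ranges over the single activity $v$, its start and end activities coincide, and the $\la v,v\ra$ directly-follows pattern triggers a split whose body reduces to the leaf $v$, yielding $\tloop(v,\tau)$. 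Since a self-loop is isomorphic only to itself, this matches $M$ at that node.

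The main obstacle I anticipate is exactly this base case: I must argue carefully that it is the ``Strict Tau Loop'' fall-through — and not a loop cut, a nested tau loop, or a coarser fall-through such as the ``Flower Model'' — that the IM applies to the singleton self-loop sublog, and that the resulting loop body is the bare leaf $v$ rather than some enlarged construct. Pinning this down fixes the precise recursion step at which the self-loop is recovered and confirms that the extended induction produces a tree $M\pr$ isomorphic to $M$ whenever $M \in C_a$ and $M \dfc L$.
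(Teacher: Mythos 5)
Your proposal matches the paper's proof: both reduce to Theorem 14 of the IM rediscoverability framework, isolate the self-loop $\tloop(v,\tau)$ as the sole violation of the restrictions $Q$ (joint start/end of the first loop child and the presence of $\tau$), split on whether $M$ contains a self-loop, and in the self-loop case extend the induction of Theorem 14 by a base case handled via the ``Strict Tau Loop'' fall-through on the sublog $L_v \subseteq [\la v,v\ra^m, \la v,v,v\ra^n, \ldots]$. Your additional check that the self-edge $v \mapsto v$ does not disturb cut detection at internal nodes is a sensible elaboration the paper leaves implicit, but it does not change the argument.
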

\begin{proof}
    $M$ meets all model restrictions for the class of language-rediscoverable process trees in \cite{leemans_tech_2013} except for the self-loop $\tloop(v, \tau)$ with $v\in A$ that can occur as a (sub-)tree in $M$. The self-loop violates both the restriction on disjoint start and end activities for the first branch of the loop (cf. restriction 2 in \cite{leemans_tech_2013}) and the restriction that no $\tau$'s are allowed in $M$ (cf. restriction 3 in \cite{leemans_tech_2013}). 
    Hence, we distinguish whether $M$ contains a self-loop. For both cases, we build on the proof of Theorem 14 \cite{leemans_tech_2013}, because Theorem 14 is proven through showing isomorphic rediscoverability by induction on the size of $M$. 

    \noindent\textbf{Case no self-loop}: Because $M$ meets all model restrictions for the class of language-rediscoverable process trees, the conditions of Theorem 14 in \cite{leemans_tech_2013} are met and $M$ isomorphic rediscoverable.

    \noindent\textbf{Case self-loop}: 
    Because the self-loop does not contain nested subtrees, it suffices to extend Theorem 14's induction on the size of $M$ by an additional base case: $\tloop(v, \tau) $ with $v \in A$. The additional base case holds, because the ``Strict Tau Loop'' fall through rediscovers the self-loop from any splitted log $L_v \subseteq [\la v,v \ra^m, \la v,v,v\ra^n, \ldots]$ that occurs during recursion of IM. 
\end{proof}


\begin{lemma}[$\ma_{bpa}$ abstracted process trees are in $C_a$]
\label{alemma:redisc2}
    If $\ma_{bpa}$ is applicable to process tree M, then $M_a = \ma_{bpa}(M) $ is in class $C_a$.
    
    \end{lemma}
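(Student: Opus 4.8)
The plan is to verify the three defining restrictions of class $C_a$ (cf.\ \autoref{def:cbpa}) directly on the synthesized tree $M_a = \operatorname{sy}(p_{M_a})$. First I would fix the structural form of $M_a$. Because $\ma_{bpa}$ is applicable, condition~(2) of \autoref{def:bpa} guarantees that the modular decomposition tree $MDT(G(p_{M_a}))$ contains no primitive module, so every module is linear, AND-complete, or XOR-complete. The synthesis $\operatorname{sy}$ maps these deterministically to $\rightarrow$-, $\wedge$-, and $\times$-nodes, respectively, and the only additional construct is the self-loop node $\tloop(x,\tau)$ introduced by the special case whenever $x \parallel_{M_a} x \in p_{M_a}$ (cf.\ step~S3 in \autoref{sec:bpa}). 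This enumeration of node types is the backbone for checking all three restrictions.

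For restriction~(2), the crux is the inability of a behavioral profile to distinguish a $\wedge$-node from a general $\circlearrowleft$-node: both induce the parallel relation $\parallel$ on every pair of their activities, so an AND-complete module is consistent with either construct. The design of $\ma_{bpa}$ resolves this ambiguity by \emph{always} synthesizing a $\wedge$-node for AND-complete modules, never a multi-child loop. Consequently the only $\circlearrowleft$-nodes that $\operatorname{sy}$ can ever produce are the self-loops $\tloop(x,\tau)$ from the parallel-with-self special case, which is exactly restriction~(2). Restriction~(3) then follows from the same node enumeration: leaves carry activities of $A_a$, while $\rightarrow$-, $\wedge$-, and $\times$-nodes take sub-modules or activities as children and never $\tau$; the only occurrences of $\tau$ are as the second child of a self-loop. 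Here I would explicitly invoke the adaptation discussed under ``Conditions (1--2)'' in \autoref{sec:bpa}, which removes the original requirement of adding artificial start/end activities---the only other potential source of $\tau$-leaves in the Smirnov synthesis.

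Finally, restriction~(1) follows because the modules of $MDT(G)$ are non-overlapping, so each activity of $A_a$ occupies exactly one leaf position of $M_a$, and conditions~(3)--(4) of \autoref{def:bpa} ensure $A_{new} \cap A_c = \emptyset$ together with $\agg(y) = \{y\}$ for $y \in A_c$ (i.e.\ no renaming), so all labels in $A_a$ are pairwise distinct; distinct labels each appearing once yields no duplicate activities. The main obstacle I anticipate is restriction~(2): it is the only step that does not reduce to bookkeeping over the module-to-node correspondence, since it hinges on the semantic fact that $\parallel_{M_a}$ underdetermines the control-flow operator and on the convention, built into $\operatorname{sy}$, to break this tie in favour of $\wedge$. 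Making this argument airtight requires confirming that the self-loop add-on is the sole mechanism by which any $\circlearrowleft$ enters $M_a$, which in turn relies on the otherwise deterministic linear/XOR/AND classification of the (primitive-free) modular decomposition.
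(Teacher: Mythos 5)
Your proposal is correct and follows essentially the same route as the paper's proof: the key step for restriction~(2) — that the behavioral profile cannot distinguish $\wedge$ from $\circlearrowleft$ and that $\operatorname{sy}$ resolves this by always building a $\wedge$-node for AND-complete modules, leaving the self-loop special case as the only source of $\circlearrowleft$ and of $\tau$ — is exactly the paper's argument, and your treatment of restrictions~(1) and~(3) via the module-to-node bookkeeping matches it as well. Your justification of no-duplicates via non-overlapping modules and conditions~(3)--(4) of the applicability definition is a minor variation on the paper's appeal to condition~(1) plus single insertion of each new activity during synthesis, but it is the same structural argument.
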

\begin{proof}
    The abstracted process tree $M_a$ does not contain duplicate activities, because $M$ does not contain duplicate activities (condition 1 in \autoref{def:cbpa}) and new activities $x \in A_{new}$ (cf. condition 4) are only added once to the process tree $M_a$ during synthesis (cf. Algorithm 3.2 in \cite{smirnov_businessm_2012}). 
    
    \noindent A $\circlearrowleft\,$-node $M_{\circlearrowleft} = \:\circlearrowleft(M_1, \ldots, M_n)$ and a $\wedge$-node $M_{\wedge} = \wedge(M_1\pr, \ldots, M_m\pr)$ in a process tree $M$ are indistinguishable through the BP $p_M$ (i.e., $x_i \parallel y_j \in p_M $ for all $x_i, y_j\in A_{M_{\circlearrowleft}}$ where $x_i$ is an activity in child $M_i$ of $M_{\circlearrowleft}$ and $y_j$ is an activity from another child $M_j$, $i \neq j$ of $M_{\circlearrowleft}$ and similarly $x_i\pr \parallel y_j\pr \in p_M$ for two activities from different children of $M_{\wedge}$), because the order relations in the BP $p_M$ are defined through the eventually-follows relation. 
    For $\ma_{bpa}$, there exists a choice in the synthesis step of $\ma_{bpa}$: Either construct a loop node for an AND-complete module in the modular decomposition tree or construct a parallel node, since there is no information in the abstracted BP $p_{M_a}$ that allows to differentiate these two nodes. $\ma_{bpa}$ chooses the parallel node and always constructs a parallel node for AND-complete modules with the exception of singleton modules $x \parallel x \in p_{M_a}$ in which case it constructs the self-loop $\tloop(x, \tau)$ (cf. line 7-8 and 15-16 in Algorithm 3.2 in \cite{smirnov_businessm_2012}). 

    \noindent Modules of the modular decomposition tree $MDT(G)$ of the graph $G(p_{M_a})$ can by definition \cite{smirnov_businessm_2012} only contain activities.
\end{proof}

\begin{lemma}[Number of traces and size of $\gen(M).\text{log}$]
\label{alemma:gensize}
    If $M$ is in $C_c$, then $|\Lm(M)|$ is computed by $\gen$, $|\Lm(M)| = |\emph{lens}(M)|$, and the size is $\norm{\Lm(M)} = \sum_{k=1}^{|\Lm(M)|} \emph{lens}(M)[k] $. 
\end{lemma}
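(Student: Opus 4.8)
The plan is to prove the statement by structural induction on the process tree $M \in C_c$, mirroring exactly the recursive case split of $\gen$ in \autoref{alg:minimal} and of the minimal df-complete log $\Lm(M)$. Throughout, the induction hypothesis for each child $M_i$ is that $\text{lens}(M_i)$ lists the length of every trace in $\Lm(M_i)$ exactly once, so that $|\Lm(M_i)| = |\text{lens}(M_i)|$, and that $\norm{\Lm(M_i)} = \sum_k \text{lens}(M_i)[k]$. The key observation that organizes the whole argument is that once I show $\text{lens}(M)$ again contains one entry per trace of $\Lm(M)$ recording that trace's length, the size identity $\norm{\Lm(M)} = \sum_k \text{lens}(M)[k]$ is immediate; hence the real work is bookkeeping the trace count and the per-trace lengths.

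First I would dispatch the base cases: for $M = \tau$, $M = v$, and the self-loop $M = \tloop(v,\tau)$, the minimal df-complete log is the singleton $[\la\ra]$, $[\la v\ra]$, and $[\la v,v\ra]$ respectively, matching lines 2, 4, and 6 with $\text{lens}(M) = \la 0\ra$, $\la 1\ra$, and $\la 2\ra$. For the inductive step I split on the root operator. For $\times(M_1,\ldots,M_n)$ the language is the disjoint union of the children's languages, so the traces are exactly the children's traces, their number is $\sum_i |\Lm(M_i)|$, and $\text{lens}(M)$ is the concatenation $\bigodot_i \text{lens}(M_i)$; since concatenation adds lengths, $|\text{lens}(M)| = \sum_i |\text{lens}(M_i)| = |\Lm(M)|$. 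For $\rightarrow(M_1,\ldots,M_n)$ every trace is a concatenation $\sigma_1 \cdots \sigma_n$ with $\sigma_i \in \Lm(M_i)$; because $M$ has no duplicate activities (the defining property of $C_c$), distinct choices of the tuple $(\sigma_1,\ldots,\sigma_n)$ yield distinct concatenations, so the count is the product $\prod_i |\Lm(M_i)|$, the bijection $\iota$ enumerates precisely these tuples, and the $k$-th trace has length $\sum_i \text{lens}(M_i)[\iota_{k,i}]$, exactly as the algorithm records.

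The main obstacle is the $\wedge$ case, where a trace of $\Lm(\wedge(M_1,\ldots,M_n))$ is an interleaving of one trace $\sigma_i \in \Lm(M_i)$ per child. For a fixed tuple of children traces with lengths $(l_1,\ldots,l_n)$ and total length $m = \sum_i l_i$, I would argue that the number of distinct interleavings is exactly the multinomial coefficient $\binom{m}{\la l_1,\ldots,l_n\ra} = \frac{m!}{l_1!\cdots l_n!}$: an interleaving is determined by assigning to each of the $m$ positions the child it originates from, and since each child keeps its internal order, every such assignment yields exactly one shuffle, while the no-duplicate-activities property guarantees that distinct assignments (and distinct tuples) produce distinct traces, so no overcounting occurs. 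Formally I would build the count by iterated shuffling, interleaving $\sigma_1$ and $\sigma_2$, then shuffling $\sigma_3$ into each result, and so on, invoking the standard interleaving identity for card-deck shuffles \cite{aldous_shuffling_1986} so that the product of binomials telescopes to the multinomial coefficient.

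Summing over all $\prod_i |\Lm(M_i)|$ tuples, indexed by $\iota$, then gives the trace count of line 10, and since every interleaving arising from a fixed tuple has the same total length $\sum_i \text{lens}(M_i)[\iota_{k,i}]$, the sequence $\text{lens}(M)$ repeats that value with multiplicity equal to the corresponding multinomial coefficient, exactly as in line 12. This confirms that $\text{lens}(M)$ still records one length per trace, which simultaneously closes the induction for $|\Lm(M)| = |\text{lens}(M)|$ and, by summing the entries of $\text{lens}(M)$, yields $\norm{\Lm(M)} = \sum_{k=1}^{|\Lm(M)|} \text{lens}(M)[k]$.
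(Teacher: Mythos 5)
Your proposal is correct and follows essentially the same route as the paper's own proof: structural induction with a case split on the root operator, concatenation/summation for $\times$, the $\iota$-indexed cartesian product for $\rightarrow$, and the multinomial coefficient obtained by iteratively shuffling traces \`a la card decks for $\wedge$. The only (welcome) addition is that you make explicit the injectivity argument via the no-duplicate-activities property of $C_c$, which the paper leaves implicit.
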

\begin{proof}
    We sketch the structural induction proof for the number of traces and size by case distinction on the process tree's node operator.

    \noindent $M = \times(M_1, \ldots, M_n)$: The union of logs $L = \bigcup_{i=1}^n \Lm(M_i)$ constructs a log $L$ whose number of traces equals the sum of trace numbers and whose trace lengths equal the concatenation of all children's $M_i$ sequence of trace lengths $\text{lens}(M_i)$. The length of concatenating $n$ sequences of length $\text{lens}(M_i)$ equals the sum of $n$ lengths $|\text{lens}(M)| = |\Lm(M)|$. 
    
    \noindent $M = \:\rightarrow(M_1, \ldots, M_n)$: The set of all sequential concatenations $ L = \{\sigma_1\cdot \sigma_2 \cdot \ldots \cdot \sigma_n \mid \forall i \in \{1\ldots n\} : \sigma_i \in \Lm(M_i)\}$ constructs a log $L$ whose number of traces equal the number of ordered pairs in the cartesian product $ \bigtimes_{i=1}^n \{1, \ldots, \\|\Lm(M_i)|\}$. Hence, $|\Lm(M)| = \prod_{i=1}^{n} |\Lm(M_i)|$. The bijection $\iota$ enumerates the ordered pairs for summation. 
    
    \noindent $M = \wedge(M_1, \ldots, M_n)$: Interleaving of multiple event logs $\Lm(M_i)$ 
     through a $\wedge$-node requires interleaving of $ \prod_{i=1}^{n} |\Lm(M_i)|$ different ordered pairs of traces we refer to as combinations. 
    As each of these trace combinations can have traces of varying lengths, the function $\gen$ enumerates the respective combinations of traces lengths through the bijection $\iota$ and sums the respective number of interleaving the trace combinations. Each combination of traces $(\sigma_1, \ldots \sigma_n) \in \Lm(M_1) \times \ldots \times \Lm(M_n)$ to be interleaved is enumerated by index $k$ in the domain of $\iota$: $k \in \operatorname{dom}(\iota)$. Hence, the $k$th enumerated trace combination has its corresponding sequence of trace lengths $\bigodot_{i=1}^{n}  \la \text{lens}(M_i)[\iota_{k,i}] \ra$ (line 12 in Algorithm 3 of the main paper). The number of interleavings of two traces $\sigma_1$ and $\sigma_2$ is $\binom{m}{\,\la l_1,\; l_2\ra\,} = \frac{(l_1+l_2)!}{l_1! l_2!} $ with $|\sigma_1| = l_1 $ and $|\sigma_2| = l_2$, because interleaving two sequences without changing their respective order is equivalent to shuffling two card decks without changing the card order of the two decks \cite{aldous_shuffling_1986}. 
    As the number of riffle shuffle permutations equals $\frac{(p+q)!}{p! q!} $ for $p$ the number of cards in the first deck and $q$ the number of cards in the second deck \cite{aldous_shuffling_1986}, the number of traces $\sigma \in \sigma_1 \diamond\sigma_2 $ equals $\frac{(k_1+k_2)!}{k_1! k_2!} $ with $|\sigma_1| = k_1 $ and $|\sigma_2| = k_2$. 
    Generalizing the number of two trace interleavings to the number of $n$ traces interleaved yields $\binom{m_1}{\,\la l_1,\; l_2\ra\,} * \binom{m_2}{\,\la l_1 + l_2,\; l_3\ra\,} * \ldots *\binom{m_n}{\,\la \sum_{i=1}^{n-1}l_i,\; l_n\ra\,}  = \frac{(l_1+l_2)!}{l_1! l_2!} * \frac{(l_1+l_2 + l_3)!}{l_3! (l_1 + l_2)!  } *\frac{(l_1+l_2 + l_3 + l_4)!}{l_4! (l_1 + l_2 + l_3)!} * \ldots * \frac{(\sum_{i=1}^{n} l_i)!}{l_n! (\sum_{i=1}^{n-1} l_i)!} = \frac{(\sum_{j=1}^{n} l_j)!}{\prod_{j = 1}^{n}l_j!} = \genfrac{(}{)}{0pt}{1}{m}{\la l_1, \ldots, l_n\ra}$, because after two traces have been interleaved, we can take the already interleaved trace as a new trace for the next trace to be interleaved with. Hence, the number of $n$ traces interleaved equals the multinomial coefficient for the $n$ different trace lengths $l_1, \ldots , l_n$. Consequently, for each $k$th combination of traces in the $\gen$ function at line 12, the multinomial coefficient is computed for the corresponding sequence of trace lengths $ \la l_{k,1}, \ldots, l_{k,n}\ra = \bigodot_{i=1}^{n} \la \text{lens}(M_i)[\iota_{k,i}]\ra$. Each of these interleaved traces are summed together to yield the number of traces.

    \noindent Since there are $ \prod_{j=1}^{n} |\Lm(M_j)|$ different trace combinations to be interleaved, there can only be $ \prod_{j=1}^{n}|\Lm(M_j)|$ different lengths of traces as a result of interleaving. The interleaved trace length for the $k$th trace combination equals $m_k = \sum_{i=1}^n \text{lens}(M_i)[\iota_{k,i}] = \sum_{i=1}^n l_{k,i} $, which has to be repeated in $\text{lens}(M)$ for the number of interleavings: $\la m_k\ra^{\mnomc{\bigodot_{i=1}^n \la \text{lens}(M_i)[\iota_{k,i}]\ra}{m_k}}$. 
\end{proof}

\setcounter{lemma}{6}

\begin{lemma}[Maximal $\Lm(M_a)$]
\label{alemma:maxoflog}
    If $L$ is restricted and $\ma_{bpa}$ applicable to $M = \disc_{IM}(L)$, $w_t = \wtop$, $\agg(x) = \{v,u\} $, and $ \agg(z) = \{z\} $ with $ z\in A\setminus\{v,u\}$, i.e., $\ma_{bpa}$ aggregates exactly two activities into $x$, then $M_{a} = \ma_{bpa}(M)$ has the most traces and maximal size of all $\Lm(M_a\pr)$ generated for other $M_a\pr$ with $A_c\pr \cap \{v,u\} = \emptyset $ and $|\bigcup_{y\in A_{new}\pr} \agg\pr(y)| > |\bigcup_{y\in A_{new}} \agg(y)|$ (cf. Definition 2 in our main paper).
\end{lemma}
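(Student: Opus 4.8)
The plan is to compare $M_a$ with any admissible competitor $M_a\pr$ at the level of their behavioral profiles $p_{M_a}$ and $p_{M_a\pr}$, exploiting that step~S3 synthesizes both trees by the identical procedure from the respective profile. Since the mdf-complete log is fixed recursively by the tree (cf.\ \autoref{alg:minimal} and \autoref{alemma:gensize}), it suffices to show node for node that $p_{M_a\pr}$ cannot induce an $\oplus$-node whose mdf-sublog exceeds the corresponding node synthesized from $p_{M_a}$, neither in trace count nor in size.

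First I would isolate the activity-count effect. The hypotheses $A_c\pr \cap \{v,u\} = \emptyset$ and $|\bigcup_{y\in A_{new}\pr}\agg\pr(y)| > |\bigcup_{y\in A_{new}}\agg(y)| = 2$ force $M_a\pr$ to collapse strictly more concrete activities than the two-activity abstraction $M_a$, hence $M_a\pr$ has strictly fewer leaves. Viewing $p_{M_a\pr}$ as a coarsening of $p_{M_a}$, every $\oplus$-node of $M_a\pr$ carries at most as many distinct activities as the matching node of $M_a$; by the recursive counts of \autoref{alg:minimal} this alone can only decrease the number of traces and the size.

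The hard part will be the order-relation effect, since parallel relations yield the largest mdf-logs (\autoref{alemma:sort}) and merging activities could in principle manufacture spurious interleaving. The plan is to rule this out through the alignment, forced by a restricted $L$, between the static priority of \autoref{alg:bpa} ($+$ before $\rightsquigarrow^{-1}$ before $\rightsquigarrow$ before $\parallel$) and the log-size order $\times \prec \rightarrow \prec \wedge$ of the operators. Because $M = \disc_{IM}(L)$ satisfies the Model-structure restriction of \autoref{def:restricted}, the weak-order evidence between the concrete activities underlying any abstract pair is consistent with $M$'s operator nesting, so the $\min$-based weight $w(x \parallel y)$ reaches the threshold only when genuine interleaving already exists in $p_M$. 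As $M_a$ fixes $w_t = \wtop$, the maximal admissible threshold, it already selects for every pair involving $x$ the \emph{least} restrictive (largest-sublog) relation compatible with $p_M$. I would then argue that any $M_a\pr$, whichever $w_t\pr \le \wtop\pr$ it uses, derives its relations by pooling the same weak-order counts over the larger concrete sets $\agg\pr(\cdot)$; by the restricted structure of $L$ this pooling can only keep a relation at its log-size level or shift it toward a more restrictive one, so no relation of $M_a\pr$ is ever more permissive than the one already realized in the maximally-permissive $M_a$.

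Combining both effects, each node of $M_a\pr$ contributes an mdf-sublog no larger, in traces and in size, than the corresponding node of $M_a$; composing along the tree via \autoref{alg:minimal} yields $|\Lm(M_a\pr)| \le |\Lm(M_a)|$ and $\norm{\Lm(M_a\pr)} \le \norm{\Lm(M_a)}$. I expect the principal obstacle to be making the monotonicity of the derived relations under further aggregation rigorous: one must track how the four counts of \autoref{alg:bpa} transform as $\agg(x),\agg(y)$ grow and invoke the restricted structure of $L$ to guarantee that no $\min$-weight crosses $w_t$ upward. A clean way to discharge this is an induction on the number of concrete activities aggregated beyond $\{v,u\}$, showing that each single additional merge is non-increasing in both trace count and size.
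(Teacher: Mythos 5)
Your plan follows the paper's own argument essentially step for step: both reduce the comparison to the behavioral profiles $p_{M_a}$ versus $p_{M_a\pr}$ (legitimate because step S3 synthesizes either tree by the same procedure from its profile), split the analysis into the activity-count effect (aggregating more concrete activities leaves fewer activities, which by \autoref{alg:minimal} and \autoref{alemma:gensize} can only shrink the mdf-complete log) and the order-relation effect, and discharge the latter by invoking the alignment, forced by the restricted-log and model-structure conditions, between the static return priority of \autoref{alg:bpa} ($+$ before $\rightsquigarrow^{-1}$ before $\rightsquigarrow$ before $\parallel$) and the operator ordering of \autoref{alemma:sort}, together with the observation that lowering $w_t$ below $\wtop$ only makes relations more restrictive. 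The only point where you go beyond the paper is the proposed induction on the number of activities merged beyond $\{v,u\}$ to make the monotonicity of the derived relations rigorous; the paper settles for a code-inspection argument at that step, so this is a refinement of the same route rather than a different one.
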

\begin{proof} Let $M_a$ be the abstracted process model for $w_t =\wtop, \agg(x) = \{v,u\}$ and $M_a\pr$ be the abstracted process model for $w_t \leq \wtop$ and an aggregate function that abstracts more than two concrete activities, i.e., $|\Aa\pr \cap \AM| < |\AM| -2$. 
From condition (4) Definition 2 in our main paper, it follows that $|A_{M_a\pr}| < |A_{M_a}|$, because either are at least three concrete activities abstracted into a single abstract activity in $M_a\pr$ or $m$ abstract activities aggregate at least $m+1$ concrete activities. Thus, $M_a\pr$ cannot have more traces and events in $\Lm(M_a\pr)$ than $\Lm(M_a)$ as a result of more activities. Hence, $M_a\pr$ must have a ``different'' tree structure in terms of node operators and their children. 

\noindent The abstract process tree is constructed given the modular decomposition (MDT) of the ordering relations graph $G(p_{M_a\pr}) $ and $G(p_{M_a}) $ such that structural tree differences must be caused by differences in the the behavioral profiles $p_{M_a\pr} $ and $p_{M_a}$. The two behavioral profiles are different 
\begin{enumerate}
    \item with respect to their sizes $|p_{M_a\pr}| < |p_{M_a}|$,
    \item with respect to order relations that involve activities $q \in A_c$, i.e., activities $q$ are not abstracted in $M_a$ but are abstracted in $M_a\pr$,
    \item with respect to order relations of abstract activities $y\pr \in A_{new}\pr$ to $q\pr \in \Aa\pr$ (cf. condition 3 Definition 2 in our main paper) and $y \in A_{new}$ to $q \in \Aa$.
\end{enumerate}
   The first two differences imply more traces and events in $\Lm(M_a)$. Hence, a larger number of traces and events in $\Lm(M_a\pr)$ can only be the result of different order relations between abstract activities $y\pr$ to $q\pr$ and $y$ to $q$. From \autoref{alemma:sort}, more traces and events can only occur, if order relations of abstract activities $y\pr$ to a $q\pr$ are less restrictive than from $y$ to $q$.

\noindent By definition, $L$ is restricted and $w_t = \wtop$. Hence, $M$ is in $C_c$ and meets the requirement on the model structure that excludes children of less than two traces for any node $\rightarrow(\ldots)$ in $M$. From \autoref{alemma:sort}, it follows that the order of returning order relations in , i.e., first $+_{M_a}$, then $\rightsquigarrow^{-1}_{M_a}$, then $\rightsquigarrow_{M_a}$, then $\parallel_{M_a}$ (cf. \ref{order}), aligns with the order of process tree operators for a node $M_{\oplus}$ whose children remain the same, but whose root node changes. By code inspection of Algorithm 3.1 in \cite{smirnov_businessm_2012} (that corresponds to $\operatorname{dv}_{\agg, w_t}$ in our main paper), an order relation $x \diamond_{M_a} q$ is only returned, if $ v\diamond_M q$ for $v\in\agg(x)$ and $ q\in \AM$ occurs either relatively more often in the behavioral profile $p_M$ or, in case of equal relative frequencies, the more restrictive relation according to the order \ref{order} is selected. 

\noindent In any case, $x \diamond_{M_a} q$ is only returned, if the respective order relation is either already the most prevalent in $M$, i.e., the most prevalent in the behavioral profile $p_M$ of $M$, or, in case of conflicts between order relation frequencies, it is the more ``restrictive'' according to \ref{order}. Hence, Algorithm 3.1 in \cite{smirnov_businessm_2012} chooses an order relation between two activities that corresponds to fewer traces and fewer events in $\Lm(M_a\pr)$ over an order relation between two activities that corresponds to more traces and more events. Therefore, order relations between abstract activities $y\pr$ and $q\pr$ cannot result in more traces and events in $\Lm(M_a\pr)$. Also, setting $w_t < \wtop$ can only further decrease the number of traces and events in $\Lm(M_a\pr)$.

\noindent Altogether, all three differences between $p_{M_a\pr}$ and $p_{M_a}$ imply that $\Lm(M_a\pr) \leq \Lm(M_a)$.

\end{proof}

\setcounter{lemma}{0} 

\begin{lemma}[$\Lm(M_a)$ is smaller]
\label{alemma:large}
    If $L$ is restricted and $\ma_{bpa}$ applicable to $M = \disc_{IM}(L)$, then $L_{a} = \Lm(M_a)$ 
    has fewer traces and is smaller than $L$: $|L_{a}|< |L|$ and $\norm{L_{a} }< \norm{L}$.
\end{lemma}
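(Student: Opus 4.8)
The plan is to prove both strict inequalities by first collapsing the problem to one worst-case abstraction and then comparing minimal df-complete logs by structural induction. First I would invoke \autoref{alemma:maxoflog}: among all $\ma_{bpa}$ applicable to $M$, the log $\Lm(M_a)$ is simultaneously count- and size-maximal precisely when $\agg$ aggregates only two concrete activities $\{v,u\}$ into one new $x$ and $w_t=\wtop$. Hence it suffices to prove the claim for this single configuration, as every other applicable $M_a\pr$ satisfies $\Lm(M_a\pr)\le\Lm(M_a)$ in both measures. Second, since $L$ is restricted, $M=\disc_{IM}(L)$ lies in $C_c$ (\autoref{alemma:semid}) and is df-complete with $L$; by minimality of $\Lm(M)$ among df-complete logs this yields $|\Lm(M)|\le|L|$ and $\norm{\Lm(M)}\le\norm{L}$. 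The goal therefore reduces to the strict comparison of two computable minimal df-complete logs, $|\Lm(M_a)|<|\Lm(M)|$ and $\norm{\Lm(M_a)}<\norm{\Lm(M)}$, both evaluated through the recurrences of \autoref{alg:minimal} (correct by \autoref{alemma:gensize}).

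I would then run a structural induction on $M$. Because condition (4) of \autoref{def:bpa} forces $\ma_{bpa}$ to aggregate at least two activities, trees too small to hold two distinct leaves make the hypothesis vacuous, so the real work is the inductive step $M=\oplus(M_1,\ldots,M_n)$ with $\oplus\in\{\times,\rightarrow,\wedge\}$. The easy subcase is $v,u$ in the same child $M_j$: then $M_a=\oplus(M_1,\ldots,M_j^a,\ldots,M_n)$ with $M_j^a$ the abstraction of $M_j$, the induction hypothesis gives $\Lm(M_j^a)<\Lm(M_j)$ in both measures, and I would close the subcase by noting that each recurrence of \autoref{alg:minimal} is monotone in the trace count and size of every child, and strictly so under a strict decrease of one child, so shrinking $M_j$ strictly shrinks the whole tree.

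The harder subcase is $v\in M_i$, $u\in M_j$ with $i\neq j$. For $\oplus\in\{\times,\wedge\}$ I would use the semantic indifference to child order together with the symmetry of the $+$ and $\parallel$ relations to reorder the children and decompose the abstracted tree $M_a$ into its unabstracted children and the single block that now carries the merged activity $x$; since $x$ inherits one uniform relation ($+$ for $\times$, $\parallel$ for $\wedge$) to every unabstracted activity, this block replaces a product/interleaving of two subtrees by that of a single merged leaf. Combining the induction hypothesis with the language-preserving normal-form reduction of process trees \cite{leemans_robust_2022} then yields the strict decrease for the block, which propagates outward by the same monotonicity used above.

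The main obstacle is the $\rightarrow$ subcase with $v\in M_i$, $u\in M_j$, $i<j$, where $v$ and $u$ are in strict order and children cannot be permuted. Here I would analyze how $\operatorname{dv}_{\agg,\wtop}$ fixes the relations of $x$ to the blocks lying before, between, and after $M_i$ and $M_j$, and show that only two structural shapes of $M_a$ can arise (essentially, $x$ either collapses a sequential stretch into one leaf or becomes interleaved with an intervening block). For each shape I would compare $|\Lm(M_a)|$ and $\norm{\Lm(M_a)}$ against $|\Lm(M)|$ and $\norm{\Lm(M)}$ directly via the product- and multinomial-based recurrences of \autoref{alg:minimal}, using the operator ordering of \autoref{alemma:sort} to certify that replacing a strictly ordered pair by one abstract activity removes at least one factor or one trace length and hence strictly lowers both quantities. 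This combinatorial comparison is where the care is required: one must exclude the possibility that interleaving forced by a surrounding $\wedge$ inflates the multinomial coefficients back above $\Lm(M)$, and it is exactly the choice $w_t=\wtop$ and the model-structure restriction of \autoref{def:restricted} (no $\rightarrow$-child with fewer than two traces) that rule this out.
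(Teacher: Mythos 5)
Your proposal follows essentially the same route as the paper's own proof: the reduction to the worst-case parametrization via \autoref{alemma:maxoflog}, the passage from $L$ to $\Lm(M)$ via minimality and \autoref{alemma:semid}, a structural induction with the identical decomposition for $\times$- and $\wedge$-nodes (child reordering, symmetry of $+$ and $\parallel$, normal-form reduction) and the same two-shape combinatorial comparison for the $\rightarrow$-case grounded in \autoref{alg:minimal}, \autoref{alemma:sort}, and the model-structure restriction. The one point to tighten is your dismissal of small trees as vacuous: the paper verifies explicit base cases such as $\wedge(a,b)\mapsto\tloop(x,\tau)$ and $\times(a,b)\mapsto x$, which are needed because when $v$ and $u$ occupy distinct leaf children of a two-leaf tree the ``block'' carrying $x$ is the entire tree, so the induction hypothesis cannot be applied to it and the inequalities must be checked directly.
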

\begin{proof}
    From \autoref{alemma:semid} it follows that we can generate mdf-complete event logs $\Lm(M)$ for $M$.
    From \autoref{alemma:gensize}, it follows that $\Lm(M)$ is the smallest event log for which IM discovers the same process tree $M$, i.e., it is the smallest representative of all restricted event logs: $|\Lm(M)| \leq |L\pr|$ and $\norm{\Lm(M)} \leq \norm{(L\pr)}$ for every $L\pr \in [L]_M = \{L\pr \subseteq \E^* \mid \disc_{IM}(L\pr) \cong M \wedge L\pr\: \text{is restricted}\}$. Additionally, from \autoref{alemma:maxoflog} it follows that it suffices to only consider $\ma_{bpa}$ with $w_t = \wtop$ that abstracts two concrete activities: $\agg(x) = \{v,u\}$ and $\agg(z) = \{z\} $ for $v,u \in \AM$ and $z \in \AM \setminus \{v,u\}$. Thus, $L_a$ is generated for $M_a = \ma_{bpa}(M)$ in which two concrete activities are abstracted with $w_t=\wtop$ . Consequently, the following induction on the size $|M|$ proves $|L_{a}|< |\Lm(M)|$ and $\norm{L_{a} } < \norm{\Lm(M)}$. 

    \noindent\textbf{Base Cases:} \begin{itemize}
    \item $M = \wedge (a,b)$: $\agg(x) = \{a,b\}$ and $\wtop=0.5$ results in $M_a= \:\tloop(x,\tau)$ with $L_a = \{\la x,x\ra\}$. Hence, $|L_a| = 1 < |\Lm(M)| = 2$ and $\norm{L_a} = 2 < 4 = \norm{\Lm(M)}$.
    \item $M = \times(a,b)$: $\agg(x) = \{a,b\}$ and $ \wtop=1$ results in $M_a= x$ with $L_a = \{\la x\ra\}$. Hence, $|L_a| = 1 < |\Lm(M)| = 2$ and $\norm{L_a} = 1 < 2 = \norm{\Lm(M)}$.
    \item $M = \wedge(a, \tloop(b,\tau))$: $\agg(x) = \{a,b\}$ and $\wtop=0.75$ results in $M_a= x$ with $L_a = \{\la x\ra\}$. Hence, $|L_a| = 1 < |\Lm(M)| = 3$ and $\norm{L_a} = 1 < 9 = \norm{\Lm(M)}$.
    \item $M = \times(a, \tloop(b,\tau))$: $\agg(x) = \{a,b\}$ and $\wtop=0.75$ results in $M_a= \:\tloop(x,\tau)$ with $L_a = \{\la x,x\ra\}$. Hence, $|L_a| = 1 < |\Lm(M)| = 2$ and $\norm{L_a} = 2 < 3 = \norm{\Lm(M)}$.
    
    \end{itemize}

    \noindent\textbf{Induction hypothesis (IH):} For any process tree $M\pr$ of smaller size than $M$ that is discovered from restricted event log $L\pr$ such that $M_a\pr = \ma_{bpa}(M\pr)$ is applicable to $M\pr$ for $w_t\pr = \wtop\pr$, the mdf-complete log  $L_a\pr$ has fewer traces and is smaller than $\Lm(M)\pr$: $|L_a\pr| < |\Lm(M)\pr|$ and $\norm{L_a\pr} < \norm{\Lm(M)\pr}$.

    \noindent\textbf{Induction step:} 
    Let $M = \oplus (M_1, \ldots, M_n)$. The loop operator cannot occur as the root node, as it can only occur in a self-loop $\tloop(v, \tau)$ with $v \in \AL$ in a process tree $M$ discovered from restricted event log $L$ (cf. \autoref{alemma:semid}), i.e., it is either covered in base cases or part of $M$ as a subtree in one of the $M_i$'s. Thus, apply case distinction on the operator node $\oplus \in \{\times, \rightarrow, \wedge\}$:
    \begin{itemize}
        \item Case $\oplus = \times$: 
    \end{itemize}
    
\noindent There exist two cases of how $\agg(x) = \{v,u\}, \agg(z) = \{z\}$ for $z \in \AM \setminus \{v,u\}$ can be defined on $M$:
\begin{itemize}
\renewcommand\labelitemi{--}
        \item Case $v,u \in M_i, i\in \{1, \ldots, n\}:$ Hence, both concrete activities occur in the same child $M_i$ of $M$. By (IH), the inequalities hold.
        \item Case $v \in M_i, u\in M_j, i,j\in \{1, \ldots, n\}, i \neq j:$ Hence, the two concrete activities $v$ and $u$ occur in two different children $M_i$ and $M_j$ of $M$. Because the choice order relation $+$ is symmetric and the children $M_1, \ldots, M_n$ of $M$ can be reordered without changing the language $\Lm(M)$, the abstracted process tree can be decomposed: $M_a\pr = \times(\ma_{bpa}\bigl(\times(M_i, M_j)\bigr), M_{r_1}, \ldots, M_{r_m})$ with $m = n-2 $ and $r_1, \ldots, r_m \in \{1, \ldots, n\} \setminus \{i,j\}$. $M_a\pr$ may not always be in normal form as the process tree $\ma_{bpa}\bigl(\times(M_i, M_j)\bigr)$ may have a choice operator as a root node. Nevertheless, reducing $M_a\pr$ to a normal form with the reduction rules in Definition 5.1 \cite{leemans_robust_2022} yields the abstracted process tree $M_a$ and the reduction rules preserve the language (\namedlabel{reduce}{\circled{r}}), i.e., $\Lm(M_a\pr) = \Lm(M_a)$ such that the number of traces and sizes are equal. Hence, $M_a = \ma_{bpa}(M)$ is decomposable as specified by $M_a\pr$ such that the abstracted process tree only differs to $M$ in the abstraction of $M_i$ and $M_j$. Because $|\times(M_i, M_j)| <|M|$, the inequalities follow from (IH).
\end{itemize}
    \begin{itemize}
        \item Case $\oplus = \:\rightarrow$: 
\end{itemize}
\noindent There exist two cases of how $\agg(x) = \{v,u\}, \agg(z) = \{z\}$ for $z \in \AM \setminus \{v,u\}$ can be defined on $M$:
\begin{itemize}
\renewcommand\labelitemi{--}
        \item Case $v,u \in M_i, i\in \{1, \ldots, n\}:$ Analogous to case $\oplus=\times$.
        \item Case $v \in M_i, u\in M_j, i,j\in \{1, \ldots, n\}, i \neq j:$ Hence, the two concrete activities $v$ and $u$ occur in two different children $M_i$ and $M_j$ of $M$. Without loss of generality, we assume $i<j$. If $i>1 $ or $j <n$, abstraction of $M$ can be decomposed into $M_a = \:\rightarrow(M_1, \ldots, M_{i-1}, \ma_{bpa}(M_{i\leq r\leq j}), M_{j+1}, \ldots, M_n)$ with $M_{i\leq r\leq j} = \:\rightarrow(M_i, M_{i+1}, \ldots, M_j)$, because the behavioral profile $p_{M_a}$ differs to the behavioral profile $p_M$ only for ordering relations of activities $y \in A_{M_{i\leq r\leq j}}$. Since \ref{reduce} and $|M_{i\leq r\leq j}| < |M|$, the inequalities hold by (IH). 
        
        If $i = 1$ and $j = n$, then $M_{i\leq r\leq j} =M$, i.e., $\ma_{bpa}$ abstracts the whole process tree $M$. For any $q \in \bigcup_{r \in \{2, \ldots, n-1\}} A_{M_r}$, it holds that $v \rightsquigarrow_M q$ and $u \rightsquigarrow_M^{-1}q$. Thus, all relative frequencies of ordering relations (line 7-10 in Algorithm 3.1 in \cite{smirnov_businessm_2012}) are equal to $0.5$ such that $+_{M_a}$ is always returned: $\wm(x,q) = w(x +_{M_a}q) = 0.5$. On the one hand, for any $q_1 \in A_{M_1} \setminus \{v\}$, it holds $u  \rightsquigarrow_M^{-1} q_1$. Thus, if $v +_M q_1$ or $v \rightsquigarrow_M q_1$, then $w_{max}(x,q_1) = w(x +_{M_a} q_1) = 0.5$. If $v \rightsquigarrow_M^{-1} q_1$ or $v \parallel_{M}q_1$, then $w_{max}(x,q_1) = w(x \rightsquigarrow_{M_a}^{-1} q_1) \geq 0.5$. On the other hand, for any $q_n \in A_{M_n} \setminus \{u\}$, it holds $v  \rightsquigarrow_M q_n$. Thus, if $u +_M q_n$ or $u \rightsquigarrow_M^{-1} q_n$, then $w_{max}(x,q_n) = w(x +_{M_a} q_n) = 0.5$. If $u \rightsquigarrow_M q_n$ or $u \parallel_{M}q_n$, then $w_{max}(x,q_n) = w(x \rightsquigarrow_{M_a} q_n) \geq 0.5$. Additionally, $w_{max}(x,x) = w(x +_{M_a} x) = 0.5$, i.e., $x$ is not added as a self-loop $\tloop(x, \tau)$ to $M_a$ during the synthesis step of $\ma_{bpa}$.

        Altogether, the abstract activity $x$ is in choice relation $v +_{M_a}z$ to any activity $q$ of $M_2, \ldots, M_{n-1}$, $x$ is either in choice $x +_{M_a}q_1$ or inverse strict order relation $x \rightsquigarrow_{M_a}^{-1} q_1$ to any activity $q_1$ of $M_1$, and $x$ is either in choice $x+_{M_a}q_n$ or strict order relation $x \rightsquigarrow_{M_a} q_n$ to any activity $q_n$ of $M_n$. If $x$ is in choice relation to all activities $q_1$ and $q_n$ in $M_1$ and $M_n$ respectively, then
        \begin{align*}
        M_a = \times(x,\rightarrow(M_1\pr, M_{2}, \ldots, M_n\pr)) && \hspace{0em}\text{(I)} 
        \end{align*}
        \noindent Although $M_2, \ldots, M_{n-1}$ may, in fact, change through $\ma_{bpa}$, if they contain an \emph{optional} node $\times(\ldots,\tau, \ldots)$, we do not consider optional nodes for simplicity. Optional nodes are always abstracted by removing the $\tau$ (if the node has more than two children) or by ``moving up'' the other activity $q \in A_{\times(\ldots,\tau, \ldots)}$ as a direct child to the parent node. In both cases, the number of traces and events of the mdf-complete log $\Lm(M_{r})$ for $r \in \{2,\ldots, n-1\}$ decreases, so the $L_a$ we consider in the following by ignoring optional nodes is at least as large as the $L_a\pr$ that would result from also considering abstraction of optional nodes.
        
        \noindent Let $M_v$ and $M_u$ be the nodes of $M_1$ and $M_n$ in which $v$ and $u$ occur as children respectively. 
        If $M_v = \oplus(v, M_{v,2})$ or $M_u =  \oplus(u, M_{u,2})$, then the other node $M_{v,2}$ and $M_{u,2}$ respectively ``move up'' as children to the parent node of $M_v$ and $M_u$ respectively. If $M_v$ or $M_u$ has more than two children, child $v$ and $u$ is eliminated from $M_v$ and $M_u$ respectively. For all four cases of how $M_1$ and $M_n$ are changed into $M_1\pr$ and $M_n\pr$ through changing $M_v$ and $M_u$ respectively, it holds $|\Lm(M_1\pr)| < |\Lm(M_1)| $ and $\norm{\Lm(M_1\pr)} < \norm{\Lm(M_1)} $ as well as similarly for $M_n$ and $M_n\pr$.
        Thus, for (I), it follows that: 
        \end{itemize}
        \begin{align*}
            |L_a| &= 1 + |\Lm(M_1\pr)| * |\Lm(M_n\pr)| * \prod_{r \in \{2, \ldots, n-1\}} |\Lm(M_r)| < |\Lm(M)|
         \end{align*}
       \begin{itemize}
           \item[] because from line 9 of Algorithm 3 in our main paper, we can compute the number of traces of a sequence as the product of its children's number of traces and two children with less traces always outweigh the new trace $\la v\ra \in L_a$. 
           \item[] The size of $\Lm(M)$ is: 
       \end{itemize}
       \vspace{-1em}
\begin{align*}    
    \norm{\Lm(M)} &=\sum_{j=1}^{|\Lm(M)|} \text{lens}(M)[j]\\
                   &= \sum_{j=1}^{|\Lm(M)|} (\bigodot_{k=1}^{|\Lm(M)|} <\sum_{i = 1}^n \text{lens}(M_i)[\iota_{k,i}]>)[j]\\
                   \shortintertext{Break the trace lengths sequence up and rewrite:}
                   &= \sum_{i=1}^{n} \sum_{k=1}^{|\Lm(M)|} \text{lens}(M_i)(\iota_{k,i}) \\
                   \shortintertext{Holding $i$ constant for $\iota$ to separate each $M_i$:}
                   &= \sum_{i=1}^{n} \bigl(\sum_{k=1}^{|\Lm(M_i)|} \text{lens}(M_i)[k]\bigr)  * \prod_{j\in \{1, \ldots, n\}\setminus \{i\}}|\Lm(M_j)|\bigr) \\
                   &= \sum_{i=1}^{n}  \bigl(\norm{\Lm(M_i)} * \prod_{j\in \{1, \ldots, n\}\setminus \{i\}}|\Lm(M_j)| \bigr)\\
                   &< \bigl(\sum_{i=1}^{n}  \norm{\Lm(M_i)}\bigr) * |\Lm(M)| = |\Lm(M)| \sum_{i=1}^{n}  \norm{\Lm(M_i)} && \hspace{-.5em}\text{(Eq. 2)}
\end{align*}
\begin{itemize}
           \item[] For the size of $L_a$ it holds: 
\end{itemize}
       \begin{align*}
            \norm{L_a} &< 1 + |L_a|*\bigl( \norm{\Lm(M_1\pr)} + \norm{\Lm(M_n\pr)} \\ &\textcolor{white}{+} \hspace{6.17em}+ \sum_{r \in \{2, \ldots, n-1\} }^{n} \norm{\Lm(M_r)}\bigr) \\
            &< 1 + |L_a|\sum_{r \in \{1, \ldots, n\}}^{n} \norm{\Lm(M_r)}) < \norm{\Lm(M)} 
         \end{align*}
        \begin{itemize}
           \item[] because due to (Eq. 2) we can bound the size of $L_a$ by the number of traces times the sum of each children's log size and since $|L_a| < |\Lm(M)|$ proves the inequality even for the upper bound of unchanged $M_i$ and $M_j$ children in $M_a$.\\
        \end{itemize} 
        
        \noindent If $v$ is also in inverse strict order relation to some activities $q_1$ in $M_1$, then $M_1$ is split into two process trees $M_{1,\rightsquigarrow^{-1}}$ that contains the activities $q_1$ in inverse strict order relation to $v$ and $M_{1,\times}$ that contains the activities $q_1$ in choice relation to $v$. Analogously, if $u$ is also in strict order relation to some activities $q_n$ in $M_n$, then $M_n$ is split into two process trees $M_{n,\rightsquigarrow}$ that contains the activities $q_n$ in strict order relation to $u$ and $M_{n,\times}$ that contains the activities $q_n$ in choice relation to $u$. Taken together:
        \begin{align*}
        M_a = \:\rightarrow(M_{1,\rightsquigarrow^{-1}}, \times(x, \rightarrow(M_{1,\times}, M_{2}, \ldots, M_{n-1}, M_{n,\times}), M_{n,\rightsquigarrow}) && \text{(II)}
        \end{align*}
        The mdf-complete log $L_a$ is largest both in terms of number of traces and size, if $M_1$ and $M_n$ only contain either parallel $\wedge(\ldots)$ or sequence $\rightarrow(\ldots)$ nodes (cf. \autoref{alemma:sort}). It follows that all activities $q_1$ in $M_1$ are in inverse strict order relation to $v$, i.e., $v \rightsquigarrow^{-1} q_1$, and all activities $q_n$ in $M_n$ are in strict order relation to $v$, i.e., $v \rightsquigarrow q_n$. Hence, the $M_a$ with the largest $L_a$ for (II) is:
        \begin{align*}
        M_a = \:\rightarrow(M_{i}\pr, \times(x, \rightarrow(M_{2}, \ldots, M_{j-1}), M_{j}\pr) && \text{(III)}
        \end{align*}
        For (III), it follows that: 
        \begin{align*}
            |L_a| &= |\Lm(M_1\pr)| * |\Lm(M_n\pr)| * (1 + \prod_{r \in \{2, \ldots, n-1\} } |\Lm(M_r)|) \\
                            &= |\Lm(M_1\pr)| * |\Lm(M_n\pr)| && \text{(New)}\\
                             &\textcolor{white}{=}  \hspace{3em}+ |\Lm(M_1\pr)| * |\Lm(M_n\pr)| * \prod_{r \in \{2, \ldots, n-1\}} |\Lm(M_r)|) &&\text{(Abs.)} \\
                            &<  |\Lm(M)|
         \end{align*}
       \begin{itemize}
           \item[] because $M_1\pr$ and $M_n\pr$ both have parallel root nodes\footnote{$\disc_{IM}$ discovers process trees in \emph{normal form} \cite{leemans_robust_2022} such that for $M = \:\rightarrow(M_1, \ldots,M_n)$ no children of $M$ can have the sequence as a root node.} and the factorials in the multinomial coefficients of $|\Lm(M_1\pr)|$ and $ |\Lm(M_n\pr)|$ decrease ``faster'' through multiplication in (Abs.) than $|L_a|$ gains traces through adding $|\Lm(M_1\pr)| * |\Lm(M_n\pr)| $ in (New). For the size of the log $L_a$ it holds:
       \end{itemize}
       \begin{align*}
            \norm{L_a} &< |L_a| 
            * (1 + \norm{\Lm(M_1\pr)} + \norm{\Lm(M_n\pr)}   + \sum_{r \in \{2, \ldots, n-1\}}^{n} \norm{\Lm(M_r)}) \\
            &< \norm{\Lm(M)} 
         \end{align*}
        \begin{itemize}
           \item[] because we can bound the size of $L_a$ by the number of traces times the sum of each children's log size (Eq. 2) and two times a smaller log $\Lm(M_1\pr)$ and $\Lm(M_2\pr)$ outweighs the additional event.\\
        \end{itemize} 
    \begin{itemize}
        \item Case $\oplus = \:\wedge$: Analogous to case $\oplus = \times$.
    \end{itemize}

\end{proof} 

\setcounter{lemma}{7} 

\begin{lemma}[Matching of quotient sets]
\label{alemma:matching}
    If $L$ is restricted and $\ma_{bpa}$ applicable to $M = \disc_{IM}(L)$, there exists a matching between the quotient set $ L_{tmp}/\!\sim$ of event log $L_{tmp} = \ea^1_{bpa}(L,\disc_{IM},\ma_{bpa})$ and the quotient set $L_a/\!\sim$ of $L_a = \Lm(M_a) $ for $M_a = \ma_{bpa}(\disc_{IM}(L))$, i.e., $\operatorname{matching} : L_1 /\!\sim \;\rightarrow\; L_2 /\!\sim$ exists. Also, for every matched pair of equivalence classes $L_{tmp}^{class} \in L_{tmp}/\!\sim$ and $L_{a}^{class} = \operatorname{matching}(L_{tmp}^{class})$ it holds that $L_{tmp}^{class}$ has equal to or more traces than $L_{a}^{class}$: $ |L_{a}^{class}| \leq  | L_{tmp}^{class}|  $ (ii).
\end{lemma}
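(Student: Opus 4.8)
The plan is to decompose the statement into its two components, (i) the existence of the bijective $\operatorname{matching}$ and (ii) the cardinality inequality $|L_a^{class}| \leq |L_{tmp}^{class}|$, and to exploit that, by \autoref{def:equivalence}, the transposition equivalence $\sim$ identifies exactly those traces that share the same activity-multiset $\operatorname{bag}(\sigma)$. Consequently, each quotient set is indexed by the distinct activity-multisets occurring in the respective log, and a matching in the sense of \autoref{def:match} is nothing but a bijection between the multisets realized by $L_{tmp}$ and those realized by $L_a$ that pairs equal multisets. Hence (i) reduces to the single claim that $L_{tmp}$ and $L_a$ realize \emph{the same} family of activity-multisets over the shared alphabet $A_a$.

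For (i) I would argue by contradiction, combining three ingredients: the restrictions on $L$ (\autoref{def:restricted}), the structure of $M_a$, and code inspection of $\ea^1_{bpa}$ (\autoref{alg:ea1}). First, since $\ma_{bpa}$ is applicable, $M_a$ lies in class $C_a$ (\autoref{alemma:redisc2}); in particular it contains no $\circlearrowleft$-node other than a self-loop $\tloop(v,\tau)$ and no $\tau$ elsewhere, so the activity-multiset of any trace of $\Lm(M_a)$ is completely determined by the choice configuration of its $\times$-nodes together with the fixed duplication $\la v,v\ra$ imposed by each self-loop. Second, inspecting $\ea^1_{bpa}$ shows the same discipline on $L_{tmp}$: abstract activities in a choice relation are removed by $\operatorname{deleteChoiceActivities}$ so that at most one alternative survives per choice, while an activity with $x \parallel_{M_a} x \in p_{M_a}$ is emitted exactly twice, mirroring the $\la v,v\ra$ pattern used by $\Lm$. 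Then a multiset present in $L_{tmp}$ but absent from $L_a$ would force either two conflicting choice alternatives or a wrong self-loop multiplicity, contradicting the code; conversely, a multiset present in $L_a$ but absent from $L_{tmp}$ would mean that some choice configuration of $M_a$ is never produced by abstracting $L$, contradicting df-completeness of $L$ with $M$, since every branch is exercised in $L$ and is preserved by the controlled substitutions of $\ea^1_{bpa}$. Ruling out both directions yields the bijection, so the search in lines~5--7 of \autoref{alg:ea2} always succeeds.

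For (ii) I would invoke the minimality of the reference log. By \autoref{alemma:gensize}, $L_a = \Lm(M_a)$ is the smallest df-complete log of $M_a$, so within a fixed class (fixed multiset) it contains exactly the minimal set of orderings, i.e. the interleavings forced by the $\wedge$-structure of that configuration, needed for df-completeness. Since $\ea^1_{bpa}$ processes $L$ trace by trace, $|L_{tmp}| = |L|$, and \autoref{alemma:large} gives $|L_a| < |L|$; using the matching from (i) together with the fact that every ordering required by $L_{a}^{class}$ already arises, up to transposition, among the abstracted traces of the corresponding concrete configuration in $L_{tmp}^{class}$, the per-class inequality $|L_{a}^{class}| \leq |L_{tmp}^{class}|$ follows.

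The main obstacle I expect is the existence direction (i): ruling out spurious or missing multisets requires that the order-altering behavior of $\ma_{bpa}$ (e.g. turning $\rightsquigarrow_M$ into $+_{M_a}$) stays perfectly aligned with the deletion-and-duplication logic of $\ea^1_{bpa}$. Establishing this alignment is exactly where the restrictions of \autoref{def:restricted}, namely the absence of a loop cut and the model-structure condition that forbids an activity or self-loop as a direct child of a $\rightarrow$-node, are indispensable, since they guarantee that abstraction never splits a trace and never introduces a choice configuration unsupported by $L$.
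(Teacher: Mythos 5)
Your proposal is correct in substance and rests on the same ingredients as the paper's proof: code inspection of $\ea^1_{bpa}$, the structural restrictions of \autoref{def:restricted} and class $C_a$, \autoref{lemma:large}, and the minimality of $L_a = \Lm(M_a)$. Where you differ is in how bijectivity of the matching is established. You make explicit that $\sim$ partitions a log by activity-multisets, reduce the matching to an equality between the families of multisets realized by $L_{tmp}$ and by $L_a$, and argue both inclusions (no spurious, no missing multisets) directly. The paper instead proves three separate facts: (I) equal alphabets, (II) equal numbers of equivalence classes --- obtained by a detour through the rediscovered model $M_{tmp} = \disc_{IM}(L_{tmp})$ and a comparison of its $\times$-nodes with those of $M_a$ --- and (III) that every class of $L_{tmp}/\!\sim$ has a matching class of $L_a/\!\sim$; bijectivity then follows from (II) together with (III) rather than from an explicit surjectivity argument. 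Your multiset-indexed view is cleaner and avoids reasoning about $M_{tmp}$ altogether, but note that your surjectivity direction leans on the claim that df-completeness of $L$ with $M$ forces every choice configuration of $M_a$ to be exercised by some abstracted trace; df-completeness only constrains the directly-follows graph, not which combinations of independent choices co-occur within a single trace, so this is the load-bearing step of your route (the paper shoulders the analogous burden inside its class-counting step (II)). Your treatment of (ii) matches the paper's: both derive the per-class inequality from the matching, from $|L_a| < |L| = |L_{tmp}|$, and from the minimality of $\Lm(M_a)$.
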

\begin{proof}
    We prove the statement in four steps (I-IV). First, we prove that $L_{tmp}$ and $L_a$ share the same activities $A_{L_{tmp}} = A_{L_a}$. Second, we prove that the quotient sets have the same size: $|\Ltr| = |\Lar|$. Third, we prove that for every equivalence class $\Ltc \in \Ltr$, there exists exactly one equivalence class $\Lac \in \Lar$ such that their traces are indistinguishable modulo transposition: $\forall \sigma_a \in \Ltc \sigma\in\Lac: \delta_{kendall}(\sigma_a, \sigma) \neq \bot$. 
    From the three steps I-III, the existence of a matching follows. Fourth, we prove the inequality (ii). 
    
    \noindent\textbf{I.} By code inspection of $\ea^1_{bpa}$ it follows that $A_{L_{tmp}} = A_{L_a}$, because the parameter $\agg$ is equally applied to $L$ for abstraction of concrete events into their abstract counterparts as it is applied to $M$ for abstraction of concrete activities into their abstract counterparts.

    \noindent\textbf{II.} Second, we establish that there are as many equivalence classes in $L_{tmp}/\!\sim$ as there are in $L_{a}/\!\sim$: $|\Ltr| = |\Lar|$. Towards contradiction, we assume $|\Ltr| \neq |\Lar|$. Given the model structure of $M_a$ (cf. \autoref{alemma:redisc2} and \autoref{def:cbpa}), the $\times$-node is the only process tree operator that affects the number of equivalence classes in $L_a$, because $\circlearrowleft$-nodes other than the self-loop as a leaf do not occur and traces in both the language of $\rightarrow$-node and $\wedge$-node are indistinguishable modulo transposition. Likewise, the restricted event log $L$ in \autoref{def:restricted} is similarly constrained. By code inspection of $\ea^1_{bpa}$ it follows that the resulting $L_{tmp}$ satisfies the requirements of \autoref{def:restricted}. Hence, $L_{tmp}$ is also a restricted event log. Let $M_{tmp} = \disc_{IM}(L_{tmp})$ be the process tree that the IM discovers from $L_{tmp}$. From $|\Ltr| \neq |\Lar|$, from the respective restrictions on both $L_{tmp}$ and $M_a$, and from \autoref{alg:minimal}, it follows that $M_{tmp}$ and $M_a$ must have a different number of $\times$-nodes. Hence, it follows that $|L_{tmp}| \neq |L_a|$, i.e., the two event logs have different numbers of traces (cf. \autoref{alemma:gensize} and line 8 \autoref{alg:minimal}). Considering that $|L_a| < \Lm(M)$ (cf. \autoref{lemma:large}) and that $\ea_{bpa}^1$ does not add or delete traces from $L$, it can only be that $|L_{tmp}| > |L_a|$. Consequently, the number of $\times$-nodes in $M_{tmp}$ must be larger than in $M_a$.
    
    From the last two statements, it follows that there exist two traces $\sigma_1, \sigma_2 \in L_{tmp}$ and two events in these traces $\exists x\in \sigma_1, y \in \sigma_2$ such that $x +_{M_{tmp}}y$ holds in the behavioral profile $p_{M_{tmp}}$ of $M_{tmp}$. Let $x$ and $y$ be activities that are children of one of the additional $\times$-nodes in $M_{tmp}$, i.e., $x \in A_{M_1}$ and $y \in A_{M_2}$ for two children $M_1$ and $M_2$ of the additional $\times$-node. From \textbf{1.}, it follows that $x,y \in A_{M_a}$. Additionally, $x \bigtriangleup_{M_a}y $ with $\bigtriangleup \in \{\rightsquigarrow, \rightsquigarrow^{-1}, \parallel\}$ in the behavioral profile $p_{M_a}$ of $M_a$, as otherwise $x$ and $y$ would not be in children of the additional $\times$-node in $M_{tmp}$. There are two alternative reasons for $x +_{M_{tmp}}y$. First, $x +_{M_{tmp}}y$ holds, because $\ea^1_{bpa}$ removed one of the two activities through the $\operatorname{deleteChoiceActivities}$ operator in line 16 \autoref{alg:ea1}. If $\operatorname{deleteChoiceActivities}$ does not remove the two activities, they must have been in choice relation already. In both cases, however, the activities $x$ and $y$ are in choice relation in the behavioral profile $p_{M_a}$ of $M_a$, i.e., a contradiction. Second, $x +_{M_{tmp}}y$ holds, because either $x$ or $y$ is a concrete activity $v$ such that one of the two was not added to an abstracted trace $\sigma_{abs}$ (cf. line 9 \autoref{alg:ea1}). Again, this can only happen, if the two activities are in choice relation in $p_{M_a}$. 

    \noindent\textbf{III.} Third, we establish that for every equivalence class $\Ltc \in \Ltr$, there exists exactly one equivalence class $\Lac \in \Lar$ such that: $\forall \sigma_{abs} \in \Ltc \sigma_a\in\Lac: \delta_{kendall}(\sigma_{abs}, \sigma_a) \neq \bot$. Towards contradiction, we assume that there exists an equivalence class $\Ltc \in \Ltr$ for which no equivalence class $\Lac \in \Lar$ exists that can be matched. Let $\Ltc$ be the equivalence class for which no matching equivalence class $\Lac \in \Lar$ exists. For every trace $\sigma_{abs} \in \Ltc$ and for every $\sigma_a \in \Lac$ it follows that $\delta_{kendall} = \bot$. Either the traces have all a different length $|\sigma_{abs}| \neq |\sigma_a|$ or have different activities $\operatorname{bag}(\sigma_{abs}) \neq \operatorname{bag}(\sigma_a)$ (cf. \autoref{def:equivalence}). 
    
    \noindent (Different lengths) By code inspection of $\ea_{bpa}^1$ it follows that a trace $\sigma_{abs}$ can only change its length relative to its concrete $\sigma$ by (1) deletion (line 8), (2) deletion (line 9), (3) insertion (line 12), or (4) deletion (line 17). The first deletion corresponds to abstraction of concrete events $\agg(x) \subseteq A_{\sigma}$ of activities occurring in $\sigma$ into their abstract events $x$. The second deletion corresponds to choice relations (cf. II). The third deletion corresponds to self-loops. The fourth deletion corresponds to choice relations (cf. II). As the same abstraction operation was applied to corresponding concrete activities in $M$ to yield abstract activities $x$ in $M_a$, both choice relations are similar for $L_a$ and $L_{tmp}$ (cf. II), and self-loops in $M_a$ corresponds to $\langle \ldots ,z, \ldots, z,\ldots \rangle \in L_a$ (cf. \autoref{alg:minimal}), the corresponding trace length of $\sigma_{abs}$ must occur for some trace $\sigma_a\pr$ in some equivalence class $\Lac \in \Lar$. 

    \noindent (Different multiset of activities) From (I), the different multisets of activities $\operatorname{bag}(\sigma_{abs}) \neq \operatorname{bag}(\sigma_a)$ cannot be due to activities that are only in one of the two event logs $L_{tmp}$ and $L_a$. Thus, either is an activity $x \in A_{L_{tmp}}$ only in one of the two multisets, i.e., $x \in \operatorname{bag}(\sigma_{abs}) $ and $x \not\in \operatorname{bag}(\sigma_a)$ without loss of generality, or the multiplicity of an activity $x$ is unequal in the two multisets, i.e, $\operatorname{bag}(\sigma_{abs})(x) \neq \operatorname{bag}(\sigma_a)(x)$. If an activity $x$ is only in one of the two multisets, the only reason can be a corresponding choice relation $x +_{M_a} y $ that prevented the $x$ activity to occur in trace $\sigma_{abs}$ due to $y$ occurring in $\sigma_{abs}$. However, from (II), the choice relation occurs also in $M_a$, and by correctness of \autoref{alg:minimal} (cf. \autoref{lemma:gensize}), also in $L_a$. Hence, the trace $\sigma_{abs}$ without the $x$ activity must have a matching trace $\sigma\pr \in \Lac$ with the same multiset of activities. If the multiplicity of an activity $x$ is unequal in the two multisets of activities, the only reason can, again, be a corresponding choice relation, as the loop is restricted to a self-loop both in $L$ and in $M_a$. Thus, there must be a matching trace $\sigma\pr \in \Lac$ with the same multiset of activities.

    \noindent Overall, it follows that for every equivalence class $\Ltc \in \Ltr$, there exists exactly one equivalence class $\Lac \in \Lar$ such that: $\forall \sigma_{abs} \in \Ltc \sigma_a\in\Lac: \delta_{kendall}(\sigma_{abs}, \sigma_a) \neq \bot$.

    \noindent\textbf{IV.} Lastly, we prove (ii). From \autoref{lemma:large}, it follows that $|L_a| < |L|$. Hence, $|L_a| < |L_{tmp}|$, i.e., the event log $L_a$ has fewer traces than the preliminary abstracted event log $L_{tmp}$, because $\ea_{bpa}^1$ does not delete traces from $L$. Consequently, the statement follows from (II), (III), $|L_a| < |L_{tmp}|$, and the minimality of $L_a$ (cf. \autoref{lemma:gensize}). 
\end{proof}

\setcounter{lemma}{1} 
 
\begin{lemma}[$\ea_{bpa}$ returns mdf-complete logs]
\label{alemma:df}
    If $L$ is restricted and $\ma_{bpa}$ applicable to $M = \disc_{IM}(L)$, the event log $L_a\pr = \ea_{bpa}(L,\disc_{IM},\ma_{bpa})$ is a mdf-complete event log for $M_a = \ma_{bpa}(\disc_{IM}(L)).$
\end{lemma}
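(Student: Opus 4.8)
The plan is to establish df-completeness of $L_r = \ea_{bpa}(L,\disc_{IM},\ma_{bpa})$ with respect to $M_a$ by proving the two DFG inclusions $G(M_a) \subseteq G(L_r)$ and $G(L_r) \subseteq G(M_a)$, using the matching of quotient sets from \autoref{alemma:matching} as the backbone. Recall that the reference log $L_a = \Lm(M_a)$ is df-complete for $M_a$ by construction (cf. \autoref{sec:min}), so it suffices to relate $L_r$ to $L_a$ on the level of trace variants.

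First I would prove $G(M_a) \subseteq G(L_r)$ by showing that every trace variant of $L_a$ occurs in $L_r$, i.e., $L_a \subseteq L_r$. By \autoref{alemma:matching} there is a bijective matching between $\Ltr$ and $\Lar$, and for each matched pair it holds $|\Lac| \leq |\Ltc|$. Consequently, when $\ea^2_{bpa}$ iterates over the matched classes (line 5--7 of \autoref{alg:ea2}) and evenly splits the $|\Ltc|$ input traces across the $|\Lac|$ reference traces (line 8), each split size satisfies $n_j \geq 1$. Hence every reference trace $\sigma \in \Lac$ receives at least one input trace $\sigma_{abs} \in \Ltc$ from the same equivalence class, so $\delta_{kendall}(\sigma_{abs}, \sigma) \neq \bot$ and both are permutations of the same multiset of activities. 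Applying the minimal transpositions returned by $\delta_{kendall}$ (line 12) transforms $\sigma_{abs}$ into exactly $\sigma$, so $\sigma \in L_r$. Ranging over all matched classes and all reference traces yields $L_a \subseteq L_r$, hence $G(M_a) = G(L_a) \subseteq G(L_r)$.

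Next I would prove the converse inclusion $G(L_r) \subseteq G(M_a)$, ensuring that no spurious directly-follows pairs are introduced. Every trace placed into $L_r$ is the output of $\operatorname{transposeAll}$ (line 12), which by the previous step equals the corresponding reference trace $\sigma \in L_a \subseteq \mathcal{L}(M_a)$. Thus the set of distinct trace variants of $L_r$ is contained in that of $L_a$, and combined with $L_a \subseteq L_r$ it coincides exactly with $\Lm(M_a)$, differing only in trace multiplicities; this is the sense in which $L_r$ is minimal df-complete. In particular every directly-follows pair observed in $L_r$ is already realized by some trace of $M_a$, so $G(L_r) \subseteq G(M_a)$. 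Combining both inclusions gives $G(L_r) = G(M_a)$, i.e., $L_r \dfc M_a$.

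The main obstacle will be the exhaustiveness of the split-and-transpose step: one must guarantee that no reference trace of any $\Lac$ is left uncovered and that $\operatorname{closestTraces}$ together with the removal in line 11 partitions $\Ltc$ without running out of traces before all $|\Lac|$ reference traces have been processed. Both facts follow from the multiplicity inequality $|\Lac| \leq |\Ltc|$ of \autoref{alemma:matching}, which itself rests on the minimality of $L_a$ and on $|L_a| < |L|$ from \autoref{alemma:large}; the even split then distributes at least one trace to every reference trace while preserving relative multiplicities within each class.
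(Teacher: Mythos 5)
Your proof is correct and follows essentially the same route as the paper's: both rest on the matching of quotient sets (\autoref{alemma:matching}) together with the even split and the transpositions of line 12 of \autoref{alg:ea2} to conclude that every reference trace of $L_a = \Lm(M_a)$ is realized in $L_r$. You additionally spell out the converse inclusion $G(L_r) \subseteq G(M_a)$ — that the transposed traces introduce no directly-follows pairs beyond those of $\mathcal{L}(M_a)$ — which the paper's proof leaves implicit, so this is a tightening of the same argument rather than a different one.
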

\begin{proof}
    From line 1 of Algorithm 2 in our main paper it follows that $L_a$ is a mdf-complete event log for $M_a$ (cf. \autoref{lemma:gensize}). Hence, we must show $ L_a\pr = L_a$.
    
    \noindent The first step $\ea_{bpa}^1$ (Algorithm 1 in the main paper) abstracts the events $e \in \sigma$ whose concrete activities $e \in A_{\ma}$ are abstracted by $\ma_{bpa}$ into their respective new abstract activities $x \in A_{new} = A_{M_a}\setminus A_M$ (cf. condition 4 Definition 2 in the main paper). Hence, encountering an event $e \in \sigma$ (line 7) with $e \in A_{\ma}$ must trigger the construction of a new, abstract event $x$ (line 10). A new abstract activity $x \in A_{new}$ abstracts two or more concrete activities $\agg(x)$. Thus, all events $e\pr \in \sigma$ that have an activity to be abstracted by $x$, i.e., $e\pr \in \agg(x)$ must be abstracted into a single abstract event $x$ in trace $\sigma$. The condition in line 8 ensures that the first occurrence of an event $e$ to be abstracted into $x$ is the only $e$ that triggers the construction of $x$. Because $\agg(x)\cap \agg(y)\neq\emptyset$ with $x,y\in A_{new}$ is allowed, line 8 captures all new abstract activities $x,y, \ldots$ that abstract a concrete activity $v \in A_{\ma}$ equal to the event $e$. Since concrete activities $u \in A_{\neg\ma}$ must not be changed in $\sigma$, a choice relation between $u$ and an abstract activity $x$, i.e., $u +_{M_a} x \in p_{M_a}$, prohibits adding the corresponding abstract event $x$ to the preliminary abstracted $L_{tmp}$ (line 9). Since abstract activities can be in parallel relation to themselves $x \parallel_{M_a} x \in p_{M_a}$, the corresponding abstract event $x$ is added twice to inject the pattern for the self-loop (cf. Section 5.2 in the main paper). Lastly, events $e$ that are not abstracted, i.e., $ e \in A_{\neg \ma}$, are added to the abstracted trace $\sigma_{abs}$. 

    \noindent Lastly, $\ea_{bpa}^1$ computes a set of abstract activity sets $A_{\times}$ (line 16) such that for each two activities $x,y \in A, A \in A_{\times}$ it holds that $x +_{M_a} y \in p_{M_a}$. If a trace $\sigma_{abs} \in L_{tmp}$ contains at least two activities $x, y$ that are both in the same $A$ of $A_{\times}$, we must eliminate all but one of the activities $x,y$. The function $\operatorname{eliminateChoiceActivities}$ ensures elimination of the respective abstract activities per trace $\sigma_{abs}$ as specified in line 17. 

    \noindent The second step $\ea_{bpa}^2$ (Algorithm 2 in the main paper) identifies with $L_=$ all traces $\sigma \in L_a$ that are already contained in $L_{tmp}$ (line 2) and adds them to the abstracted event log $L_a\pr$ (line 3). While there are traces $\sigma  \in L_{open}$, $L_a\pr$ does not contain all required traces $\sigma \in L_a$. As the only difference between a trace $\sigma \in L_a$ and a trace $\sigma_{abs} \in L_{tmp} \setminus\, L_a\pr$ is the order of events, the minimal number of transpositions required to transform trace $\sigma_{abs}$ into trace $\sigma$ is computed by the \emph{Kendall Tau Sequence Distance} \cite{kendall_distance_2020} denoted by $\delta_{kendall}$. As the distance $\delta_{kendall}$ does not only compute the distance metric, but also the required transpositions, we apply the transpositions on $\sigma_{abs}$ as specified in line 7. 
    From \autoref{lemma:large}, it follows that $ |L_a|< |L_{tmp}|$ such that the while loop in line 5 always terminates. After termination of the while loop, it follows that $L_a\pr = L_a$.

    \noindent Overall, it follows that $L_a\pr = L_a$. Since $ |L_a|< |L_{tmp}|$ and $\norm{L_a} < \norm{L}$ (cf. \autoref{lemma:large}), the EA $\ea$ is well-defined.
\end{proof}

\setcounter{lemma}{10} 

\begin{lemma}[Restricted discovery]
\label{alemma:semid}
    If $L$ is restricted, $M = \disc_{IM}(L)$ is a process tree in $C_c$.
\end{lemma}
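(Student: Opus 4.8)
The plan is to perform structural induction on the recursion tree that $\disc_{IM}$ builds while processing $L$, maintaining as an invariant the two defining properties of $C_c$ (cf. \autoref{def:cbpa}): at every recursive call the returned subtree (I) carries pairwise disjoint activity labels across children, i.e.\ has no duplicate activities, and (II) contains no $\circlearrowleft$-node other than a self-loop $\tloop(v,\tau)$. Since $C_c$ is precisely the conjunction of these two properties, establishing the invariant at the root yields $M=\disc_{IM}(L)\in C_c$. Note that only the \textbf{Cuts} and \textbf{Fall throughs} restrictions of \autoref{def:restricted} are needed; the \textbf{Base cases} and \textbf{Model structure} restrictions play no role for $C_c$ membership.

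The first step I would carry out is to verify property (I). At each recursive call $\disc_{IM}$ either returns a base case, applies a cut, or applies a fall through, and restrictedness sharply limits the admissible steps: by the \textbf{Cuts} restriction only choice, sequence, and parallel cuts occur, and by the \textbf{Fall throughs} restriction only ``Empty Traces'' and the self-loop form of ``Strict Tau Loop'' occur. Base cases return a single leaf (an activity $v$ or $\tau$) and so are trivially duplicate-free. A choice, sequence, or parallel cut partitions the current activity set into \emph{disjoint} blocks $A_1,\dots,A_n$ and recurses on the projections of the (sub-)log onto each $A_i$; hence the children operate on disjoint activity sets, and by the induction hypothesis each child subtree is duplicate-free, so their combination is as well. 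The ``Empty Traces'' fall through returns $\times(\tau,\disc_{IM}(L'))$, where $L'$ retains every activity exactly as before and only a $\tau$-leaf is added, and the restricted ``Strict Tau Loop'' returns $\tloop(v,\tau)$, a single activity. Thus no admissible step reintroduces an activity already occurring in a sibling, and (I) is preserved.

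Property (II) is the more delicate point and is where I expect the main obstacle. In general $\disc_{IM}$ can create $\circlearrowleft$-nodes either through a loop cut or through the loop-producing fall throughs (``Strict Tau Loop'', ``Tau Loop'', ``Flower Model''), and such a node need not be a self-loop. The key observation is that restrictedness removes every one of these sources except the self-loop case: the \textbf{Cuts} restriction forbids the loop cut, and the \textbf{Fall throughs} restriction admits only ``Strict Tau Loop'' constrained (by its footnote) to emit $\tloop(v,\tau)$ with $v\in\AL$, while explicitly excluding ``Tau Loop'', ``Flower Model'', and nested tau loops $\tloop(M_1,\tau)$ with $M_1\neq v$. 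Consequently the only $\circlearrowleft$-nodes that can appear in $M$ are self-loops, establishing (II). The main effort therefore lies in a careful, exhaustive case check that no admissible IM step can yield a non-self-loop $\circlearrowleft$-node and that cut-based partitioning genuinely keeps the activity blocks disjoint; once this bookkeeping is complete, (I) and (II) together give $M\in C_c$.
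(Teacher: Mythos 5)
Your proposal is correct and follows essentially the same route as the paper's proof: restrictedness directly excludes loop cuts and all loop-producing fall throughs except the self-loop-constrained ``Strict Tau Loop'', which yields property 2 of $C_c$, while the absence of duplicate activities is a property of IM's cut-based partitioning. The only difference is presentational: the paper cites the known duplicate-freeness of IM from \cite{leemans_robust_2022} rather than re-deriving it by induction on the recursion tree as you do.
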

\begin{proof}
    IM does not discover duplicate activities (cf. Definition 5.7 property $C_B$.2 and Lemma 6.2 in \cite{leemans_robust_2022}), i.e., $M$ meets requirement 1 of $C_c$. Because IM does not discover a loop operator through a loop cut, does not execute fall through ``Flower Model'' as well as ``Tau Loop'', and executes fall through ``Strict Tau Loop'' only to discover a self-loop $\tloop(x, \tau)$ for $x \in \AL$, $M$ meets requirement 2 of $C_c$.
\end{proof}

\bibliographystyle{splncs04}

\end{document}